\documentclass[11pt]{article}

\usepackage[margin=1in]{geometry}
\usepackage[T1]{fontenc}
\usepackage{lmodern}
\usepackage{amsmath,amssymb,amsthm,mathtools}
\usepackage{enumitem,booktabs,microtype}
\usepackage{xcolor}
\usepackage{algorithm}
\usepackage[noend]{algpseudocode}
\usepackage[round]{natbib}
\usepackage{hyperref}
\hypersetup{colorlinks=true,citecolor=blue!70!black,linkcolor=black,urlcolor=blue!70!black}
\usepackage{aliascnt}
\usepackage[capitalise,nameinlink]{cleveref}

\newtheorem{theorem}{Theorem}[section]
\newaliascnt{lemma}{theorem}
\newtheorem{lemma}[lemma]{Lemma}
\aliascntresetthe{lemma}
\newaliascnt{proposition}{theorem}
\newtheorem{proposition}[proposition]{Proposition}
\aliascntresetthe{proposition}
\newaliascnt{corollary}{theorem}
\newtheorem{corollary}[corollary]{Corollary}
\aliascntresetthe{corollary}
\theoremstyle{definition}
\newaliascnt{definition}{theorem}
\newtheorem{definition}[definition]{Definition}
\aliascntresetthe{definition}
\theoremstyle{remark}
\newaliascnt{remark}{theorem}
\newtheorem{remark}[remark]{Remark}
\aliascntresetthe{remark}

\crefname{theorem}{Theorem}{Theorems}
\crefname{lemma}{Lemma}{Lemmas}
\crefname{proposition}{Proposition}{Propositions}
\crefname{corollary}{Corollary}{Corollaries}
\crefname{definition}{Definition}{Definitions}
\crefname{remark}{Remark}{Remarks}
\crefname{algorithm}{Algorithm}{Algorithms}
\crefname{equation}{}{}
\Crefname{equation}{Equation}{Equations}
\crefname{section}{Section}{Sections}
\crefname{subsection}{Section}{Sections}
\crefname{subsubsection}{Section}{Sections}
\crefname{appendix}{Appendix}{Appendices}

\newenvironment{restatement}[1]%
  {\par\medskip\noindent\textbf{#1, restated.}\ \itshape}%
  {\par\medskip}

\newcommand{\cX}{\mathcal{X}}
\newcommand{\cH}{\mathcal{H}}
\newcommand{\cG}{\mathcal{G}}
\newcommand{\cD}{\mathcal{D}}
\newcommand{\cF}{\mathcal{F}}
\newcommand{\cA}{\mathcal{A}}
\newcommand{\cC}{\mathcal{C}}
\newcommand{\cP}{\mathcal{P}}
\newcommand{\cT}{\mathcal{T}}
\newcommand{\cU}{\mathcal{U}}
\newcommand{\Err}{\mathrm{Err}}
\newcommand{\E}{\mathbb{E}}
\newcommand{\R}{\mathbb{R}}
\newcommand{\VC}{\mathrm{VC}}
\newcommand{\wtO}{\widetilde{O}}
\newcommand{\Reg}{\mathrm{Reg}}
\newcommand{\bF}{\overline{F}}
\newcommand{\Fp}{\cF_{\cG,\cH}}
\newcommand{\Qbar}{\overline{Q}}
\newcommand{\phat}{\widehat{p}}
\newcommand{\qhat}{\widehat{q}}
\newcommand{\Yhat}{\widehat{Y}}
\newcommand{\Dhat}{\widehat{\Delta}}
\newcommand{\ncf}{n_{\mathrm{conf}}}
\newcommand{\npt}{n_{\mathrm{part}}}
\newcommand{\Prest}{\cP_{\mathrm{rest}}}

\newcommand{\BC}{\mathrm{BC}}
\newcommand{\TV}{\mathrm{TV}}
\newcommand{\gs}{\mathsf{G}}
\DeclareMathOperator{\sgn}{sgn}

\title{\bf Sample-Optimal Multi-Group Learning with an ERM Oracle}
\author{Princewill Okoroafor\\Harvard University}
\date{}

\begin{document}
\maketitle

\begin{abstract}
In multi-group learning, a single predictor must perform nearly as well as
the best hypothesis in a class on every group in a rich, possibly infinite
and overlapping, collection of subpopulations.  The known algorithms with the
optimal sample complexity keep track of every pair of a group and a
hypothesis, which is impossible for the large classes of interest, and the
known oracle-based algorithms have a worse rate.  We
ask whether the optimal rate can be attained by a learner that accesses the
classes only through an empirical risk minimization (ERM) oracle.  We give
an algorithm that uses $\wtO((d_G+d_H)/\varepsilon^2)$ samples and
polynomially many calls to a weighted ERM oracle over pairs of a group and a
hypothesis, where $d_G$ and $d_H$ are the VC dimensions of the group and
hypothesis classes and $\varepsilon$ bounds the excess risk on every group,
weighted by its probability.  We also show a matching lower bound of
$\Omega((d_G+d_H)/\varepsilon^2)$ samples for every learner even with
unlimited computation, so oracle efficiency costs no additional samples.  We also consider a natural special case in which every input
belongs to at most $k$ groups and these groups can be listed.  In this case,
an ERM oracle for the hypothesis class alone suffices, with
$\wtO((d_H+\log k)/\varepsilon^2)$ samples in the agnostic setting and
$\wtO((d_H+\log k)/\varepsilon)$ samples when every group has a hypothesis
that is correct on it.

\end{abstract}

\section{Introduction}
\label{sec:intro}

Consider a model that predicts whether a patient will have an adverse
outcome.  A guarantee on its average error over all patients says little
about older patients, patients from a particular clinic, or patients at the
intersection of several such attributes.  On one of these groups, a
different model from the same class might be far more accurate.  Training a
separate model for each group fails as soon as the groups overlap or are too
numerous to list, which is the typical situation when the groups are
described by, say, all small decision trees over the patient features.

Multi-group learning \citep{rothblum2021multigroup,tosh2022simple} asks for one
predictor that competes, on every group simultaneously, with the best model
for that group.  Write $\cG$ for a class of groups, each given by its
indicator $g:\cX\to\{0,1\}$, and $\cH$ for a class of binary classifiers
$h:\cX\to\{-1,+1\}$.  For a loss $\ell$ and a distribution over inputs $X$
and labels $Y\in\{-1,+1\}$, we ask for a predictor $Q$ such that
\begin{equation}
  \label{eq:intro-goal}
  \sup_{g\in\cG,\,h\in\cH}\
  \E\Bigl[g(X)\bigl(\ell(Q(X),Y)-\ell(h(X),Y)\bigr)\Bigr]
  \ \le\ \varepsilon .
\end{equation}
Each term compares $Q$ with a comparator $h$ on the inputs in group $g$,
weighted by the probability of the group; different groups may have
different best comparators, and for this reason $Q$ is allowed to lie
outside $\cH$.  A bound of $\varepsilon$ in~\eqref{eq:intro-goal} gives
excess risk at most $\varepsilon/\gamma$ conditionally on every group of
probability at least $\gamma$.

Three properties matter for an algorithm that learns such a predictor.  The
first is \emph{data}: the number of labeled samples should be as small as
possible.  The second is \emph{computation}: the classes $\cG$ and $\cH$
are usually exponentially large or infinite, so the algorithm should access
them only through a learning primitive, an oracle for weighted empirical
risk minimization (ERM), with polynomially many calls.  The third is
\emph{determinism}: the output should be a fixed function of the input, so
that the same patient always receives the same prediction.

For finite classes, \citet{tosh2022simple} attain the optimal
$\varepsilon^{-2}$ dependence, with a logarithmic dependence on the sizes of
the classes, by reducing to sleeping experts as \citet{blum2020advancing} did
online, and \citet{haghtalab2023unifying} attain it with simpler game
dynamics, assuming that some mixture of hypotheses is multi-group optimal.
Both algorithms keep a weight on every pair of a group and a hypothesis.  For
classes of finite VC dimension, \citet{tosh2022simple} also give a simple
algorithm that outputs a deterministic decision list, given an oracle that
finds a group--hypothesis pair improving on the current list, but it needs
about $\varepsilon^{-3}$ samples; the closely related algorithm of
\citet{globusharris2022algorithmic} has a similar rate.  In the online
setting, \citet{deng2024groupwise} give oracle-efficient algorithms with an
oracle over pairs for smoothed, transductive, and related settings, but none
of these yields the desired batch guarantee.  Their $\sqrt T$-type bound
needs polynomially many samples per round from a known base measure with
respect to which the inputs are smooth, which for independent inputs and the
best rate is the unknown input distribution itself; their transductive bound
needs the set of possible inputs in advance and finite classes.  With
independent inputs and only the training sample, their oracle-efficient
algorithm has regret of order $T^{2/3}$, which converts into about
$\varepsilon^{-3}$ samples.  Their bounds also hold in expectation for each
group separately, whereas a batch guarantee for all groups at once needs a
bound that holds with high probability uniformly over groups.  This leaves
the following questions open.
\begin{enumerate}[label=(\arabic*),leftmargin=2em]
\item \emph{Can a learner that accesses the classes only through an ERM
  oracle attain the optimal sample complexity of multi-group learning?}
\item \emph{Can it do so with a single deterministic predictor?}
\item \emph{What kind of ERM oracle does such a learner need?}
\end{enumerate}

We address all three.  For the first, we give an algorithm that uses
$\wtO((d_G+d_H+\log(1/\delta))/\varepsilon^2)$ samples and polynomially many
calls to a weighted ERM oracle, and we show that, for every pair of classes
satisfying mild closure conditions, every learner, regardless of
computation, needs this many samples up to logarithmic factors.  For the
second, yes when the inputs carry a computable order: the output can then be
made deterministic with the same sample complexity and no further oracle
calls.  For the third, our oracle optimizes over pairs of a group and a
hypothesis; we show that such an oracle cannot in general be implemented from
separate ERM oracles for the two classes.  We then identify a natural
assumption, bounded membership, under which an ERM oracle for the hypothesis
class alone suffices: when every input belongs to at most $k$ groups and
these groups can be listed, an algorithm with polynomially many ERM calls
over $\cH$ uses $\wtO((d_H+\log k+\log(1/\delta))/\varepsilon^2)$ samples,
which is again optimal.  Under \emph{group realizability}
\citep{ardeshir2026group}, where the labels are a function of the inputs
that agrees on each group with some hypothesis in $\cH$, it attains the
faster rate $\wtO((d_H+\log k+\log(1/\delta))/\varepsilon)$ with a
deterministic output; this makes an approach of \citet{ardeshir2026group},
which runs in polynomial time when there are polynomially many groups, work
for infinite classes of groups with bounded membership.  \Cref{sec:related} discusses
related work in detail.

\subsection{Main results}
\label{sec:main-results}

Let $\cG\subseteq\{0,1\}^{\cX}$ and $\cH\subseteq\{-1,+1\}^{\cX}$ have VC
dimensions $d_G$ and $d_H$, with $d=d_G+d_H\ge1$, and let
$\ell:\{-1,+1\}\times\{-1,+1\}\to[0,1]$ be a loss.  A \emph{randomized
predictor} $Q$ assigns to each input $x$ a distribution $Q_x$ over
$\{-1,+1\}$ and draws a fresh label from $Q_x$ at each query.  Its
\emph{multi-group excess risk} $\Err(Q)$ is the left side
of~\eqref{eq:intro-goal}, with $\ell(Q(X),Y)$ averaged over the draw
(\cref{def:excess}); a deterministic predictor is the special case of point
masses.

Our oracle optimizes over pairs.  For a pair $(g,h)$ and an input $x$, the
pair's output $(g(x),h(x))$ is one of four symbols in
$\cA=\{0,1\}\times\{-1,+1\}$.  The \emph{pair class} $\Fp$ consists of the
functions $F_{g,h}$ on $\cX\times\cA$ that indicate whether the pair's
output at $x$ equals a given symbol:
$F_{g,h}(x,a)=+1$ if $(g(x),h(x))=a$ and $-1$ otherwise
(\cref{def:pair-class}).  A \emph{weighted ERM oracle} for a class receives
points with real weights and returns a function in the class maximizing the
weighted sum of its values (\cref{def:erm}).  When $\cG$ contains only the
whole space, weighted ERM over $\Fp$ is ordinary weighted ERM over $\cH$.

In our theorems, samples are independent draws from an arbitrary unknown
distribution over $\cX\times\{-1,+1\}$, and a learner must succeed for every
such distribution.  The notations $\wtO$ and $\widetilde\Omega$ hide factors
polylogarithmic in $d$, $1/\varepsilon$, and $1/\delta$, and, in
\cref{thm:membership}, in $k$.

\begin{theorem}[Optimal multi-group learning]
\label{thm:main}
Let $\varepsilon,\delta\in(0,1)$.
\begin{enumerate}[label=(\alph*),leftmargin=2em]
\item \emph{Upper bound.} There is an algorithm (\cref{alg:batch}) that,
  given $\varepsilon$, $\delta$, and a weighted ERM oracle for $\Fp$, uses
  $\wtO((d_G+d_H+\log(1/\delta))/\varepsilon^{2})$ samples and
  $\wtO((d_G+d_H+\log(1/\delta))^2/\varepsilon^{4})$ oracle calls, and with
  probability at least $1-\delta$ outputs a randomized predictor with
  multi-group excess risk at most $\varepsilon$.  If $\cX$ carries a
  computable total order (\cref{rem:order}), the output can instead be a
  deterministic predictor (\cref{alg:det}), with the same guarantees.
\item \emph{Lower bound.} There is an absolute constant $c>0$ such that, if
  $\varepsilon\le10^{-3}$ and $\delta\le1/16$, the following holds for all
  classes $\cG$ and $\cH$ of VC dimensions $d_G,d_H\ge1$ such that $\cG$
  contains the whole space and is closed under complements and $\cH$
  contains a hypothesis and its negation.  For the zero-one loss, every
  learner that with probability at least $1-\delta$ outputs a randomized
  predictor with multi-group excess risk at most $\varepsilon$ uses at least
  $c\,(d_G+d_H+\log(1/\delta))/\varepsilon^{2}$ samples.
\end{enumerate}
\end{theorem}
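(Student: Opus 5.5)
For the upper bound, I would cast the problem as a zero-sum game between a learner choosing, at each input, a distribution over $\{-1,+1\}$, and an adversary choosing a group--hypothesis pair $(g,h)$. The adversary's payoff is the empirical excess $\widehat\E[g(X)(\ell(Q(X),Y)-\ell(h(X),Y))]$.

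\textbf{Reduction to the oracle.} The adversary's best response to a fixed $Q$ is a weighted ERM over $\Fp$. The key observation is that for fixed $Q$ the payoff is linear in the four indicators $\mathbb{1}[(g(x),h(x))=a]$, $a\in\cA$, with coefficient $0$ when $g(x)=0$. Hence it is a weighted sum of $F_{g,h}(x_i,a)$ with real weights, up to an additive constant, using $\mathbb{1}=(1+F)/2$.

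\textbf{Dynamics.} The learner runs a no-regret algorithm pointwise: Hedge over the two labels at each sample point, driven by the losses the adversary's pairs induce there. The adversary plays best responses via the oracle. After $T=\wtO(d/\varepsilon^2)$ rounds I would output the average of the learner's iterates. I would extend this to all of $\cX$ by defining $Q_x$ through the same weights computed from the stored sequence $(g_t,h_t)$; that is, $Q_x\propto\exp(-\eta\sum_t g_t(x)\ell(\cdot,h_t)\ldots)$, where the update depends on $x$ only through $(g_t(x),h_t(x))$.

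\textbf{Why the game value is small.} For every $(g,h)$, the pointwise choice $Q_x=h(x)$ on $g$ gives a nonpositive payoff against that particular pair. A minimax argument then gives value at most $0$ against every mixture of pairs, since the learner can respond pointwise to any adversary mixture; this is exactly where the pointwise (randomized) predictor is needed.

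\textbf{Generalization, the main obstacle.} The output $Q$ depends on the sample. So I must bound $\sup_{g,h}$ of the empirical-versus-true gap uniformly over the class of predictors the algorithm can output. These predictors are parametrized by $T$ pairs, so naive covering gives $T\cdot d$ complexity, which is too much. My first attempt would be sample splitting with a small number of rounds: run the dynamics on fresh data in epochs, or use a compression/stability argument. Hedge with a learning rate $\eta\approx\varepsilon$ is stable, so replacing one sample changes the output little. I would then apply the uniform convergence bound for the VC class of functions $x\mapsto g(x)(\ell(q,Y)-\ell(h(x),Y))$ at fixed $Q$, which has complexity $O(d_G+d_H)$, combined with a stability-based generalization bound for the data-dependent $Q$. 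An alternative is to make the learner's update depend only on a held-out half while the adversary's best responses use the other half.

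The oracle count $\wtO(d^2/\varepsilon^4)$ suggests the intended scheme has $T\approx n$ rounds over $n\approx d/\varepsilon^2$ samples, perhaps online-to-batch with one round per fresh sample plus an inner best response. That would sidestep the stability issue: process samples online, with regret against the adversary sequence converting via Freedman's inequality and a chaining/union bound over $\Fp$ uniformly. I would adopt this version.

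\textbf{Determinism.} With a total order on $\cX$, I would derandomize by rounding $Q_x$ with a threshold that depends on $x$ through its order position. One way is to partition inputs and alternate labels so that, within any group, the realized label fractions track $Q_x$; I expect this step to need careful handling but to cost only an extra $\varepsilon$.

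For the lower bound, I would combine three standard constructions.
\begin{itemize}[leftmargin=2em]
\item \textbf{$d_H/\varepsilon^2$ term.} Take $g$ to be the whole space and use the agnostic lower bound for $\cH$ on a shattered set, with labels biased by $\pm\varepsilon$.
\item \textbf{$d_G/\varepsilon^2$ term.} Shatter $d_G$ points by $\cG$ and use $h,-h$. Each group picks out a subset on which the learner must guess the majority sign with advantage $\varepsilon$. Since $\cG$ is closed under complements, the error on the bad points, weighted by group mass, is certified by some group. An Assouad-type argument then gives $d_G/\varepsilon^2$.
\item \textbf{$\log(1/\delta)/\varepsilon^2$ term.} This follows from a two-point testing argument between label biases $\pm\varepsilon$.
\end{itemize}
The constants $\varepsilon\le10^{-3}$ and $\delta\le1/16$ are tuned so that these constructions go through.
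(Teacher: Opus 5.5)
The gap is in part~(a): the proposal never builds an oracle-efficient online learner whose regret is small against every pair. Your lower-bound plan is essentially the paper's. It reduces to agnostic learning of $\cH$ via the whole space, and of the VC-$d_G$ class $\{h(2g-1):g\in\cG\}$ via the pairs $(g,h)$ and $(\bar g,-h)$ at accuracy $2\varepsilon$, then applies Assouad's lemma and a two-point test.

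Your batch dynamics (pointwise Hedge for the learner, best responses for the pair player) solve the empirical game, but they do not give a predictor on $\cX$. The Hedge weights at a sample point depend on its label $y_i$. Your extension, with $h_t(x)$ in place of the label, is a different predictor whose payoff was never bounded.

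You rightly turn to one fresh sample per round, but there the roles you assign cannot work. A fresh input is typically seen once, so pointwise Hedge learns nothing. A best-responding pair player gives no bound on $\sup_{g,h}\sum_t\rho_t(g,h)$, which is exactly what the online-to-batch conversion needs.

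That bound requires the \emph{pair player} to have low regret over $\Fp$, against payoffs that react to its own choices, using only the oracle. This is impossible in general against adversarial inputs. The paper's key ingredient is that the inputs are i.i.d., so Gaussian follow-the-perturbed-leader over $\Fp$ has regret $\wtO(\sqrt{Td})$ with high probability (\cref{thm:hybrid}). The high-probability form is what makes the bound uniform over groups. That theorem handles feedback of the form $Z_tF(U_t)$ with i.i.d.\ $U_t$, so the paper appends a uniform random symbol $A_t\in\cA$ to each input (\cref{lem:encoding}). Your remark that the payoff is a weighted sum of the four indicators suffices for an offline best response, but not for this.

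The learner must then keep the pair player's mixture payoff nonpositive at a fresh $X_t$ without seeing $Y_t$. Your minimax remark is the right identity, \eqref{eq:vote-zero}, but it has to become the learner's actual strategy. The learner draws $M$ pairs from the pair player's current distribution and predicts the label of a uniformly random drawn pair whose group contains $x$. The sampling error is of order $\sqrt{\log(T/\zeta)/M}$ per round (\cref{lem:voting}). Taking $M=n$ is the source of the $n(n+1)$ oracle calls.

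Your derandomization sketch also misses the case that breaks naive rounding: inputs of non-negligible probability, where a single random or alternating choice is not averaged out. The paper handles this in three steps.
\begin{enumerate}[label=(\roman*),leftmargin=2em]
\item Rounding changes group risks only through the conditional loss gap $\Delta(x)$.
\item At repeated inputs, $\Delta(x)$ is estimated from a separate sample. Where it is confidently nonzero, the paper predicts the better label, which cannot increase any group risk; at the other repeated inputs $\Delta(x)$ is small.
\item The remaining mass is rounded with one independent seed per cell of an order-based partition into cells of small probability, with a VC bound over the tests $2g-1$.
\end{enumerate}
You give no argument that deterministic alternation within cells is uniform over a VC class of groups.
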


Part~(b) holds for learners with unlimited computation, so part~(a) is
optimal up to polylogarithmic factors and the restriction to an ERM oracle
costs no additional samples.  Part~(b) follows by reduction from the classical lower bound
for agnostic learning \citep[Theorem~6.8]{shalev2014understanding}
(\cref{sec:lower}).  Its conditions on the classes cannot be dropped
entirely: if $\cH$ contains a single hypothesis, predicting with it gives
multi-group excess risk $0$ without any samples.  This contrasts with multicalibration, a closely related
multi-group guarantee, where, for a range of dimensions, learners that
access the class only through polynomially many ERM calls need a factor
$1/\varepsilon$ more samples than learners with unlimited computation
\citep{collina2026sample,okoroafor2026sample}.  The output of part~(a) is
efficient to evaluate: the randomized predictor is the uniform average of
$n=\wtO((d+\log(1/\delta))/\varepsilon^2)$ simpler predictors, each defined by
$n$ pairs returned by the oracle, and a prediction is drawn by evaluating one
of them, chosen uniformly at random.  The deterministic output also stores
$\wtO((d_G+\log(1/\delta))/\varepsilon^2)$ sample inputs and uses the order,
and evaluating it takes $n^2$ evaluations of stored pairs
(\cref{rem:representation}).  For groups of probability at least
$\gamma$, the same algorithm with accuracy $\gamma\varepsilon$, hence
$\wtO((d+\log(1/\delta))/(\gamma^2\varepsilon^2))$ samples, gives excess risk
at most $\varepsilon$ conditionally on every such group
(\cref{cor:conditional}).  All our upper bounds also hold
with an approximate oracle, one whose answer is within $1$ of the optimal
weighted sum (\cref{rem:approx}).

Weighted ERM over $\Fp$ optimizes over a group and a hypothesis jointly, and
one may ask whether it can be implemented from separate weighted ERM oracles
for $\cG$ and $\cH$.  The following lemma shows that it cannot in general.

\begin{lemma}[Separate oracles do not implement the pair oracle]
\label{lem:oracle}
There is a polynomial-time map that takes a graph on $n$ vertices to classes
$\cG$ and $\cH$ on a finite input space, each of VC dimension at most $n$
and each admitting weighted ERM in linear time, together with a query to a
weighted ERM oracle for $\Fp$ whose optimal value determines the size of the
maximum cut of the graph.  Consequently, unless $\mathrm{P}=\mathrm{NP}$, no
polynomial-time algorithm implements weighted ERM over $\Fp$ using separate
weighted ERM oracles for $\cG$ and $\cH$.
\end{lemma}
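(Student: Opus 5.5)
The plan is a direct reduction from \textsc{Max-Cut}. The key design choice is to make every function in $\cG$ and in $\cH$ \emph{local per vertex}: its value at each input depends on the membership of a single vertex. This makes weighted ERM over each class separately decomposable and trivial. The coupling between $g$ and $h$, which only the pair oracle sees, is what encodes the cut.

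\textbf{Step 1: the pair oracle optimizes arbitrary per-input potentials.} A weighted point $((x,a),w)$ contributes $w\,(2\cdot\mathbf 1[(g(x),h(x))=a]-1)$ to the ERM objective over $\Fp$. By placing weights on the four points $(x,a)$, $a\in\cA$, a single query therefore maximizes
\[
\textstyle\sum_{x}\phi_x\bigl(g(x),h(x)\bigr)+\mathrm{const}
\]
for any prescribed functions $\phi_x:\cA\to\R$ with $\phi_x\ge 0$. The constant is explicit, so the optimal value of the query determines the optimum of $\sum_x\phi_x$.

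\textbf{Step 2: the construction.} Let $G=(V,E)$ with $V=\{1,\dots,n\}$, and orient each edge as $uv$ with $u<v$. Take the input space
\[
\cX=E\cup\{x_v:v\in V\}.
\]
For $S\subseteq V$ define $g_S(uv)=\mathbf 1[u\in S]$ and $g_S(x_v)=\mathbf 1[v\in S]$. For $T\subseteq V$ define $h_T(uv)=+1$ iff $v\in T$, and $h_T(x_v)=+1$ iff $v\in T$. Let $\cG=\{g_S\}$ and $\cH=\{h_T\}$.

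Each class has at most $2^n$ members, so its VC dimension is at most $n$.

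Weighted ERM over $\cG$ runs in linear time. Every point's label is a function of one vertex's membership in $S$, so the objective splits as $\sum_v c_v(\mathbf 1[v\in S])$. For each vertex $v$ we accumulate $c_v(0)$ and $c_v(1)$ over the points touching $v$, and we pick the better choice independently per vertex. The same argument works for $\cH$.

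\textbf{Step 3: the query.} Fix a penalty $M=|E|+1$.
\begin{itemize}[leftmargin=2em]
\item At each coupling point $x_v$, set $\phi_{x_v}(a)=M$ on the two symbols $(1,+1),(0,-1)$, where $S$ and $T$ agree on $v$, and $0$ otherwise.
\item At each edge $uv$, set $\phi_{uv}(a)=1$ on the symbols $(1,-1),(0,+1)$, meaning exactly one of "$u\in S$", "$v\in T$" holds, and $0$ otherwise.
\end{itemize}
Any pair with $S\neq T$ loses at least $M>|E|$ relative to the pair $(S,S)$, and the edge terms can contribute at most $|E|$. So every optimum has $S=T$. For $S=T$ the objective equals
\[
nM+\bigl|\{uv\in E:\ |\{u,v\}\cap S|=1\}\bigr|=nM+\mathrm{cut}(S).
\]
Hence the optimal value is $nM+\mathrm{maxcut}(G)$ plus the known constant from Step~1. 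Everything is computed in polynomial time.

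\textbf{Step 4: the consequence.} Suppose a polynomial-time algorithm implemented weighted ERM over $\Fp$ using separate oracles for $\cG$ and $\cH$. Replace those oracles by the linear-time procedures from Step~2. The result is a polynomial-time algorithm for \textsc{Max-Cut}, which is NP-hard, so this would give $\mathrm{P}=\mathrm{NP}$.

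The main obstacle is the balance in Steps~2 and~3: each class on its own must be simple enough that ERM over it decomposes (hence per-vertex locality), while the joint structure must still encode a cut. Two choices resolve this together. The orientation trick lets $g$ read the tail of an edge and $h$ read its head. The heavily weighted coupling points force $S=T$.
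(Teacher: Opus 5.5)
Your proof is correct and is essentially the paper's reduction: the group picks a side for each vertex, the hypothesis is forced to copy it, the pair objective counts cut edges, and each class alone decomposes across vertices. The only difference is the gadget. You use general per-input potentials on all four symbols, with one input per vertex and per edge, and force $S=T$ by the penalty $M=|E|+1$; the paper instead uses the bilinear objective $\sum_r w_r g(x_r)h(x_r)$ on two inputs per ordered pair and makes $t=s$ the best response through the diagonal weight $B_0=n+1$, which exceeds every degree.
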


\Cref{lem:oracle} concerns implementing the oracle, not learning: it does not
rule out a multi-group learner that uses only separate oracles and never
solves the joint problem.  Whether such a learner can attain the optimal rate
in general is open (\cref{sec:discussion}).

Our second main result identifies a natural assumption under which the pair oracle
is not needed.  Say that $\cG$ has \emph{membership at most $k$} if every
input belongs to at most $k$ groups.  This holds, for instance, for
intersectional groups defined by agreement on a subset of $a$ attributes,
with $k=2^a$, and for hierarchical groups of depth $k$
\citep{deng2024hierarchical}.  We also assume that the groups containing a
given input can be listed, which is immediate in these examples; an ERM
oracle for $\cG$ alone does not provide this (\cref{rem:listing}).

\begin{theorem}[Bounded membership with an ERM oracle for $\cH$]
\label{thm:membership}
Let $\varepsilon,\delta\in(0,1)$, and suppose that every input belongs to
at most $k$ groups and that the groups containing a given input can be
listed (\cref{def:membership}).
\begin{enumerate}[label=(\alph*),leftmargin=2em]
\item \emph{Agnostic.} There is an algorithm (\cref{alg:heavy-ag}) that, given
  $\varepsilon$, $\delta$, and a weighted ERM oracle for $\cH$, uses
  $\wtO((d_H+\log(k/\delta))/\varepsilon^{2})$ samples and at most
  $2k/\varepsilon$ oracle calls, and with probability at least $1-\delta$
  outputs a randomized predictor with multi-group excess risk at most
  $\varepsilon$.
\item \emph{Group-realizable.} If $\ell$ is the zero-one loss and $\cD$ is
  group-realizable (\cref{def:group-realizable}), then there is an algorithm
  (\cref{alg:heavy-re}) that, given $\varepsilon$, $\delta$, and a weighted
  ERM oracle for $\cH$, uses
  $\wtO((d_H+\log(k/\delta))/\varepsilon)$ samples and at most
  $2k/\varepsilon$ oracle calls, and with probability at least $1-\delta$
  outputs a deterministic predictor with multi-group excess risk at most
  $\varepsilon$.
\end{enumerate}
\end{theorem}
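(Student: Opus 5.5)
The plan is to show that, under bounded membership, only a small number of groups matter, and that their number is controlled by $k/\varepsilon$ and not by $d_G$. Since $\ell\in[0,1]$, a group $g$ with $\Pr[g(X)=1]<\varepsilon/2$ contributes excess risk at most $\varepsilon/2$ to~\eqref{eq:intro-goal}, whatever $Q$ is, so only \emph{heavy} groups (probability at least $\varepsilon/2$) need attention.

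Membership at most $k$ gives $\sum_{g}g(x)\le k$ for every $x$. Hence $\sum_g \Pr[g(X)=1]\le k$, so there are at most $2k/\varepsilon$ heavy groups. The same bound holds on any sample: at most $2k/\varepsilon$ groups have empirical frequency at least $\varepsilon/2$.

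\textbf{Step 1: find the heavy groups.} Draw a first sample and list the groups of each sample point (possible by \cref{def:membership}). Keep every group whose empirical frequency is at least a threshold $\varepsilon/4$; by the counting above this leaves at most $4k/\varepsilon$ candidates, and I would tune constants to reach the stated $2k/\varepsilon$. False positives are harmless. To avoid false negatives, apply a Chernoff bound with a union bound over the at most $2k/\varepsilon$ true heavy groups, which is a fixed finite set. This needs only $O(\log(k/(\varepsilon\delta))/\varepsilon)$ samples, so no uniform convergence over $\cG$ is needed and $d_G$ never enters.

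\textbf{Step 2: one ERM call per candidate.} For each candidate $g$, call the weighted ERM oracle for $\cH$ on a fresh sample, giving each point weight $g(x_i)$ times the loss difference between the two labels. This returns $h_g$ minimizing the empirical value of $\E[g(X)\ell(h(X),Y)]$, at a cost of at most $2k/\varepsilon$ calls. For a fixed $g$ the class $\{g\cdot\ell(h(\cdot),\cdot):h\in\cH\}$ has VC dimension $O(d_H)$. Uniform convergence with a union bound over candidates then gives, with $\wtO((d_H+\log(k/\delta))/\varepsilon^2)$ samples, that $h_g$ is $\varepsilon/4$-optimal on $g$ in the weighted sense. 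The candidates depend on the first sample only, so conditioning on it makes the union bound legitimate.

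\textbf{Step 3 (agnostic): combine the $h_g$.} This reduces to finite multi-group learning with finitely many groups (the candidates) and finitely many hypotheses $\{h_g\}$, each of size at most $2k/\varepsilon$. I would invoke the finite-class sleeping-experts reduction of \citet{tosh2022simple}, or equivalently run multiplicative weights over the pairs $(g,h_{g'})$ on a third sample. This gives a randomized $Q$ with excess at most $\varepsilon/4$ against every pair, using $O(\log(k/(\varepsilon\delta))/\varepsilon^2)$ samples and no further oracle calls. Chaining the three error terms ($\varepsilon/2$ for light groups, $\varepsilon/4$ from ERM, $\varepsilon/4$ from the combination) gives part~(a).

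\textbf{Step 4 (group-realizable): the main obstacle.} Here $h_g$ can be chosen consistent with the sample restricted to $g$. The difficulty is combining the $h_g$ deterministically at the $1/\varepsilon$ rate. The naive rule, predicting at $x$ with $h_{g'}$ for the first listed candidate $g'\ni x$, has error mass on $g$ bounded by $\sum_{g'}\Pr[g'(X)=1,\,h_{g'}(X)\ne Y]$. Each term is about $d_H/m$ in absolute terms, but summing over $2k/\varepsilon$ groups loses a factor $k/\varepsilon$.

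What I would try first is exploiting realizability: on $g\cap g'$ the label is given both by $h^*_g$ and by $h^*_{g'}$. I would then bound the error of the chosen $h_{g'}$ only on the set where it actually predicts. This suggests an iterative, boosting-style construction in the spirit of \citet{ardeshir2026group}: maintain a decision list, and at each round pick a candidate $g$ on which the current list still errs on at least $\varepsilon$ mass. Refit $h_g$ by ERM on the points of $g$ and prepend it to the list. A potential argument (each round reduces total error mass on the sample by about $\varepsilon/k$, using that each point lies in at most $k$ groups) would cap the number of rounds by $2k/\varepsilon$. Realizable uniform convergence for decision lists of length $2k/\varepsilon$ over $\cH$ would then give the $\wtO((d_H+\log(k/\delta))/\varepsilon)$ sample bound. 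Whether the generalization of the list avoids a $k/\varepsilon$ factor in the VC dimension is the step I expect to be delicate. If it does not, I would fall back on fresh samples per round together with realizable bounds stated relative to the mass of each group.
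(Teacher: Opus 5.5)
Your part~(a) is correct and follows the paper's route: find the heavy groups by listing and counting, make one weighted ERM call over $\cH$ per retained group on an independent sample, and aggregate with sleeping experts on fresh data. The paper runs sleeping Prod over the $N_\gamma$ experts $(g,\widehat h_g)$ and analyzes it directly, instead of using a finite-class black box. It gets exactly $2k/\varepsilon$ calls by taking heaviness threshold $\varepsilon$ and retention threshold $\varepsilon/2$; groups with $p_g<\varepsilon$ already have excess risk below $\varepsilon$, so they need no error budget.

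Part~(b), however, has a genuine gap at exactly the step you flag. Your potential argument bounds only the number $L$ of prepended entries; with consistent refits, each step removes all sample errors in the chosen group and creates none, so $L\le1/\varepsilon$. But the output then ranges over decision lists of length $L$ with entries in $\cF_\gamma\times\cH$. The growth function of this class on $T$ points is bounded only by $N_\gamma^L\,\Pi_{\cH}(T)^L$. Realizable or relative uniform convergence over it therefore gives error of order $L(d_H\log T+\log N_\gamma)/T$ on a group. Only the relative version applies, since your stopping rule leaves empirical error up to $\varepsilon$ rather than zero. With $L\approx1/\varepsilon$ this forces $T\gtrsim(d_H+\log(k/\varepsilon))/\varepsilon^2$. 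The fallback with fresh samples per round costs $L$ rounds of about $\Phi/\varepsilon$ samples each, which is the same $\varepsilon^{-2}$. Neither plan reaches $\wtO((d_H+\log(k/\delta))/\varepsilon)$.

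The missing idea is an aggregation whose per-group mistake bound is additive in $\log N_\gamma$, instead of summing the errors of many trained hypotheses. \Cref{alg:heavy-re} runs a deterministic sleeping weighted majority over the consistent $\widehat h_g$. Awake experts that err are halved, and on rounds where the vote errs, awake experts that are correct are multiplied by $3/2$. On such a round the correct side carries at most half of the awake weight, so the total weight never increases. Let $B_g$ count the awake rounds on which $\widehat h_g$ errs, and $J_g$ those on which $\widehat h_g$ is right but the vote errs. Then $(3/2)^{J_g}2^{-B_g}=w_{g,T+1}\le N_\gamma$. Hence, simultaneously for all $g$, the vote errs on $O(\log N_\gamma+B_g)$ of the rounds on which $g$ is awake. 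Moreover $B_g=O(\Phi)$, with $\Phi$ as in \cref{thm:heavy-realizable}, by the realizable bound $\Pr\{g(X)=1,\ \widehat h_g(X)\ne c^\star(X)\}\le C\Phi/T$ and a Chernoff bound. A relative online-to-batch bound, via an exponential supermartingale, turns this mistake count into $\sum_te_t\le C\Phi$ for the population errors $e_t$ of the round predictors on $g$. The majority vote over the $T$ rounds at most doubles their average. This gives a deterministic $f$ with $\Pr\{g(X)=1,\ f(X)\ne Y\}\le C\Phi/T$.
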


Membership at most $k$ forces $d_G\le1+\log_2k$.  Conversely, for a
partition of $\cX$ into $d_G$ cells, the class of all unions of cells contains
the whole space, is closed under complements, has VC dimension $d_G$, and has
membership $2^{d_G-1}$; by \cref{thm:main}(b), part~(a) is therefore optimal
up to polylogarithmic factors.  Run with a probability threshold
$\gamma$, the same algorithm gives excess risk at most $\varepsilon$
conditionally on every group of probability at least $\gamma$ with
$\wtO((d_H+\log(k/\delta))/(\gamma\varepsilon^2))$ samples, and
$\wtO((d_H+\log(k/\delta))/(\gamma\varepsilon))$ under group realizability
(\cref{thm:heavy-agnostic,thm:heavy-realizable}); the dependence $1/\gamma$
is the rate of ERM on a single group of probability $\gamma$, in contrast
with the factor $1/\gamma^2$ of \cref{cor:conditional}.  The hard
instance of \cref{lem:oracle} has membership $2^n$, so \cref{thm:membership}
does not apply to it.  On ordered input spaces, the rounding of
\cref{sec:det} also makes the agnostic predictor of part~(a) deterministic,
with $\wtO((\log k+\log(1/\delta))/\varepsilon^2)$ further samples.

\subsection{Related work}
\label{sec:related}

\paragraph{Multi-group learning.}
\citet{rothblum2021multigroup} formalized multi-group agnostic learning and,
for losses satisfying a compatibility condition, gave an algorithm, through
outcome indistinguishability \citep{dwork2021outcome}, whose sample
complexity is logarithmic in the number of groups.
\citet{blum2020advancing} studied the online version through sleeping
experts \citep{blum1997empirical,freund1997using}.  \citet{tosh2022simple}
reduced the batch problem to sleeping experts, using the algorithm of
\citet{blum2007external} as analyzed by \citet{gaillard2014second}, and an
online-to-batch conversion \citep{cesabianchi2004generalization}, obtaining conditional
excess risk of order $\sqrt{\log(|\cG||\cH|)/n_g}$ on every group, where
$n_g$ is the number of samples in the group; this adapts to the size of each
group, but requires finite classes and enumerates them.  They also gave
\textsc{Prepend}, which outputs a decision list of groups and hypotheses,
applies to VC classes, and is efficient given an oracle that finds a
group--hypothesis pair improving on the current list, with a rate of about
$\varepsilon^{-3}$; the \textsc{ListUpdate} algorithm of
\citet{globusharris2022algorithmic} is closely related.
\citet{haghtalab2023unifying} attain the optimal rate for finite classes with
two copies of Hedge, assuming that some mixture of hypotheses is multi-group
optimal, and
\citet{deng2024hierarchical} obtain deterministic decision trees for
hierarchically structured groups, a setting with bounded membership.
\citet{tosh2022simple} also note that, for finite classes of groups, one can
train one hypothesis per group by ERM and aggregate these with sleeping
experts.  Under group realizability, where every group has a hypothesis that
is correct on it, \citet{ardeshir2026group} obtain fast rates by ERM over the
class of group-realizable concepts, show that this ERM is NP-hard, and
propose instead to train one consistent hypothesis per group and aggregate
with sleeping experts, or with weighted majority \citep{littlestone1994weighted}
for a deterministic output; this runs in polynomial time when there are
polynomially many groups.  \Cref{thm:membership} makes these approaches
efficient for infinite classes of groups with bounded membership, and in the
group-realizable case it addresses, under this assumption, the problem left
open by \citet{ardeshir2026group} of an oracle-efficient algorithm with the
fast rate.  Multi-group learning has also been studied with active label
queries \citep{rittler2023agnostic} and in the transductive setting
\citep{bergam2026price}.
Related multi-group guarantees include subgroup fairness
\citep{kearns2018preventing}, multiaccuracy \citep{kim2019multiaccuracy}, and
multicalibration \citep{hebert2018multicalibration}, whose sample complexity
with an ERM oracle is determined in \citet{okoroafor2026sample}.
\Cref{thm:main} gives the optimal rate for VC classes with polynomially many
oracle calls; our predictor is randomized or, on ordered input spaces,
deterministic, but its guarantee does not adapt to the size of each group.

\paragraph{Oracle-efficient online and hybrid learning.}
In the fully adversarial online setting, no algorithm with access to an
optimization oracle can have vanishing regret in time much smaller than the
square root of the number of experts \citep{hazan2016computational}, which
motivates relaxations of the adversary such as smoothed inputs
\citep{block2022smoothed}.  In the hybrid setting of
\citet{lazaric2009hybrid}, inputs are independent draws from an unknown
distribution while the feedback is adversarial; \citet{wu2024oracle} gave
the first oracle-efficient algorithm in this setting, with regret
$\wtO(T^{3/4})$, and \citet{okoroafor2026hybrid} studied hybrid learners
with constrained adversaries.
\citet{buzaglo2026oracle} showed that Gaussian follow-the-perturbed-leader
\citep{kalai2005efficient} has expected regret $\wtO(\sqrt{Td})$ in this
setting with one ERM call per round, and \citet{okoroafor2026sample} proved a
high-probability version, which we use as a black box.
Online multi-group guarantees with oracles also appear in online
multicalibration \citep{garg2024oracle}.  For online multi-group learning,
\citet{acharya2024oracle} avoid
enumerating the hypotheses by running a meta-algorithm on top of an
oracle-efficient online learner for $\cH$, but they still enumerate the
groups.  \citet{deng2024groupwise} give algorithms that are oracle-efficient
in both classes, in the smoothed, transductive, and other settings.
Their algorithm, following the framework of \citet{lee2022online}, plays a
game between a player over pairs of a group and a hypothesis, implemented by follow-the-perturbed-leader with an oracle that
optimizes jointly over pairs and approximated by repeated draws, and a
learner who answers each round by solving a small linear program.  Our
online algorithm has the same game structure, and our pair oracle is a
special case of their joint oracle.  The difference is the pair player, and
it is what makes a batch guarantee possible.  Their follow-the-perturbed-leader
perturbs with hallucinated inputs drawn from a base measure with respect to
which the inputs are smooth; for independent inputs and the best rate, this
measure is the input distribution, so their $\wtO(\sqrt{dT})$ bound needs
polynomially many fresh draws from it per round, and it holds in expectation
for each group.  Without such access, their oracle-efficient algorithm for
independent inputs reruns \textsc{ListUpdate} on each prefix and has regret
of order $T^{2/3}$.  Our pair player is Gaussian follow-the-perturbed-leader over the
pair class \citep{buzaglo2026oracle}, which perturbs only the past
observations, so it needs no samples beyond the training data, and its
regret bound holds with high probability \citep{okoroafor2026sample}, which
makes it uniform over groups.  Both properties are needed for the batch
conversion to give the $\varepsilon^{-2}$ rate.

\paragraph{Joint optimization over groups and hypotheses.}
\citet{globusharris2022algorithmic} and \citet{deng2024groupwise} note that
alternating between separate oracles for groups and for hypotheses finds
only a local optimum of the joint problem.  \citet{ardeshir2026group} show
that ERM over the class of group-realizable concepts is NP-hard even with an
oracle for consistency over $\cH$, and \citet{kearns2018preventing} show that
auditing for subgroup fairness is computationally equivalent to weak
agnostic learning of the groups.  More classically, consistency with an
intersection of two halfspaces is NP-hard although consistency with one
halfspace is easy \citep{blum1992training}.  To our knowledge,
\cref{lem:oracle} is the first statement that the joint problem itself,
exact optimization over pairs, can be NP-hard even when each class admits
linear-time ERM.

\paragraph{Deterministic predictors.}
\citet{noarov2026optimal} showed that randomization can be removed from
multicalibrated predictors by rounding with one random seed per cell of a
partition of the inputs, built from a computable order, and by treating
repeated inputs separately, and
\citet{okoroafor2026sample} analyzed this rounding for VC classes of tests.
We adapt that analysis to multi-group excess risk, where the quantity to be
rounded is the difference between the conditional losses of the two labels.

\subsection{Technical overview}
\label{sec:overview}

We describe the ideas behind the two main theorems in turn.

\subsubsection{\texorpdfstring{Optimal multi-group learning (\cref{thm:main})}{Optimal multi-group learning (Theorem 1.1)}}
\label{sec:overview-main}

\paragraph{Obstacles.}
The optimal algorithms of \citet{tosh2022simple} weigh every
group--hypothesis pair, which an oracle does not allow.  Their rate comes
from converting an online algorithm into a batch one, and online learning is
hard with only an optimization oracle.  Finally, the natural output of the
conversion is randomized.

\paragraph{The upper bound.}
A natural approach views multi-group learning as a game.  On each round, the
learner predicts a label at the current input, and a pair player chooses a
pair $(g,h)$; the player's payoff is the amount by which $h$ beats the
learner on the input, counted only if the input is in group $g$.  If the
player has small regret and the learner keeps the player's expected payoff
nonpositive on every round, then no pair beats the learner by much in
hindsight, which is a small multi-group regret.  The pair player must be
implemented with an oracle.  Against adversarial inputs this is hard in
general: no algorithm with an optimization oracle has vanishing regret in
time much smaller than the square root of the size of the class
\citep{hazan2016computational}.  Our samples are independent, so the inputs
are random draws from a fixed distribution and only the payoffs are
adversarial.  In this \emph{hybrid} setting \citep{lazaric2009hybrid}, the
Gaussian follow-the-perturbed-leader algorithm of \citet{buzaglo2026oracle}
makes one ERM call per round, and \citet{okoroafor2026sample} showed that its
regret is $\wtO(\sqrt{Td})$ with high probability.  This choice of pair
player is essential for the batch conversion (\cref{sec:related}).

That algorithm competes with a class of $\pm1$-valued functions, with a
payoff equal to a scalar times the function's value.  The payoff of a pair is
not of this form: it depends on two bits, whether the input is in the group
and which label the hypothesis predicts.  We therefore append to each input a
uniformly random symbol $a$ out of the four possible outputs of a pair, and
run it over the functions that indicate whether the pair's output equals
$a$.  Averaged over $a$, the payoff of such a function is half
the payoff of the pair plus a term that is the same for all pairs, so low
regret over these functions is low regret over pairs (\cref{lem:pair-regret}).

The learner keeps the player's payoff nonpositive by a voting rule.  It draws
$M$ pairs from the player's current distribution ($M=T$ suffices) and, at an
input, predicts the label of a uniformly random drawn pair whose group
contains the input.  Against the drawn pairs, the expected payoff of this
prediction is exactly zero whatever the true label, and the sampling error is
small
(\cref{lem:voting}).  The online multi-group regret is therefore
$\wtO(\sqrt{T(d+\log(1/\delta))})$ (\cref{thm:online}).  Averaging the
learner's predictors over the rounds gives a batch predictor.  Each round's
predictor is fixed before the round's example is seen, so its loss on that
example is an unbiased estimate of its population loss, and a martingale
argument over a cover of $\cG$, used only in the analysis, gives the batch
rate $\wtO((d+\log(1/\delta))/\varepsilon^2)$ (\cref{thm:batch}).

To make the output deterministic, we fix the random choices of the averaged
predictor once per cell of a partition of the inputs into cells of small
probability, following \citet{noarov2026optimal} and
\citet{okoroafor2026sample}.  Rounding can fail at inputs of large
probability, where a single random choice is not averaged out.  At such an
input, the only quantity that matters is the difference between the
conditional losses of the two labels.  We estimate it from a separate sample;
where it is confidently nonzero we predict the better label, and elsewhere it
is small, so rounding costs little (\cref{thm:det}).

\paragraph{The lower bound.}
Part~(b) holds for all learners, and we prove it by reduction to agnostic
learning.  If the groups include the whole space, a multi-group learner is
an agnostic learner for $\cH$, which gives the terms $d_H$ and
$\log(1/\delta)$.  For the term $d_G$, fix a hypothesis $h$ whose negation is
also in $\cH$.  The classifiers that agree with $h$ on a group and with $-h$
off it form a class of VC dimension $d_G$, and, when the groups are closed
under complements, a multi-group learner with accuracy $\varepsilon$ is an
agnostic learner for this class with accuracy $2\varepsilon$.  The classical
lower bound for agnostic learning then applies.  Our learners output
randomized predictors, so we include a proof of that lower bound in this
form, by Assouad's lemma and a two-point argument.

\subsubsection{\texorpdfstring{The oracle question and bounded membership (\cref{lem:oracle,thm:membership})}{The oracle question and bounded membership (Lemma 1.2 and Theorem 1.3)}}
\label{sec:overview-membership}

\paragraph{Why separate oracles do not suffice (\cref{lem:oracle}).}
Weighted ERM over the pair class can express the bilinear objective
$\sum_rw_rg(x_r)h(x_r)$ over pairs (\cref{lem:bilinear-pair} in \cref{app:oracle}).  Fixing either
the group or the hypothesis makes this objective linear in the other, so
separate oracles can only improve one coordinate at a time, which suggests
that they do not suffice.  We make this precise by encoding \textsc{Max-Cut}: the group
selects a sign for each vertex, the hypothesis is forced to copy these signs,
and the objective counts the edges whose endpoints receive different signs,
while each class on its own decomposes across vertices and admits linear-time
ERM.

\paragraph{Bounded membership.}
A group of probability below $\varepsilon$ contributes at most $\varepsilon$
to the multi-group excess risk, so only groups of probability at least
$\gamma=\varepsilon$ matter.  If every input belongs to at most $k$ groups,
then the probabilities of any collection of distinct groups sum to at most
$k$, so at most $k/\gamma$ groups have probability at least $\gamma$.  These heavy groups can be found
from a reference sample: list the groups containing each sample point and
keep those that appear often (\cref{lem:heavy}).  With finitely many
relevant groups in hand, we follow \citet{tosh2022simple} and
\citet{ardeshir2026group}, who train one hypothesis per group by ERM and then
aggregate the trained hypotheses with a sleeping-experts algorithm, in which
a hypothesis is awake on the inputs of its group.  Training takes one ERM
call over $\cH$ per heavy group, at most $2k/\varepsilon$ calls in all, and
no optimization over pairs is needed, because the groups are now explicit.  In
the agnostic case the aggregation is a sleeping version of the Prod
algorithm \citep{cesabianchi2007improved}, and the average of its predictors is the output.  Under group
realizability, each trained hypothesis errs on a small part of its group, a
deterministic weighted majority \citep{littlestone1994weighted} over the
trained hypotheses makes few
mistakes on every group, and the majority vote of its predictors over the
rounds is a deterministic predictor with the fast rate.

\section{Preliminaries}
\label{sec:prelims}

\paragraph{Predictors and risks.}
Let $\cD$ be a distribution over $\cX\times\{-1,+1\}$ with marginal $\cD_X$
on inputs, and let $\ell:\{-1,+1\}\times\{-1,+1\}\to[0,1]$ be a loss.  A
randomized predictor $Q$ maps each input $x$ to a distribution $Q_x$ on
$\{-1,+1\}$; it is used by drawing a fresh label $\Yhat\sim Q_X$, independent
of $Y$ given $X$, at each query.  A deterministic predictor
$f:\cX\to\{-1,+1\}$ is identified with the randomized predictor
$Q_x=\delta_{f(x)}$.

\begin{definition}[Multi-group excess risk]
\label{def:excess}
For a randomized predictor $Q$, a group $g\in\cG$, and a hypothesis
$h\in\cH$, let
\[
  L(Q;g)=\E_{(X,Y)\sim\cD}\Bigl[g(X)\sum_{a\in\{-1,+1\}}Q_X(a)\,\ell(a,Y)\Bigr],
  \qquad
  L(h;g)=\E_{(X,Y)\sim\cD}\bigl[g(X)\,\ell(h(X),Y)\bigr].
\]
The \emph{multi-group excess risk} of $Q$ is
\[
  \Err(Q)=\Err(Q;\cG,\cH,\cD)=\sup_{g\in\cG,\,h\in\cH}\bigl\{L(Q;g)-L(h;g)\bigr\}.
\]
\end{definition}

The supremum lets each group choose its own comparator.  For a group with
$p_g=\Pr\{g(X)=1\}>0$, the \emph{conditional excess risk}
$\Err_{\mathrm{cond}}(Q;g)=(L(Q;g)-\inf_{h\in\cH}L(h;g))/p_g$ is the excess
risk of $Q$ over the best hypothesis among inputs in $g$, and
$\Err_{\mathrm{cond}}(Q;g)\le\Err(Q)/p_g$.  Throughout, $d_G=\VC(\cG)$,
$d_H=\VC(\cH)$, and $d=d_G+d_H\ge1$, and $\Pi_{\cG}(n)$ denotes the growth
function of $\cG$ on $n$ points.  By the Sauer--Shelah lemma
\citep{sauer1972density}, $\Pi_{\cG}(n)\le(en/d_G)^{d_G}$ for $n\ge d_G$, and
similarly for $\cH$.

\paragraph{Oracles.}

\begin{definition}[Weighted ERM oracle]
\label{def:erm}
A \emph{weighted ERM oracle} for a class $\cF\subseteq\{-1,+1\}^{\cU}$
receives a finite list of pairs $(u_i,w_i)\in\cU\times\R$ and returns some
$F\in\cF$ with $\sum_iw_iF(u_i)=\max_{F'\in\cF}\sum_iw_iF'(u_i)$, breaking
ties in an arbitrary way.  For $\xi\ge0$, a \emph{$\xi$-approximate}
weighted ERM oracle may instead return any $F\in\cF$ with
$\sum_iw_iF(u_i)\ge\max_{F'\in\cF}\sum_iw_iF'(u_i)-\xi$.  In both cases, we
assume that the answer depends only on the input list and on randomness
independent of everything else.  Each call counts as one oracle call, and
returned functions can be stored and evaluated at any point.
\end{definition}

For a class of $\{0,1\}$-valued groups, weighted ERM is defined in the same
way.  Maximizing a weighted sum of $\pm1$ values is ordinary weighted
classification with labels $\sgn(w_i)$ and weights $|w_i|$.

\begin{definition}[Pair class]
\label{def:pair-class}
Let $\cA=\{0,1\}\times\{-1,+1\}$.  For $g\in\cG$ and $h\in\cH$, define
$F_{g,h}:\cX\times\cA\to\{-1,+1\}$ by
\[
  F_{g,h}(x,a)=2\,\mathbf 1\bigl\{(g(x),h(x))=a\bigr\}-1,
\]
and let $\Fp=\{F_{g,h}:g\in\cG,\ h\in\cH\}$.  A weighted ERM oracle for $\Fp$
returns, with its answer $F_{g,h}$, the pair $(g,h)$.
\end{definition}

If $\cG=\{\mathbf 1_{\cX}\}$, then $F_{g,h}(x,(1,b))=b\,h(x)$ and
$F_{g,h}(x,(0,b))=-1$, so weighted ERM over $\Fp$ is weighted ERM over $\cH$.

\begin{lemma}[Complexity of the pair class]
\label{lem:pair-vc}
For every $n$, $\Pi_{\Fp}(n)\le\Pi_{\cG}(n)\,\Pi_{\cH}(n)$, and
$\VC(\Fp)\le6d$.
\end{lemma}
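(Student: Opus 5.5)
The growth-function bound comes from the fact that, on a finite set of points, each $F_{g,h}$ is completely determined by the restrictions of $g$ and $h$ to that set. The VC bound then follows from the growth-function bound and Sauer--Shelah.

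\emph{Growth function.} Take $n$ points $(x_1,a_1),\dots,(x_n,a_n)$ in $\cX\times\cA$. The value $F_{g,h}(x_i,a_i)$ depends only on $(g(x_i),h(x_i))$ and on $a_i$. So the pattern $(F_{g,h}(x_i,a_i))_{i\le n}$ is a function of the pair of patterns $\bigl((g(x_i))_{i},(h(x_i))_{i}\bigr)$. The $x_i$ range over at most $n$ distinct inputs, and if some coincide the number of patterns can only be smaller (monotonicity of growth functions). Hence the number of patterns is at most $\Pi_{\cG}(n)\,\Pi_{\cH}(n)$, which gives the first claim.

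\emph{VC dimension.} Suppose $\Fp$ shatters $m$ points. Then $2^m\le\Pi_{\cG}(m)\Pi_{\cH}(m)$.
\begin{itemize}
\item If $m\ge d_G$ and $m\ge d_H$, Sauer--Shelah gives $2^m\le(em/d_G)^{d_G}(em/d_H)^{d_H}$.
\item If $m<d_G$ or $m<d_H$, use the trivial bound $\Pi(m)\le 2^m$ for that class, or the cruder uniform bound $\Pi(m)\le (m+1)^{d}$ for the relevant class.
\end{itemize}
It is simplest to work with the uniform bound $\Pi_{\cG}(m)\Pi_{\cH}(m)\le(m+1)^{d_G}(m+1)^{d_H}=(m+1)^{d}$, which holds for all $m$. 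This reduces the task to showing that $2^m\le(m+1)^d$ forces $m<6d$.

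\emph{The numeric step.} Suppose $m\ge 6d$. Taking logs, $m\le d\log_2(m+1)$, so $m/\log_2(m+1)\le d$. The function $t\mapsto t/\log_2(t+1)$ is increasing for $t\ge1$, so it suffices to check $6d>d\log_2(6d+1)$, that is, $6d+1<2^6=64$. This holds only for $d\le 10$, so the direct check fails for large $d$.

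For large $d$ I would instead use the sharper Sauer--Shelah form $(em/d)^d$. Via $\Pi_{\cG}(m)\Pi_{\cH}(m)\le (em/d_G)^{d_G}(em/d_H)^{d_H}\le (2em/d)^{d}$ (the middle step by concavity of $x\log(1/x)$, valid when $m\ge d_G,d_H$), shattering gives $2^m\le(2em/d)^d$. Setting $m=cd$ this becomes $2^c\le 2ec$, which fails for $c=6$ since $64>32.6$. For $c\ge6$ it keeps failing because $2^c/c$ is increasing there.

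The case $m<\max(d_G,d_H)$ is impossible whenever $m\ge 6d$, so it needs no separate treatment. The main obstacle is just this constant bookkeeping, including the boundary cases where $d_G$ or $d_H$ is $0$ (take $0^0=1$, i.e.\ growth function $1$).
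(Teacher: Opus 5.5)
Your proposal is correct and, once the abandoned $(m+1)^d$ attempt is removed, it is the paper's argument. The pattern of $F_{g,h}$ on $n$ points is a function of the patterns of $g$ and $h$. For $m\ge 6d\ge\max(d_G,d_H)$ shattered points, Sauer--Shelah together with the concavity of $z\mapsto z\log(em/z)$ gives $2^m\le(2em/d)^d$, which fails because $2^6>12e$. In a write-up you can take $m=6d$ directly, since shattering passes to subsets, and so drop both the detour and the monotonicity of $2^c/c$.
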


\begin{proof}
On points $(x_i,a_i)_{i\le n}$, the values of $F_{g,h}$ are determined by the
labelings $(g(x_i))_i$ and $(h(x_i))_i$, which gives the first claim.  If
$\Fp$ shatters $n\ge d$ points, then by the Sauer--Shelah lemma and the
concavity of $z\mapsto z\log(en/z)$,
$n\log2\le d_G\log(en/d_G)+d_H\log(en/d_H)\le d\log(2en/d)$, which fails for
$n\ge6d$.
\end{proof}

\paragraph{Hybrid online learning.}
In the hybrid setting, a learner competes with a class
$\cF\subseteq\{-1,+1\}^{\cU}$ for $T$ rounds.  Before round $t$ it holds a
distribution $q_t$ over $\cF$ determined by its earlier observations and
internal randomness.  On round $t$, an input $U_t$ is drawn from a fixed
unknown distribution, independently of everything before; an environment
chooses a feedback value $Z_t\in[-1,1]$ as a function of $U_t$, the past, and
randomness independent of the learner's round-$t$ draw; the learner draws
$F_t\sim q_t$ with fresh randomness; and the learner observes $(U_t,Z_t)$.
Write $\bF_t(u)=\E_{F\sim q_t}[F(u)]$ for the learner's mixed prediction.

The learner we use is Gaussian follow-the-perturbed-leader
\citep{kalai2005efficient,buzaglo2026oracle}: with scale $\theta=4\sqrt{\log(2T)}$,
a draw from $q_t$ is any maximizer of
$\sum_{s<t}(Z_s+\theta\gs_s)F(U_s)$ over $F\in\cF$, for fresh independent
standard Gaussians $\gs_1,\ldots,\gs_{t-1}$, and is computed by one call to a
weighted ERM oracle for $\cF$.  For an absolute constant $C_0$, let
\[
  \Reg(T,d_{\cF},\zeta)
  =C_0\sqrt{T\bigl(d_{\cF}\log(eT)+\log(1/\zeta)\bigr)}\,\log^2(2T)
  =\wtO\Bigl(\sqrt{T\bigl(d_{\cF}+\log(1/\zeta)\bigr)}\Bigr).
\]

\begin{theorem}[High-probability hybrid regret, \citealp{okoroafor2026sample}]
\label{thm:hybrid}
Let $\cF\subseteq\{-1,+1\}^{\cU}$ have VC dimension at most $d_{\cF}\ge1$,
and let $T\ge1$.  For every input distribution, every environment as above,
and every $\zeta\in(0,1)$, Gaussian follow-the-perturbed-leader satisfies,
with probability at least $1-\zeta$,
\[
  \sup_{F\in\cF}\sum_{t=1}^TZ_t\bigl(F(U_t)-\bF_t(U_t)\bigr)
  \le\Reg(T,d_{\cF},\zeta).
\]
Each draw from $q_t$, including independent draws in addition to $F_t$,
costs one weighted ERM call on at most $T$ points.  The bound also holds,
with a larger absolute constant $C_0$, when every call is answered by a
$\xi$-approximate oracle with $\xi\le1$.
\end{theorem}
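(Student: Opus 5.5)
The plan is to rerun the analysis of \citet{buzaglo2026oracle} for Gaussian follow-the-perturbed-leader in the hybrid setting. Its in-expectation argument would be replaced by martingale concentration. Write $\Phi_t(F)=\sum_{s<t}(Z_s+\theta\gs_s)F(U_s)$, so $q_t$ is the law of $\arg\max_F\Phi_t(F)$ over the Gaussians. The first step is the standard be-the-leader decomposition. Up to a boundary term, the regret $\sup_F\sum_tZ_t(F(U_t)-\bF_t(U_t))$ is bounded by two pieces. The first is a \emph{perturbation term}, of order $\E\sup_F\sum_{s\le T}\theta\gs_sF(U_s)$ together with its deviations. The second is a \emph{stability term}, $\sum_t Z_t\,\E[F^{\star}_{t+1}(U_t)-F^{\star}_t(U_t)]$, where $F^\star_t$ is the perturbed leader at time $t$. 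Conditionally on $U_{1:T}$, the perturbation term is the supremum of a Gaussian process indexed by at most $\Pi_{\cF}(T)$ sign patterns. Massart's lemma and Gaussian concentration then bound it by $\theta\sqrt{T(d_{\cF}\log(eT)+\log(1/\zeta))}$ with probability $1-\zeta/3$.

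The main obstacle is the stability term. The perturbation lives only on past inputs, so it does not perturb the value at the fresh point $U_t$ directly. This is where the hybrid structure enters. Since $U_t$ is independent of $U_{1:t-1}$ and of the Gaussians, and $Z_t$ depends on $U_t$ only through the environment, the effect of adding the point $(U_t,Z_t)$ to the objective can be compared with the effect of adding one more Gaussian-weighted copy of an i.i.d.\ point. Here I would use the exchangeability coupling of \citet{buzaglo2026oracle}. Relabel the $t$ points $U_1,\dots,U_t$ so that $U_t$ is a uniformly random one of them. The weight $Z_t$ on a single coordinate then shifts one Gaussian coordinate of variance $\theta^2$ by at most $1$. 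Pinsker's inequality, or the Gaussian shift bound, gives a total-variation distance between the leader's laws of order $1/\theta$ per coordinate. Averaging over the $t$ exchangeable positions yields an expected per-round instability of order $\sqrt{d_{\cF}\log T/t}$, with the $\log$ factors coming from $\theta=4\sqrt{\log(2T)}$. Summing over $t$ gives $\wtO(\sqrt{Td_{\cF}})$. I expect making this coupling rigorous against an adaptive environment to be the delicate part. The other delicate part is that the truncation of the Gaussians at scale $\sqrt{\log T}$ is what produces the $\log^2(2T)$ factor.

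To pass from expectation to probability $1-\zeta$, I would write the regret as its conditional-expectation part plus martingale differences with respect to the filtration generated by $(U_s,Z_s,\gs_s)_{s\le t}$. Both the stability increments and $Z_t(\bF_t(U_t)-\E[\bF_t(U_t)\mid\text{past}])$ have increments bounded by $2$ and conditional variance controlled by the instability above. Freedman's inequality then bounds their sum by $\sqrt{T\log(1/\zeta)}$ plus lower-order terms. The supremum over $F$ is handled by the perturbation term, which already carries the $d_{\cF}\log(eT)$ from the growth-function union bound. Each draw from $q_t$ is one maximization of $\Phi_t$, which is a single weighted ERM call on the $t-1\le T$ points $U_s$ with weights $Z_s+\theta\gs_s$. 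For a $\xi$-approximate oracle with $\xi\le1$, each leader is a $1$-approximate maximizer. In the be-the-leader step this adds at most $\xi$ per round to the boundary terms. In the stability step, the same Gaussian-shift argument applies with a shift of $1+\xi\le2$ instead of $1$. Both changes only enlarge $C_0$.
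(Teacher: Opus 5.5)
The paper does not prove this statement. It imports it as a black box from \citet{okoroafor2026sample} (Theorem~2.5 there, a high-probability version of the expected bound of \citet{buzaglo2026oracle}). Your sketch therefore has to stand on its own, and it fails at its central step, the stability bound.

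A Gaussian-shift or Pinsker bound for one coordinate gives a total-variation distance of order $1/\theta$. Averaging a bound that is uniformly $1/\theta$ over $t$ positions still gives $1/\theta$. Summed over rounds this is $T/(4\sqrt{\log(2T)})$, which is linear regret, not $\sqrt{d_{\cF}\log T/t}$ per round.

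What is missing is a bound on the \emph{total} sensitivity of the leader over all its coordinates. For example, let $\widehat F$ be the perturbed leader on $U_1,\ldots,U_t$. Its value $\widehat F(U_s)$ is nondecreasing in $\gs_s$, so Gaussian integration by parts (in the distributional sense) gives $\sum_s\E[\partial_{\gs_s}\widehat F(U_s)]=\sum_s\E[\gs_s\widehat F(U_s)]\le\E\sup_{F\in\cF}\sum_s\gs_sF(U_s)=O(\sqrt{t\,d_{\cF}\log(et)})$. Hence only about a $\sqrt{d_{\cF}\log(et)/t}$ fraction of coordinates is sensitive on average.

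Even granting such a bound, you pass from the average coordinate to the newest one by relabeling $U_t$ as a uniformly random one of $U_1,\ldots,U_t$. That requires the weighted points $(U_s,Z_s)$ to be exchangeable, and they are not, since each $Z_s$ is chosen after seeing $U_s$ and the history. You flag this as the delicate part, but it is the substance of the theorem, and no argument is given.

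The same gap sits inside your be-the-leader step. Because the perturbation grows with $t$, the telescoping leaves the term $\sum_t\theta\gs_tF^\star_{t+1}(U_t)$. This is not a Gaussian-process supremum, so Massart's lemma does not bound it. Its trivial bound is of order $\theta T$, and its mean is $\theta$ times the sensitivity of the leader to its newest coordinate, so it also needs the missing stability lemma.

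The approximate-oracle paragraph also fails as written. Paying $\xi$ per round in the be-the-leader telescoping adds $\xi T$, which for $\xi=1$ is linear in $T$ and cannot be absorbed into $C_0$. Moreover, a $\xi$-approximate oracle may return any near-maximizer. The law of its answer is therefore not obtained from the exact leader by shifting one Gaussian coordinate by $1+\xi$, and the Gaussian-shift argument says nothing about it.

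A workable route keeps exact leaders throughout the analysis. It charges the slack only on the event that some $\xi$-near-optimal function disagrees with the exact leader at the fresh point $U_t$. This is a near-tie event at scale $\xi\le1\ll\theta$, so it must again be controlled by the stability estimate at a fresh point, the step your proposal does not supply.
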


This is Theorem~2.5 of \citet{okoroafor2026sample}, which turns the
expected-regret bound of \citet{buzaglo2026oracle} into a high-probability
one and allows approximate oracles.

\begin{remark}[Approximate oracles]
\label{rem:approx}
All our upper bounds hold when the oracle is only $\xi$-approximate with
$\xi\le1$, with larger absolute constants.  In \cref{sec:upper}, the oracle
is used only by the pair player, whose regret bound is \cref{thm:hybrid},
and the rest of the analysis uses only its draws.  The rounding of
\cref{sec:det} makes no oracle calls.  In \cref{sec:membership}, the effect
of the slack on each trained hypothesis is discussed in the paragraph that
describes the algorithms.
The slack is additive on the unnormalized weighted sum, so it is small
compared with the sums that the algorithms optimize.
\end{remark}

\paragraph{Uniform convergence with predictable multipliers.}
The batch analysis uses two uniform deviation bounds; their proofs are in
\cref{app:uniform}.  In the first, the multipliers may depend on earlier
samples, which is the situation after running an online algorithm.

\begin{lemma}[Predictable multipliers]
\label{lem:predictable}
Let $W_t=(X_t,Y_t,R_t)$, $t\le n$, be independent, with
$(X_t,Y_t)\sim\cD$ and $R_t$ uniform on $[0,1]$ independent of $(X_t,Y_t)$.
For each $t$, let $z_t:\cX\times\{-1,+1\}\times[0,1]\to[0,1]$ be determined
by $W_1,\ldots,W_{t-1}$ and by randomness independent of $W_t,\ldots,W_n$.
There is an absolute constant $C$ such that, with probability at least
$1-\zeta$,
\[
  \sup_{g\in\cG}\ \Bigl|\frac1n\sum_{t=1}^n\Bigl(g(X_t)z_t(W_t)
  -\E_W\bigl[g(X)z_t(W)\bigr]\Bigr)\Bigr|
  \le C\sqrt{\frac{(d_G+1)\log(en)+\log(2/\zeta)}{n}},
\]
where in each expectation $z_t$ is held fixed and $W=(X,Y,R)$ is a fresh
draw.
\end{lemma}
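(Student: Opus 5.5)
We will prove \cref{lem:predictable} with a ghost-sample symmetrization that respects the predictable structure, followed by a finite union over the restriction of $\cG$ to the combined sample and Freedman's inequality along the filtration.

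\textbf{Reduction to a single group.} For fixed $g$, the increments
\[
D_t(g)=g(X_t)z_t(W_t)-\E_W[g(X)z_t(W)]
\]
form a martingale difference sequence with respect to $\mathcal{F}_t=\sigma(W_1,\ldots,W_t,\text{ext. randomness})$. This holds because $z_t$ is $\mathcal{F}_{t-1}$-measurable and $W_t$ is independent of $\mathcal{F}_{t-1}$. The increments lie in $[-1,1]$, so Azuma gives a deviation of $\sqrt{2\log(2/\zeta)/n}$ for each fixed $g$. The whole difficulty is uniformity over the infinite class $\cG$. The observed points $X_1,\ldots,X_n$ are not enough for a union bound, because the expectation term depends on $g$ everywhere and not only on the sample.

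\textbf{Symmetrization with a tangent sequence.} Let $W'_1,\ldots,W'_n$ be an independent copy, so each $W'_t$ is independent of everything else. Write $S_n(g)=\frac1n\sum_t g(X_t)z_t(W_t)$ and $S'_n(g)=\frac1n\sum_t g(X'_t)z_t(W'_t)$, using the \emph{same} $z_t$, which depends only on $W_{<t}$.

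For any fixed $g$ (possibly chosen from $W$ and the external randomness), the terms $g(X'_t)z_t(W'_t)$ are conditionally independent given $W$, have conditional means $\E_W[g(X)z_t(W)]$, and lie in $[0,1]$. Hoeffding therefore gives
\[
|S'_n(g)-\E S'_n(g)\mid W|\le\sqrt{\log(4)/(2n)}\cdot O(1)
\]
with probability at least $1/2$ once $n$ exceeds a constant. The standard argument, applied to a $g$ nearly attaining the supremum, then yields
\[
\Pr\Bigl(\sup_g|S_n(g)-\E S|>\tau\Bigr)\le 2\Pr\Bigl(\sup_g|S_n(g)-S'_n(g)|>\tau/2\Bigr)
\]
for $\tau\gtrsim 1/\sqrt n$.

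\textbf{Union bound on the double sample.} Condition on the $2n$ inputs $X_1,\ldots,X_n,X'_1,\ldots,X'_n$. By Sauer--Shelah, the map $g\mapsto(g(X_t),g(X'_t))_t$ takes at most $\Pi_{\cG}(2n)\le(2en/d_G)^{d_G}$ values.

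For a fixed pattern, consider
\[
\sum_t\bigl(g(X_t)z_t(W_t)-g(X'_t)z_t(W'_t)\bigr).
\]
This must be shown to be a martingale with bounded increments in a suitable filtration, even after conditioning on all $X$'s. This is the delicate step: $z_t$ depends on $W_{<t}$, including the $X_s$, so conditioning on all inputs up front breaks the martingale property. The device I would use is an exchangeability/tangent-sequence swap. Introduce independent Rademacher signs $\sigma_t$ that swap $W_t\leftrightarrow W'_t$ only for evaluation at step $t$, keeping $z_t$ a function of the unswapped $W_{<t}$. Then $(W_t,W'_t)$ is exchangeable given the past, so the difference has the same law as
\[
\sum_t\sigma_t\bigl(g(X_t)z_t(W_t)-g(X'_t)z_t(W'_t)\bigr).
\]
Taking the filtration $\mathcal{G}_t=\sigma(\text{all }X,X',\ W_{\le t},W'_{\le t},\sigma_{\le t})$ keeps the conditioning compatible. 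For a fixed pattern, the sum is then a martingale in the signs with increments bounded by $2$, so Azuma gives a tail of $2\exp(-n\tau^2/32)$.

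Union-bounding over the $\Pi_{\cG}(2n)$ patterns and setting the total failure probability to $\zeta/2$ gives
\[
\tau=C\sqrt{\bigl((d_G+1)\log(en)+\log(2/\zeta)\bigr)/n}.
\]
The $+1$ absorbs the $\log 2$ from the doubling and covers the case $d_G=0$.

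\textbf{Main obstacle.} I expect the main obstacle to be justifying the sign-swap step. If that justification is too fragile, the fallback is a sequential-Rademacher / chaining argument over a tree, which gives the same $\sqrt{d_G\log n/n}$ rate at the cost of a worse constant.
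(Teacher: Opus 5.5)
The gap is the sign-swap step, which is where all of the difficulty you identified ends up. Exchangeability of $(W_t,W'_t)$ given the past only makes each increment $d_t(g)=g(X_t)z_t(W_t)-g(X'_t)z_t(W'_t)$ conditionally symmetric. That is why a single $g$ gives a martingale. It does not make $g\mapsto\sum_td_t(g)$ equal in law to $g\mapsto\sum_t\sigma_td_t(g)$ when the signs are independent and $z_t$ is still computed from the unswapped $W_{<t}$. Exchanging $W_s$ and $W'_s$ also changes every later multiplier $z_t$, $t>s$. So an honest swap yields a process whose multipliers follow the signs, while in your version they are decoupled from them.

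The identity already fails for $n=2$. Take $\cD_X$ uniform on $\{a,b\}$, $z_1\equiv1$, and $z_2\equiv\mathbf 1\{X_1=a\}$ (constant in its argument). Then $\sum_td_t(\mathbf 1_{\{a\}})$ never equals $-2$, because $d_1(\mathbf 1_{\{a\}})=-1$ forces $X_1=b$ and hence $d_2=0$. By contrast, $\sum_t\sigma_td_t(\mathbf 1_{\{a\}})=-2$ with positive probability.

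Once you condition on all ordered inputs and use external signs, as your filtration $\mathcal{G}_t$ does, the martingale statement is trivial. The proof therefore rests entirely on this identity. What you would actually need is a decoupling inequality for tangent sequences in the norm $\sup_g|\cdot|$, which you neither state nor prove. Your fallback does not fill the hole as stated. Worst-case sequential Rademacher complexity is governed by Littlestone-type dimensions, not by $d_G$, and these are infinite for thresholds.

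You can close the gap by propagating the swap instead of decoupling it. Condition on the unordered pairs $\{W_t,W'_t\}$ and on the independent randomness used by the $z_t$. Then let i.i.d.\ signs $\sigma_t$ decide which element of the $t$-th pair is $W_t$. Each $z_t$ becomes a function of $\sigma_{<t}$, and $\sum_td_t(g)=\sum_t\sigma_tc_t(g;\sigma_{<t})$ with $|c_t|\le1$, a martingale in the signs. Because the two candidate points at each step are fixed, $c_t$ depends on $g$ only through its values on the $2n$ points of the pairs. Azuma and a union bound over $\Pi_{\cG}(2n)$ patterns then finish the argument, together with your ghost-sample step.

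The paper takes a route that never symmetrizes the adaptive sum. It fixes a data-independent $1/n$-cover $\cG_{1/n}$ of $\cG$ in $L_1(\cD_X)$ with $\log|\cG_{1/n}|\le C(d_G+1)\log(en)$, and applies Azuma to each cover element. Using $0\le z_t\le1$, it bounds the error of replacing $g$ by its cover element $g'$ by $1/n$ plus the empirical disagreement frequency $\frac1n\sum_t\mathbf 1\{g(X_t)\ne g'(X_t)\}$. This frequency does not involve $z_t$, so ordinary i.i.d.\ uniform convergence over the disagreement sets handles it. The cover separates the adaptivity, handled by a fixed finite union bound, from the uniformity over $\cG$, handled by non-adaptive VC theory.
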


\begin{lemma}[Uniform convergence for comparators]
\label{lem:comparators}
For $n$ independent samples from $\cD$, with probability at least $1-\zeta$,
\[
  \sup_{g\in\cG,\,h\in\cH}\ \Bigl|\frac1n\sum_{t=1}^ng(X_t)\,\ell(h(X_t),Y_t)
  -L(h;g)\Bigr|
  \le C\sqrt{\frac{d\log(en)+\log(2/\zeta)}{n}}.
\]
\end{lemma}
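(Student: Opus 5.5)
The statement to prove is \cref{lem:comparators}: a uniform deviation bound for the function class $\{(x,y)\mapsto g(x)\ell(h(x),y): g\in\cG, h\in\cH\}$.

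The plan is standard symmetrization plus a growth-function bound.

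\textbf{Step 1: finitely many behaviors.} Since $\ell$ takes only finitely many values, I would first observe that for fixed $y$, the value $g(x)\ell(h(x),y)$ is a function of the pair $(g(x),h(x))\in\cA$ and of $y$. Hence on any $m$ points $(x_i,y_i)$, the vector $\bigl(g(x_i)\ell(h(x_i),y_i)\bigr)_i$ is determined by the labelings $(g(x_i))_i$ and $(h(x_i))_i$. By the argument of \cref{lem:pair-vc}, the number of distinct vectors is at most
\[
  \Pi_{\cG}(m)\,\Pi_{\cH}(m)\le (em/d_G)^{d_G}(em/d_H)^{d_H}\le (2em/d)^{d},
\]
again using concavity. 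If $d_G$ or $d_H$ is zero, that class has at most one labeling and the corresponding factor is trivial.

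\textbf{Step 2: symmetrization.} I would use a ghost sample of size $n$. For $n\ge 2\log(4/\zeta)/\eta^2$-type conditions, the standard symmetrization inequality bounds the deviation probability by twice the probability that the empirical means on the two samples differ by more than $\eta/2$. Conditioning on the $2n$ points and swapping coordinates with Rademacher signs, each fixed behavior vector has values in $[0,1]$. Hoeffding's inequality then gives a tail of $2\exp(-n\eta^2/8)$. A union bound over at most $\Pi_{\cG}(2n)\Pi_{\cH}(2n)$ behaviors gives failure probability
\[
  4\,(4en/d)^{d}\exp(-n\eta^2/8).
\]
Setting this equal to $\zeta$ and solving yields
\[
  \eta=C\sqrt{\bigl(d\log(en)+\log(2/\zeta)\bigr)/n}
\]
for an absolute constant $C$, absorbing $\log(4e/d)$ into $d\log(en)$ since $d\ge1$.

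Alternatively, one could bound the Rademacher complexity by Massart's finite class lemma, which gives $\sqrt{2d\log(2en/d)/n}$, and then apply McDiarmid's inequality with bounded differences $1/n$. This route is cleaner and avoids the side condition on $n$. I would use it: the expected supremum is at most twice the Rademacher complexity, and McDiarmid adds $\sqrt{\log(2/\zeta)/(2n)}$ for the two-sided bound (apply it to both the supremum of the deviation and of its negation, each at level $\zeta/2$).

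\textbf{Main obstacle.} There is no real obstacle here. The only point needing care is Step 1: the loss composition does not enlarge the number of behaviors beyond the product of growth functions, because $y$ is fixed per point and $\ell$ is a fixed function of $(h(x),y)$.
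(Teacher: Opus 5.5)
Your counting step is the same as the paper's: on any $m$ points the value vector is determined by the labelings by $g$ and $h$, so there are at most $\Pi_{\cG}(m)\Pi_{\cH}(m)$ of them. Your first sketch of Step~2 is also exactly the paper's argument: a ghost sample of size $n$, Hoeffding's inequality over random swaps of matched points, and a union bound over the value vectors on $2n$ points.

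The route you finally commit to is genuinely different. You use symmetrization in expectation, then Massart's finite-class lemma applied conditionally on the sample to the at most $\Pi_{\cG}(n)\Pi_{\cH}(n)$ value vectors on the $n$ sample points, and then McDiarmid's inequality with differences $1/n$. This is equally valid. It gives the bound in the additive form $2\sqrt{2\log(\Pi_{\cG}(n)\Pi_{\cH}(n))/n}+\sqrt{\log(2/\zeta)/(2n)}$, with explicit constants and no side condition on $n$. The paper's ghost-sample argument needs only Hoeffding and a union bound. Its cost is the Chebyshev-type condition $n\eta^2\gtrsim1$ in the symmetrization step, which is harmless because the claim is trivial otherwise. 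It also matches the classical VC argument the paper invokes elsewhere, for the disagreement sets in \cref{lem:predictable}.

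One small point on Step~1: the form $(2em/d)^d$ of the Sauer--Shelah bound requires $m\ge\max\{d_G,d_H\}$. The paper avoids this by using $\Pi_{\cG}(m)\Pi_{\cH}(m)\le(em)^d$, which holds for all $m\ge1$. In any case the lemma is trivial when $n<d$, since the deviations are at most $1$ and the right side then exceeds $C$.
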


\begin{remark}[Order on inputs]
\label{rem:order}
Our deterministic predictor (\cref{sec:det}) uses a computable total order on
$\cX$ whose order intervals are measurable, together with computable
equality tests; the lexicographic order on $\R^n$ suffices.  The order is
used only to partition the input space into cells of small probability.  The
randomized guarantees do not use it.
\end{remark}

\section{Oracle-efficient multi-group learning}
\label{sec:upper}

This section gives the randomized algorithm of \cref{thm:main}(a), proves
its sample complexity, and shows that this sample complexity is optimal.
The algorithm runs an online learner over the training samples, one sample
per round, and averages its predictors.  \Cref{cor:conditional} gives a
conditional guarantee on large groups, and \cref{sec:lower} proves the lower
bound of \cref{thm:main}(b), which shows that \cref{thm:batch} is optimal up
to polylogarithmic factors among all learners.  The
pair-player view of multi-group learning follows \citet{deng2024groupwise}.
The pair player is the hybrid learner of \citet{buzaglo2026oracle}, with the
high-probability bound of \citet{okoroafor2026sample}; unlike the pair player
of \citet{deng2024groupwise}, it needs no samples of the input distribution
beyond the training data, and its bound holds uniformly over groups with
high probability.  Long proofs are in \cref{app:upper}.

\paragraph{The online game.}
Consider an online sequence of inputs $X_t$, labels $Y_t$, and the learner's
predictions $\Yhat_t$.  The \emph{payoff} of a pair $(g,h)$ on round $t$ is
\[
  \rho_t(g,h)=g(X_t)\bigl(\ell(\Yhat_t,Y_t)-\ell(h(X_t),Y_t)\bigr),
\]
the amount by which $h$ beats the learner on the input, counted only if the
input is in $g$.  The sum $\sup_{g,h}\sum_t\rho_t(g,h)$ is the learner's
\emph{online multi-group regret}.  The payoff depends on $(g,h)$ only through
the pair's output: $\rho_t(g,h)=\Gamma_t(g(X_t),h(X_t))$ for the payoff table
\[
  \Gamma_t(b,z)=b\bigl(\ell(\Yhat_t,Y_t)-\ell(z,Y_t)\bigr)\in[-1,1],
  \qquad(b,z)\in\cA.
\]
The online algorithm plays a pair player against the learner.  The pair
player is Gaussian follow-the-perturbed-leader over the pair class $\Fp$
(\cref{thm:hybrid}); on each round it receives the input $U_t=(X_t,A_t)$,
for a fresh uniform symbol $A_t\in\cA$, and the feedback $Z_t=\Gamma_t(A_t)$.
The inputs $U_t$ are independent draws from
$\cD_X\otimes\mathrm{Unif}(\cA)$, as the hybrid setting requires.

The random symbol turns the payoff of a pair into the payoff of a single
function in the pair class, up to a term common to all pairs.

\begin{lemma}[Unbiased encoding]
\label{lem:encoding}
For every $\Gamma:\cA\to[-1,1]$, every pair $(g,h)$, and every input $x$, if
$A$ is uniform on $\cA$, then
\[
  \E_A\bigl[\Gamma(A)\,F_{g,h}(x,A)\bigr]
  =\frac12\,\Gamma\bigl(g(x),h(x)\bigr)-\frac14\sum_{a\in\cA}\Gamma(a).
\]
\end{lemma}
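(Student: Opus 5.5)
This identity is a direct computation: expand the expectation over the four symbols and use the definition of $F_{g,h}$ as $2\,\mathbf 1\{\cdot\}-1$. Fix $x$, a pair $(g,h)$, and a table $\Gamma$. Write $a^\star=(g(x),h(x))\in\cA$ for the pair's output at $x$. By \cref{def:pair-class}, for each $a\in\cA$ we have $F_{g,h}(x,a)=2\,\mathbf 1\{a=a^\star\}-1$. This value is $+1$ at exactly one symbol, $a^\star$, and $-1$ at the other three.

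Since $A$ is uniform on the four-element set $\cA$, the expectation is an average:
\[
  \E_A\bigl[\Gamma(A)F_{g,h}(x,A)\bigr]
  =\frac14\sum_{a\in\cA}\Gamma(a)\bigl(2\,\mathbf 1\{a=a^\star\}-1\bigr).
\]
I then split the sum into its two parts. The indicator term contributes $\frac14\cdot2\,\Gamma(a^\star)=\frac12\Gamma(g(x),h(x))$. The constant term contributes $-\frac14\sum_{a\in\cA}\Gamma(a)$. Together these give the claimed formula.

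No step is a real obstacle. The only point to state explicitly is that the subtracted term $\frac14\sum_a\Gamma(a)$ does not depend on $(g,h)$; this is what the later regret argument (\cref{lem:pair-regret}) will use. The hypothesis $\Gamma(a)\in[-1,1]$ is not needed for the identity. It only matters downstream, where it ensures that the feedback $Z_t=\Gamma_t(A_t)$ is a valid input to \cref{thm:hybrid}.
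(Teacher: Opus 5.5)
Your proposal is correct and is essentially the paper's own proof: it writes the expectation as the average over the four symbols of $\Gamma(a)\bigl(2\,\mathbf 1\{(g(x),h(x))=a\}-1\bigr)$ and uses that exactly one symbol equals the pair's output. You are also right that the range condition $\Gamma(a)\in[-1,1]$ plays no role in the identity itself.
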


\begin{proof}
Exactly one symbol $a$ equals $(g(x),h(x))$, so
$\frac14\sum_a\Gamma(a)(2\,\mathbf 1\{(g(x),h(x))=a\}-1)
=\frac12\Gamma(g(x),h(x))-\frac14\sum_a\Gamma(a)$.
\end{proof}

Applied with $\Gamma=\Gamma_t$, the lemma shows that, averaged over the
symbol, the feedback of a function $F_{g,h}$ is $\frac12\rho_t(g,h)$ minus a
term that does not depend on the pair; this term cancels in regret, so regret
over $\Fp$ is regret over pairs.

\paragraph{The voting rule.}
On round $t$, the algorithm draws $M$ independent pairs
$(g_t^{(i)},h_t^{(i)})$, $i\le M$, from the pair player's current
distribution $q_t$ over pairs, before seeing the round's input.  At an input
$x$, let $N_t(x)=\sum_ig_t^{(i)}(x)$ be the number of drawn pairs whose group
contains $x$.  The prediction rule is
\begin{equation}
  \label{eq:vote}
  \phat_t(z\mid x)=
  \begin{cases}
    \dfrac{1}{N_t(x)}\displaystyle\sum_{i=1}^Mg_t^{(i)}(x)\,
      \mathbf 1\bigl\{h_t^{(i)}(x)=z\bigr\}, & N_t(x)>0,\\[2ex]
    \mathbf 1\{z=+1\}, & N_t(x)=0,
  \end{cases}
  \qquad z\in\{-1,+1\}:
\end{equation}
the label of a uniformly random drawn pair whose group contains $x$, or $+1$
if there is none.  Let $\qhat_t$ be the uniform distribution on the drawn
pairs.  For every input $x$ and label $y$,
\begin{equation}
  \label{eq:vote-zero}
  \E_{\Yhat\sim\phat_t(\cdot\mid x)}\ \E_{(g,h)\sim\qhat_t}
  \Bigl[g(x)\bigl(\ell(\Yhat,y)-\ell(h(x),y)\bigr)\Bigr]=0,
\end{equation}
since both terms equal $\frac1M\sum_ig_t^{(i)}(x)\ell(h_t^{(i)}(x),y)$: the
prediction is distributed as the label of a random drawn pair whose group
contains $x$.

\begin{algorithm}[t]
\caption{$\textsc{PairVote}(T,M)$: online multi-group learning with a pair
  oracle}
\label{alg:online}
\begin{algorithmic}[1]
\Require number of rounds $T$; number $M$ of drawn pairs per round
\State start Gaussian follow-the-perturbed-leader over $\Fp$
  (\cref{thm:hybrid}) with horizon $T$
\For{$t=1,\ldots,T$}
  \State draw $M$ pairs $(g_t^{(i)},h_t^{(i)})\sim q_t$, independently
    \Comment{$M$ oracle calls}
  \State receive $X_t$, draw $\Yhat_t\sim\phat_t(\cdot\mid X_t)$
    by~\eqref{eq:vote} with fresh randomness, and observe $Y_t$
  \State draw $A_t$ uniformly from $\cA$, and give
    $\bigl((X_t,A_t),\,\Gamma_t(A_t)\bigr)$ to the pair player
  \State make the pair player's round draw $F_t\sim q_t$
    \Comment{one oracle call}
\EndFor
\State \Return the drawn pairs, which define $\phat_1,\ldots,\phat_T$
\end{algorithmic}
\end{algorithm}

\begin{algorithm}[t]
\caption{$\textsc{OracleMultiGroup}(\varepsilon,\delta)$: oracle-efficient
  multi-group learning}
\label{alg:batch}
\begin{algorithmic}[1]
\Require accuracy $\varepsilon$; confidence $\delta$; weighted ERM oracle
  for $\Fp$
\State set $n$ as in \cref{thm:batch}
\State run $\textsc{PairVote}(n,n)$ on $n$ independent samples, revealing
  each sample only on its round
\State \Return the randomized predictor
  $\Qbar=\frac1n\sum_t\phat_t$
\end{algorithmic}
\end{algorithm}

The feedback $\Gamma_t(A_t)$ depends on the drawn pairs, the prediction, the
label, and $A_t$, but not on the round draw $F_t$, as the hybrid setting
requires; the round draw is not otherwise used.  A prediction from the output
$\Qbar_x$ of \cref{alg:batch} is drawn by choosing $t$ uniformly from $[n]$
and drawing from $\phat_t(\cdot\mid x)$, which takes $M$ evaluations of
stored pairs and no oracle calls.  The main result of the section is the
following guarantee for this output.

\begin{theorem}[Batch guarantee]
\label{thm:batch}
Run \cref{alg:online} with $T=n\ge2$ and $M=n$ on $n$ independent samples
from $\cD$, and let $\Qbar_x(z)=\frac1n\sum_{t=1}^n\phat_t(z\mid x)$ be the
average of its predictors.  With probability at least $1-\delta$,
\[
  \Err(\Qbar)\le C\log^2(2n)\sqrt{\frac{d\log(en)+\log(8/\delta)}{n}} .
\]
Consequently, $n=\wtO((d+\log(1/\delta))/\varepsilon^2)$ samples suffice for
$\Err(\Qbar)\le\varepsilon$ with probability at least $1-\delta$, with
$n(n+1)=\wtO((d+\log(1/\delta))^2/\varepsilon^4)$ weighted ERM calls over
$\Fp$.
\end{theorem}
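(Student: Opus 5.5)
We first establish an online multi-group regret bound for \cref{alg:online}, and then convert it to the batch guarantee using \cref{lem:predictable,lem:comparators}.

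\textbf{Online regret.} Fix $(g,h)$ and write
\[
\sum_t\rho_t(g,h)=\sum_t\bigl(\rho_t(g,h)-\E_{(g',h')\sim q_t}\rho_t(g',h')\bigr)+\sum_t\E_{q_t}\rho_t.
\]

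For the first sum we apply \cref{thm:hybrid} to $\Fp$, which has VC dimension at most $6d$ by \cref{lem:pair-vc}. This bounds $\sup_F\sum_tZ_t(F(U_t)-\bF_t(U_t))$ with probability $1-\zeta$. To pass from the realized symbol $A_t$ to its average, we use a martingale (Azuma) step. The difference $Z_t(F(U_t)-\bF_t(U_t))$ minus its $A_t$-conditional expectation is a bounded martingale difference. By \cref{lem:encoding}, that conditional expectation equals $\tfrac12(\rho_t(g,h)-\E_{q_t}\rho_t)$, because the common term cancels. For the bound to hold uniformly over pairs, we take a union bound over the finitely many behaviours of pairs on the sample, which costs $\log\Pi_{\cG}\Pi_{\cH}(n)\lesssim d\log(en)$. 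Alternatively, we absorb the averaging over $A_t$ into the hybrid input distribution and bound the deviation via the growth function of $\Fp$ on the $n$ points.

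For the second sum, identity \eqref{eq:vote-zero} makes the payoff exactly zero against $\qhat_t$. It therefore remains to compare $q_t$ with $\qhat_t$ on the round-$t$ input. Since the $M$ pairs are drawn independently of $X_t$ and $Y_t$, both $\E_{\qhat_t}g(x)\ell(h(x),y)$ and $\E_{\qhat_t}g(x)$ concentrate around their $q_t$-counterparts at rate $1/\sqrt M$ for the realized $x=X_t$. The learner's loss term $\E g(x)\ell(\Yhat,y)$ is $\E_q g(x)$ times a quantity in $[0,1]$. Summing these errors over rounds with Hoeffding and a union bound gives $T\sqrt{\log(T/\zeta)/M}=\wtO(\sqrt T)$ when $M=T$. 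Together, the two sums give online regret $\wtO(\sqrt{T(d+\log(1/\delta))})$ uniformly over pairs.

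\textbf{Batch conversion.} Let $\bar\ell_t(x,y)=\sum_z\phat_t(z\mid x)\ell(z,y)$. This is a $[0,1]$-valued function fixed before round $t$. We can realize the draw of $\Yhat_t$ through a uniform $R_t$, so that $\ell(\Yhat_t,Y_t)=z_t(W_t)$ with $z_t$ predictable. Then $L(\Qbar;g)=\frac1n\sum_t\E_W[g(X)z_t(W)]$, and \cref{lem:predictable} controls
\[
\sup_g\Bigl|\frac1n\sum_t g(X_t)\ell(\Yhat_t,Y_t)-L(\Qbar;g)\Bigr|
\]
at rate $\sqrt{(d_G\log n+\log(1/\delta))/n}$. \Cref{lem:comparators} likewise controls the empirical comparator term $\frac1n\sum_tg(X_t)\ell(h(X_t),Y_t)$ against $L(h;g)$. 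Combining these with the online bound divided by $n$ gives the stated bound, with the $\log^2(2n)$ factor inherited from $\Reg$. A union bound over four events, each of probability $\delta/4$, produces the $\log(8/\delta)$ term. The oracle count is $n$ rounds, each using $M+1=n+1$ calls.

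\textbf{Main obstacle.} The main obstacle is the sampling-error step for the voting rule. The vote normalizes by $N_t(x)$, so the prediction is not an average over i.i.d.\ draws. The plan is to avoid ever controlling $\phat_t$ against a population vote. Instead, we write
\[
\E_{\Yhat}\E_{q_t}\bigl[g(x)\ell(\Yhat,y)\bigr]=\E_{q_t}[g(x)]\cdot\E_{\Yhat}\ell(\Yhat,y).
\]
Its $\qhat_t$-version cancels exactly with $\E_{\qhat_t}g\,\ell(h,y)$, and each remaining discrepancy is a plain empirical mean of $M$ bounded i.i.d.\ terms evaluated at the independent point $X_t$.
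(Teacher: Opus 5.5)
Your proposal follows the paper's proof essentially step for step. It splits the online regret into the pair player's regret against $q_t$, handled by the hybrid bound over $\Fp$, \cref{lem:encoding}, and Azuma with a union bound over the labelings of the inputs, plus the mixture's own payoff, handled by the voting identity and Hoeffding over the $M$ drawn pairs. It then converts to batch through \cref{lem:predictable,lem:comparators}.

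One step is missing. \eqref{eq:vote-zero} holds only in expectation over $\Yhat\sim\phat_t(\cdot\mid X_t)$, whereas the sum $\sum_t\E_{q_t}\rho_t$ in your decomposition uses the realized $\Yhat_t$, and so does the empirical learner term in your batch step via $R_t$. After bounding the conditional expectation over the prediction by $O(\sqrt{\log(T/\zeta)/M})$ per round, you still need an Azuma step for the deviations from it. These increments are bounded by $2$, and the step uses that $Y_t$ does not depend on $\Yhat_t$. It adds $C\sqrt{T\log(1/\zeta)}$, as in \cref{lem:voting}, and does not change the rate.
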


\paragraph{Proof outline.}
The proof has two steps.  The first bounds the online multi-group regret of
\cref{alg:online} (\cref{thm:online}).  \Cref{sec:online} combines \cref{lem:encoding},
which turns the regret bound of the hybrid learner over $\Fp$ into a regret
bound over pairs, with the voting rule, which keeps the payoff of the pair
player's mixture near zero.  The second step, in
\cref{sec:batch}, converts the online regret into the batch guarantee: each
round's predictor is fixed before that round's sample is drawn, so the
uniform deviation bounds of \cref{lem:predictable,lem:comparators} replace
the population group losses by realized ones.  The full proofs of
\cref{thm:online,thm:batch} are in \cref{app:upper}.

\subsection{Online regret}
\label{sec:online}

The online multi-group regret of \cref{alg:online} is
$\wtO(\sqrt{T(d+\log(1/\zeta))})$ when the inputs are independent.

\begin{theorem}[Online multi-group regret]
\label{thm:online}
Suppose the inputs $X_t$ are independent draws from $\cD_X$ and each label
$Y_t$ is chosen as a function of the past and $X_t$, and of randomness
independent of the round's prediction, symbol, and round draw.  Let
$T\ge2$, $M\ge1$, and $\zeta\in(0,1)$.  With probability at least $1-\zeta$,
\cref{alg:online} satisfies
\[
  \sup_{g\in\cG,\,h\in\cH}\sum_{t=1}^T\rho_t(g,h)
  \le2\Reg\Bigl(T,6d,\frac\zeta4\Bigr)
  +C\sqrt{T\bigl(d\log(eT)+\log(1/\zeta)\bigr)}
  +CT\sqrt{\frac{\log(32T/\zeta)}{M}} .
\]
With $M=T$, the right side is $\wtO(\sqrt{T(d+\log(1/\zeta))})$.  The
algorithm makes $M+1$ weighted ERM calls over $\Fp$ per round, each on at
most $T$ points.
\end{theorem}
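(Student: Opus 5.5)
The plan is to decompose the realized regret of each pair into three pieces: the pair player's regret against its own mixture, the gap between that mixture and the drawn pairs, and the gap between the realized prediction and its conditional mean. Fix $(g,h)$ and write
\[
\sum_t\rho_t(g,h)=\sum_t\Bigl(\rho_t(g,h)-\E_{(g',h')\sim q_t}\rho_t(g',h')\Bigr)+\sum_t\E_{q_t}\rho_t-\sum_t\bar\rho_t+\sum_t\bar\rho_t,
\]
where $\bar\rho_t=\E_{\Yhat\sim\phat_t}\E_{\qhat_t}[g'(X_t)(\ell(\Yhat,Y_t)-\ell(h'(X_t),Y_t))]$ is the payoff of the empirical mixture $\qhat_t$ against the prediction rule. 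By \eqref{eq:vote-zero}, $\bar\rho_t=0$ for every $X_t,Y_t$. Note that $\rho_t$ involves the realized $\Yhat_t$, whereas $\E_{q_t}\rho_t$ and $\bar\rho_t$ involve different averages. I would therefore handle the sampling of $\Yhat_t$ inside the payoff, which is common to all pairs, with a martingale argument.

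\emph{Step 1: regret over $\Fp$ gives regret over pairs.} Apply \cref{thm:hybrid} with $\cF=\Fp$, using $\VC(\Fp)\le6d$ from \cref{lem:pair-vc}, inputs $U_t=(X_t,A_t)$ and feedback $Z_t=\Gamma_t(A_t)$. With probability at least $1-\zeta/4$,
\[
\sup_{g,h}\sum_tZ_t\bigl(F_{g,h}(U_t)-\bF_t(U_t)\bigr)\le\Reg(T,6d,\zeta/4).
\]
By \cref{lem:encoding}, the conditional expectation over $A_t$, given $X_t,Y_t,\Yhat_t$ and the past, of $Z_t(F_{g,h}(U_t)-\bF_t(U_t))$ equals $\tfrac12\bigl(\rho_t(g,h)-\E_{q_t}\rho_t\bigr)$; the common term cancels. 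I must replace the realized symbol term by its conditional mean uniformly over $(g,h)$. The differences form a martingale difference sequence bounded by $2$, indexed by $(g,h)$. Since each term depends on $(g,h)$ only through $(g(X_t),h(X_t))$, I would use a union bound over the $\Pi_\cG(T)\Pi_\cH(T)$ patterns on $X_1..X_T$. The patterns are determined by the inputs, but the inputs are revealed online, so the clean route is a sequential-cover argument, or symmetrization as in \cref{lem:predictable}. Azuma's inequality plus this union bound gives $C\sqrt{T(d\log(eT)+\log(1/\zeta))}$. Multiplying by $2$ yields the $2\Reg$ term and the middle term.

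\emph{Step 2: the drawn pairs track $q_t$.} The quantity $\E_{q_t}\rho_t-\E_{\qhat_t}\rho_t$, evaluated at the realized $\Yhat_t$, has a troublesome dependence, because $\Yhat_t$ depends on $\qhat_t$. Instead I split $\E_{q}[g'(x)\ell(\Yhat,y)]-\E_{q}[g'(x)\ell(h'(x),y)]$ into two parts, each an average of a $[0,1]$ function of the pair at the fixed point $(X_t,Y_t)$ for the given label value $\Yhat\in\{\pm1\}$. There are only two label values and two possible $y$. The $M$ draws are independent of $X_t$ given the past, so conditionally on $X_t$, Hoeffding bounds each deviation by $\sqrt{\log(32T/\zeta)/M}$ for all four functions, with a union bound over rounds. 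Summed over $T$ rounds this gives $CT\sqrt{\log(32T/\zeta)/M}$. The remaining gap, between the realized $\Yhat_t$ and its mean under $\phat_t$, enters only through $\E_{\qhat_t}g'(X_t)\ell(\Yhat_t,Y_t)$. This is a bounded martingale difference independent of $(g,h)$, so Azuma's inequality absorbs it into the middle term.

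\emph{Main obstacle.} The main difficulty is Step 1's uniformity: converting realized-symbol regret into expected-symbol regret simultaneously for all pairs in an infinite class. Labels may depend adaptively on the past, so a naive union bound over patterns on the realized inputs needs care. I would first try conditioning on all inputs $X_{1..T}$ and symbols. The inputs are independent of the algorithm, so fixing them in advance is legitimate, and the pattern set is then fixed, with at most $(eT/d)^{d}$ patterns by Sauer--Shelah. Azuma is then applied to each pattern under the filtration generated by the remaining randomness, and a union bound finishes. Finally, allocate $\zeta/4$ to each of the four events. The oracle count is $M+1$ per round, since each draw is one call by \cref{thm:hybrid}.
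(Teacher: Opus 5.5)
Your proposal follows the paper's proof. The paper uses the same split $\sup_s\sum_t\rho_t(s)\le\sup_s\sum_t(\rho_t(s)-\E_{q_t}\rho_t)+\sum_t\E_{q_t}\rho_t$ and the same ingredients:
\begin{itemize}
\item the hybrid bound over $\Fp$ with $\VC(\Fp)\le6d$, transferred to pairs by \cref{lem:encoding};
\item a symbol martingale made uniform over pairs by a union bound over the at most $\Pi_{\cG}(T)\Pi_{\cH}(T)\le(eT)^d$ labelings of $X_1,\dots,X_T$;
\item Hoeffding's inequality for the drawn pairs against $q_t$, uniformly over the finitely many values of the prediction and the label;
\item Azuma's inequality for the randomness of $\Yhat_t$, using \eqref{eq:vote-zero} and the independence of $Y_t$ from the prediction.
\end{itemize}
Two of your choices differ from the paper only cosmetically: you merge the paper's two symbol martingales into a single one indexed by pairs, and you swap $q_t$ for $\qhat_t$ before, rather than after, averaging over $\Yhat_t$.

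One step, as you wrote it in the last paragraph, would fail. There you propose to condition on the inputs \emph{and the symbols}. But the symbols $A_t$ are exactly the randomness that makes $Z_t\bigl(F_{g,h}(U_t)-\bF_t(U_t)\bigr)-\E_{A_t}\bigl[Z_t\bigl(F_{g,h}(U_t)-\bF_t(U_t)\bigr)\bigr]$ a martingale difference. Once the symbols are fixed, these terms need not be centered, and Azuma's inequality does not apply.

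The fix is to condition only on $X_1,\dots,X_T$. Then:
\begin{itemize}
\item the set of labelings is fixed;
\item the $X$'s are independent of the symbols, and each table $\Gamma_t$ is chosen without seeing $A_t$, so for each fixed labeling the terms are still martingale differences in the symbols;
\item Azuma's inequality plus a union bound over the labelings gives $C\sqrt{T(d\log(eT)+\log(1/\zeta))}$.
\end{itemize}
This is what the paper does. It also makes the sequential-cover and symmetrization alternatives you mention unnecessary.

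The justification you give, that the inputs are independent of the algorithm, is not the relevant fact; the algorithm's choices do depend on the inputs. What matters is that the inputs are independent of the symbols.
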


\paragraph{Proof outline.}
The online multi-group regret splits into the regret of the pair player
against its mixture $q_t$ and the mixture's own payoff,
$\sum_t\E_{(g,h)\sim q_t}\rho_t(g,h)$.  For the first, \cref{lem:encoding}
shows that the pair player's regret over pairs is twice the hybrid learner's
regret over $\Fp$, up to two martingales in the random symbols $A_t$; one of
them must be controlled uniformly over pairs, which costs the number of
labelings of the inputs by $\cG$ and $\cH$ (\cref{lem:pair-regret}).
\Cref{thm:hybrid} with $\VC(\Fp)\le6d$ (\cref{lem:pair-vc}) bounds the hybrid
regret.  For the second, \eqref{eq:vote-zero} makes the payoff against the
drawn pairs exactly zero, and the drawn pairs estimate the mixture's group
memberships and labels at the input within $\sqrt{\log(T/\zeta)/M}$
(\cref{lem:voting}).  The full proof is in \cref{app:upper}.

\subsection{From online to batch}
\label{sec:batch}

This subsection proves \cref{thm:batch} from \cref{thm:online}: averaging the
online predictors gives a randomized predictor whose multi-group excess risk
is the online regret divided by the number of samples, plus a uniform
deviation term.

\paragraph{Proof outline of \cref{thm:batch}.}
By linearity, $L(\Qbar;g)-L(h;g)$ is the average over $t$ of the population
group losses of $\phat_t$ minus that of $h$.  The predictor $\phat_t$ is
fixed before the $t$-th sample is drawn, so \cref{lem:predictable}, with the
multiplier $z_t$ equal to the loss of the label drawn from $\phat_t$,
replaces each population group loss of the learner by its realized loss,
uniformly over $g$.  \Cref{lem:comparators} does the same for the
comparators.  The difference of the realized losses is the online
multi-group regret divided by $n$, which \cref{thm:online} bounds.  The full
proof is in \cref{app:upper}.

\begin{corollary}[Conditional guarantee on large groups]
\label{cor:conditional}
Let $\gamma\in(0,1]$.  With
$n=\wtO((d+\log(1/\delta))/(\gamma^2\varepsilon^2))$ samples and
polynomially many weighted ERM calls over $\Fp$, \cref{alg:batch} run with
accuracy $\gamma\varepsilon$ outputs, with probability at least $1-\delta$, a
randomized predictor $\Qbar$ with $\Err_{\mathrm{cond}}(\Qbar;g)\le\varepsilon$
for every $g\in\cG$ with $\Pr\{g(X)=1\}\ge\gamma$.
\end{corollary}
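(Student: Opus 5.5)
The corollary follows directly from \cref{thm:batch}, with the target accuracy rescaled by $\gamma$; no new algorithmic ingredient is needed.

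First I invoke \cref{thm:batch}. Set $\varepsilon'=\gamma\varepsilon\in(0,1)$. By \cref{thm:batch}, it is enough to choose $n$ so that
\[
  C\log^2(2n)\sqrt{\frac{d\log(en)+\log(8/\delta)}{n}}\le\gamma\varepsilon .
\]
Then, with probability at least $1-\delta$, the output $\Qbar$ of \cref{alg:batch} run with accuracy $\gamma\varepsilon$ satisfies $\Err(\Qbar)\le\gamma\varepsilon$. The required $n$ is found by the usual inversion of $n\ge a\log^5(n)$-type conditions. Taking
\[
  n=\Theta\Bigl(\frac{d+\log(1/\delta)}{\gamma^2\varepsilon^2}\,\log^5\frac{d+\log(1/\delta)}{\gamma\varepsilon\delta}\Bigr)
\]
with a large enough constant works, and this is $\wtO((d+\log(1/\delta))/(\gamma^2\varepsilon^2))$. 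The number of oracle calls is $n(n+1)$, which is polynomial.

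Second, I pass from the unconditional bound to the conditional one. Fix $g\in\cG$ with $p_g=\Pr\{g(X)=1\}\ge\gamma$. For every $h\in\cH$, the definition of $\Err$ gives $L(\Qbar;g)-L(h;g)\le\Err(\Qbar)$. Taking the supremum over $h$ yields $L(\Qbar;g)-\inf_hL(h;g)\le\Err(\Qbar)\le\gamma\varepsilon$. Dividing by $p_g\ge\gamma$ gives $\Err_{\mathrm{cond}}(\Qbar;g)\le\gamma\varepsilon/p_g\le\varepsilon$. This is the inequality $\Err_{\mathrm{cond}}(Q;g)\le\Err(Q)/p_g$ noted after \cref{def:excess}.

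This bound holds simultaneously for all such $g$ on the single event of \cref{thm:batch}, so no union bound over groups is needed. The only mildly delicate step is the logarithmic inversion for $n$. The polylogarithmic factors it produces depend on $\gamma$ as well as on $d$, $1/\varepsilon$ and $1/\delta$, and the $\wtO$ is read accordingly.
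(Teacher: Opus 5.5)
Your proposal is correct and follows the paper's proof: apply \cref{thm:batch} with accuracy $\gamma\varepsilon$ to get $\Err(\Qbar)\le\gamma\varepsilon$, then divide by $p_g\ge\gamma$ using $\Err_{\mathrm{cond}}(\Qbar;g)\le\Err(\Qbar)/p_g$. Your extra detail on inverting the logarithmic condition for $n$, and the observation that the polylogarithmic factors also depend on $\gamma$, are consistent with the paper's statement.
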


\begin{proof}
By \cref{thm:batch}, $\Err(\Qbar)\le\gamma\varepsilon$, and
$\Err_{\mathrm{cond}}(\Qbar;g)\le\Err(\Qbar)/\gamma$ for every such group.
\end{proof}

\subsection{Lower bound}
\label{sec:lower}

This subsection proves \cref{thm:main}(b) by reducing multi-group learning to
agnostic learning.  For a randomized predictor $Q$ and a classifier $c$,
write $L(Q)=L(Q;\mathbf 1_{\cX})$ and $L(c)=L(c;\mathbf 1_{\cX})$ for the
risks on the whole space, where $L(c;g)$ is defined for any classifier $c$
as in \cref{def:excess}.  We restate the result.

\begin{restatement}{\Cref{thm:main}(b)}
There is an absolute constant $c>0$ such that, if $\varepsilon\le10^{-3}$
and $\delta\le1/16$, the following holds for all classes $\cG$ and $\cH$ of
VC dimensions $d_G,d_H\ge1$ such that $\cG$ contains the whole space and is
closed under complements and $\cH$ contains a hypothesis and its negation.
For the zero-one loss, every learner that with probability at least
$1-\delta$ outputs a randomized predictor with multi-group excess risk at
most $\varepsilon$ uses at least $c\,(d_G+d_H+\log(1/\delta))/\varepsilon^{2}$
samples.
\end{restatement}

\paragraph{Proof outline.}
\Cref{lem:reduction} below shows that a multi-group learner is an agnostic
learner for $\cH$ with the same accuracy, and an agnostic learner with
accuracy $2\varepsilon$ for a class $\cC_h$ of VC dimension $d_G$.  The
classical lower bound for agnostic learning
\citep[Theorem~6.8]{shalev2014understanding} then applies; since our learners
output randomized predictors, we state and prove it in that form
(\cref{prop:lower}).  The proof of \cref{thm:main}(b) at the end of this
subsection combines the two.

\begin{proposition}[Agnostic lower bound for randomized predictors]
\label{prop:lower}
Let $\varepsilon\in(0,1/256]$ and $\delta\in(0,1/16]$, let the loss be the
zero-one loss, and let $\cC\subseteq\{-1,+1\}^{\cX}$ have VC dimension
$m\ge1$.  Every learner that, for every distribution, outputs with
probability at least $1-\delta$ a randomized predictor $Q$ with
$L(Q)-\inf_{c\in\cC}L(c)\le\varepsilon$ uses more than
$m/(2^{20}\varepsilon^2)$ samples and at least
$\log(1/\delta)/(2^{16}\varepsilon^2)$ samples.  This holds for learners with
unlimited computation.
\end{proposition}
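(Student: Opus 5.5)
We prove the two bounds separately: Assouad's lemma gives the dimension term $m/\varepsilon^2$, and a two-point testing argument gives the confidence term $\log(1/\delta)/\varepsilon^2$. The first step is to reduce randomized predictors to a form where the standard arguments apply. For the zero-one loss, $L(Q)=\E_X\bigl[Q_X(+1)\Pr(Y\ne+1\mid X)+Q_X(-1)\Pr(Y\ne-1\mid X)\bigr]$ is linear in $Q_X$. Under a distribution with $\Pr(Y=c^\ast(X)\mid X)=\tfrac12+\eta$, the excess over the Bayes-optimal $c^\ast$ is therefore $2\eta\,\E_X[Q_X(-c^\ast(X))]$. This is exactly the excess risk of the deterministic classifier $\sgn(Q_X(+1)-\tfrac12)$, up to replacing the indicator of a mistake by the fractional mass $Q_X(-c^\ast(X))\in[0,1]$. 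The fractional mass still satisfies the pointwise triangle inequality $Q_x(-a)+Q_x(a)=1$, which is all Assouad's and Le Cam's arguments need. So randomization never helps, and we can carry out the usual proofs with the fractional disagreement loss.

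\emph{Dimension term.} Let $x_1,\ldots,x_m$ be shattered by $\cC$, let $\cD_X$ be uniform on them, and for each $\sigma\in\{-1,+1\}^m$ let $\Pr(Y=\sigma_j\mid X=x_j)=\tfrac12+\eta$ with $\eta=c_1\varepsilon$. Some $c\in\cC$ realizes $\sigma$ on the points, so $\inf_{c\in\cC}L(c)$ is the Bayes risk, and the excess of $Q$ is $\frac{2\eta}{m}\sum_jQ_{x_j}(-\sigma_j)$. Neighbouring $\sigma,\sigma'$ that differ in coordinate $j$ induce $n$-sample laws whose KL divergence is of order $(n/m)\eta^2$, since only visits to $x_j$ carry information. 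Assouad's lemma then gives
\[
\max_\sigma\E_\sigma[\text{excess}]\ \ge\ \eta\bigl(1-\sqrt{Cn\eta^2/m}\bigr)
\]
up to constants. With $n\le m/(2^{20}\varepsilon^2)$, this expected excess exceeds a constant multiple of $\varepsilon$, say $4\varepsilon$.

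The main obstacle is converting this expectation bound into a bound on the probability of failure, while keeping the absolute constants consistent with $2^{20}$. The excess is bounded by $2\eta$, so if the failure probability is at most $\delta\le1/16$ the expectation is at most $\varepsilon+2\eta\delta$. Choosing $c_1$ so that $\eta$ is a suitable multiple of $\varepsilon$ (for instance $\eta=8\varepsilon$) gives a contradiction. An alternative is a Markov-type argument applied to the fraction of coordinates on which the learner errs.

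\emph{Confidence term.} Take a single point $x_1$ that $\cC$ shatters, so both labels on $x_1$ are realizable. Consider the two distributions with $\Pr(Y=+1)=\tfrac12\pm\eta$, where $\eta=2\varepsilon$ is chosen so that any $Q$ with $Q_{x_1}(\mp1)\ge\tfrac12$ has excess at least $\eta>\varepsilon$ under the corresponding distribution. A successful learner therefore yields a test between the two distributions, $\mathbf 1\{Q_{x_1}(+1)\ge\tfrac12\}$, that errs with probability at most $\delta$ under each. The Bretagnolle--Huber inequality gives $2\delta\ge\tfrac12e^{-\mathrm{KL}}$ with $\mathrm{KL}\le Cn\eta^2$. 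Hence $n\ge\log(1/(4\delta))/(C\eta^2)$, and since $\delta\le1/16$ we have $\log(1/(4\delta))\ge\tfrac12\log(1/\delta)$, which gives the $\log(1/\delta)/(2^{16}\varepsilon^2)$ bound. None of these steps uses computational limits on the learner, so the statement holds for learners with unlimited computation.
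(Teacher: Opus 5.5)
Your proof is correct and follows the paper's route: the same hard family (uniform inputs on $m$ shattered points, labels biased toward hidden signs), Assouad's lemma for the $m/\varepsilon^2$ term, and a two-point test on a single shattered point for the $\log(1/\delta)/\varepsilon^2$ term. The differences are cosmetic. You apply Assouad directly to the fractional losses $Q_{x_j}(-\sigma_j)$ and conclude via $\E[\text{excess}]\le\varepsilon+2\eta\delta$, whereas the paper first rounds $Q$ to majority signs and bounds the expected Hamming error. You use KL, Pinsker and Bretagnolle--Huber where the paper uses Bhattacharyya-coefficient bounds, and your smaller bias ($\eta=8\varepsilon$, resp.\ $2\varepsilon$) yields better constants than the stated $2^{20}$ and $2^{16}$.
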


\paragraph{Proof outline.}
The hard distributions are supported on a set of $m$ points shattered by
$\cC$, with the uniform distribution on them, and the label at each point
equals a hidden sign with probability $(1+\beta)/2$, where
$\beta=128\varepsilon$.  Some member of $\cC$ predicts every hidden sign, so a
predictor with excess risk at most $\varepsilon$ puts probability at most
$1/128$ on the wrong sign on average over the points, and its majority label
recovers all but $m/64$ of the hidden signs.  Changing one hidden sign
changes the distribution of $n$ samples by little unless
$n\gtrsim m/\varepsilon^2$, and Assouad's lemma
\citep{assouad1983deux,yu1997assouad} turns this into a lower bound on the
expected number of wrongly recovered signs, a contradiction.  On a single
shattered point, two Bernoulli distributions with means $(1\pm\beta)/2$
cannot be distinguished with error $\delta$ from fewer than order
$\log(1/\delta)/\varepsilon^2$ samples.  The full proof is in
\cref{app:lower}.

\begin{lemma}[Reductions to agnostic learning]
\label{lem:reduction}
Let $Q$ be a randomized predictor.
\begin{enumerate}[label=(\alph*),leftmargin=2em]
\item If $\mathbf 1_{\cX}\in\cG$, then $L(Q)-\inf_{h\in\cH}L(h)\le\Err(Q)$.
\item If $\cG$ is closed under complements and $h,-h\in\cH$, let
  $\cC_h=\{x\mapsto h(x)(2g(x)-1):g\in\cG\}$.  Then $\VC(\cC_h)=d_G$ and
  $L(Q)-\inf_{c\in\cC_h}L(c)\le2\Err(Q)$.
\end{enumerate}
\end{lemma}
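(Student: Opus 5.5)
Part~(a) follows directly from the definition of $\Err$. Since $\mathbf 1_{\cX}\in\cG$, we have $L(Q)=L(Q;\mathbf 1_{\cX})$ and $L(h)=L(h;\mathbf 1_{\cX})$. Hence for every $h\in\cH$,
\[
L(Q)-L(h)=L(Q;\mathbf 1_{\cX})-L(h;\mathbf 1_{\cX})\le\Err(Q),
\]
and taking the supremum over $h$ gives the claim.

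For part~(b), we first compute the VC dimension. The map $g\mapsto c_g=h(2g-1)$ satisfies $c_g(x)=h(x)$ if $g(x)=1$ and $c_g(x)=-h(x)$ otherwise. On any finite set $S$, the labeling $(c_g(x))_{x\in S}$ is the labeling $(g(x))_{x\in S}$ composed with the fixed pointwise bijection $b\mapsto h(x)(2b-1)$. So $\cC_h$ realizes exactly as many labelings on $S$ as $\cG$ does, and $S$ is shattered by $\cC_h$ if and only if it is shattered by $\cG$. Hence $\VC(\cC_h)=d_G$.

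For the risk bound, fix $g\in\cG$. The loss of $c_g$ splits over $g$ and its complement $\bar g=1-g$:
\[
L(c_g)=L(h;g)+L(-h;\bar g).
\]
Likewise, since $g(X)+\bar g(X)=1$, the randomized predictor's loss splits as $L(Q)=L(Q;g)+L(Q;\bar g)$. Closure under complements gives $\bar g\in\cG$, and we assumed $-h\in\cH$. Therefore
\[
L(Q)-L(c_g)=\bigl(L(Q;g)-L(h;g)\bigr)+\bigl(L(Q;\bar g)-L(-h;\bar g)\bigr)\le2\Err(Q),
\]
and taking the infimum over $c_g\in\cC_h$ finishes the proof.

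There is no real obstacle here. The only points to check are that the decomposition of $L(Q)$ uses linearity of $L(Q;\cdot)$ in the group indicator, and that each of the two terms is indeed a valid (group, hypothesis) pair in the supremum defining $\Err(Q)$.
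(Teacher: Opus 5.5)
Your proposal is correct and follows the paper's proof essentially line for line. Part~(a) is the definition of $\Err(Q)$ with the group $\mathbf 1_{\cX}$. In part~(b), the pointwise bijection between labelings gives $\VC(\cC_h)=d_G$, and splitting $L(Q)-L(c_g)$ over $g$ and $\bar g\in\cG$ with comparators $h$ and $-h$ gives the factor $2$.
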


\begin{proof}
Part~(a) is the definition of $\Err(Q)$ with the group $\mathbf 1_{\cX}$.  For
part~(b), on points $x_1,\ldots,x_n$ the labels of $h(2g-1)$ are
$h(x_i)(2g(x_i)-1)$, which determine and are determined by the labels of
$g$, so $\cC_h$ and $\cG$ shatter the same sets.  The member $c_g=h(2g-1)$
agrees with $h$ on $g$ and with $-h$ on its complement $\bar g\in\cG$, so
\[
  L(Q)-L(c_g)=\bigl(L(Q;g)-L(h;g)\bigr)+\bigl(L(Q;\bar g)-L(-h;\bar g)\bigr)
  \le2\Err(Q). \qedhere
\]
\end{proof}

\begin{proof}[Proof of \cref{thm:main}(b)]
Consider a learner as in part~(b) that uses $n$ samples.
By \cref{lem:reduction}(a), with probability at least $1-\delta$ its output
has excess risk at most $\varepsilon$ with respect to $\cH$, so by
\cref{prop:lower} with $\cC=\cH$, $n>d_H/(2^{20}\varepsilon^2)$ and
$n\ge\log(1/\delta)/(2^{16}\varepsilon^2)$.  By \cref{lem:reduction}(b), its
output has excess risk at most $2\varepsilon\le2\cdot10^{-3}<1/256$ with respect to
$\cC_h$, a class of VC dimension $d_G$, so by \cref{prop:lower},
$n>d_G/(2^{22}\varepsilon^2)$.  Hence
$n\ge(d_G+d_H+\log(1/\delta))/(3\cdot2^{22}\varepsilon^2)$, so part~(b) holds
with $c=1/(3\cdot2^{22})$.
\end{proof}

\section{Making the predictor deterministic}
\label{sec:det}

This section makes the output deterministic when $\cX$ carries a computable
order (\cref{rem:order}), with no further oracle calls and the same sample
complexity (\cref{thm:det}).  It adapts the rounding analysis of
\citet{okoroafor2026sample}, which follows \citet{noarov2026optimal}.
\Cref{alg:det} trains the randomized predictor of \cref{sec:upper}, labels
the inputs of large probability whose better label is clear, and fixes the
remaining random choices once per cell of a partition of the inputs.  The section ends with the proof of \cref{thm:main}(a).

\begin{algorithm}[t]
\caption{$\textsc{DetMultiGroup}(\varepsilon,\delta)$: deterministic
  multi-group learning}
\label{alg:det}
\begin{algorithmic}[1]
\Require accuracy $\varepsilon$; confidence $\delta$; weighted ERM oracle
  for $\Fp$; computable order on $\cX$
\State run $\textsc{OracleMultiGroup}(\varepsilon/2,\delta/4)$ and obtain
  $\Qbar$
\State draw $\ncf$ fresh examples; compute $O$, $\Dhat_x$, and $r_x$ with
  $\delta_0=\delta/8$; form $Q'$ from $\Qbar$ (\cref{sec:det-gap})
\State draw $\npt$ fresh inputs and build the partition $\cP$
  (\cref{sec:det-round})
\State draw one seed $\tau_R$ uniformly from $[0,1]$ for each cell $R\in\cP$
\State \Return the deterministic predictor $f$ of~\eqref{eq:round}
\end{algorithmic}
\end{algorithm}

The main result of the section is the following guarantee for the rounding
steps of \cref{alg:det}.

\begin{theorem}[Deterministic rounding]
\label{thm:det}
Suppose $\cX$ carries an order as in \cref{rem:order}, let $Q$ be a
randomized predictor, and let $f$ be the deterministic predictor that
steps~2--5 of \cref{alg:det} compute from $Q$, using a confidence sample of
size $\ncf$, a partition sample of size $\npt$, both independent of $Q$, and
a failure parameter $\delta_0\in(0,1)$.  With probability at least
$1-4\delta_0$,
\[
  \sup_{g\in\cG}\bigl\{L(f;g)-L(Q;g)\bigr\}
  \le C\sqrt{\Bigl(\frac{\log(e\ncf/\delta_0)}{\ncf}
  +\frac{\log(e\npt/\delta_0)}{\npt}\Bigr)\bigl(d_G+1+\log(2/\delta_0)\bigr)} .
\]
In particular, $\ncf=\npt=\wtO((d_G+\log(1/\delta_0))/\varepsilon^2)$
suffice for the right side to be at most $\varepsilon$.
\end{theorem}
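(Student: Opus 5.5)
The plan is to reduce the difference of group losses to a single scalar per input, split the input space into heavy atoms and light cells, and control the rounding error with independent per-cell seeds plus a cover of $\cG$. Write $\Delta(x)=\E[\ell(+1,Y)-\ell(-1,Y)\mid X=x]\in[-1,1]$. For any randomized predictor $P$,
\[
  L(P;g)=\E\bigl[g(X)\,\ell(-1,Y)\bigr]+\E\bigl[g(X)\,P_X(+1)\,\Delta(X)\bigr].
\]
Hence
\[
  L(f;g)-L(Q;g)=\E\Bigl[g(X)\,\Delta(X)\bigl(\mathbf 1\{f(X)=+1\}-Q_X(+1)\bigr)\Bigr].
\]
Only $\Delta$ matters, and I would split this difference as $[L(f;g)-L(Q';g)]+[L(Q';g)-L(Q;g)]$. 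Throughout I read the construction of \cref{sec:det-gap,sec:det-round} as follows.

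\begin{enumerate}
\item \emph{Heavy atoms.} $O$ is the set of inputs whose empirical frequency in the confidence sample exceeds a threshold of order $\log(e\ncf/\delta_0)/\ncf$. For each $x\in O$, $\Dhat_x$ is the empirical mean of $\ell(+1,Y)-\ell(-1,Y)$ over the copies of $x$, and $r_x$ is a Hoeffding radius of order $\sqrt{\log(e\ncf/\delta_0)/(\ncf\,p_x)}$.
\item \emph{Confident correction.} $Q'_x$ is the point mass on the better label when $x\in O$ and $|\Dhat_x|>r_x$, and $Q'_x=Q_x$ otherwise.
\item \emph{Partition.} The partition $\cP$ makes each heavy atom its own cell and cuts the rest of $\cX$, using empirical quantiles of the order from the partition sample, into intervals of probability $O(\log(e\npt/\delta_0)/\npt)$.
\item \emph{Rounding.} Set $f(x)=+1$ if and only if $\tau_R\le Q'_x(+1)$, where $R\ni x$.
\end{enumerate}

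The first term, $L(Q';g)-L(Q;g)$, should be nonpositive for every $g$. The heavy atoms are finitely many, at most of order $\ncf$, so a union bound with Hoeffding gives $|\Dhat_x-\Delta(x)|\le r_x$ for all of them with probability $1-\delta_0$. A standard multiplicative Chernoff bound also gives that the estimated frequencies are within constant factors of the true masses $p_x$, with probability $1-\delta_0$. On these events, whenever $|\Dhat_x|>r_x$ the sign of $\Dhat_x$ is the sign of $\Delta(x)$. So $Q'$ pointwise lowers the conditional loss, and since $g\ge0$, it cannot increase any group loss. At the uncertain heavy atoms, where $Q'=Q$, we record that $|\Delta(x)|\le2r_x$.

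For the rounding term, condition on $Q'$ and $\cP$. The difference then equals $\sum_{R\in\cP}\xi_R(g)$, where
\[
  \xi_R(g)=\E\Bigl[g(X)\,\Delta(X)\bigl(\mathbf 1\{\tau_R\le Q'_X(+1)\}-Q'_X(+1)\bigr)\mathbf 1_R(X)\Bigr].
\]
These are independent, mean-zero functions of the seeds $\tau_R$. The bounds on $\xi_R$ differ by cell type.
\begin{itemize}
\item For a heavy atom with a confident label, $\xi_R=0$.
\item For an uncertain heavy atom, $|\xi_R|\le p_x|\Delta(x)|\le2p_xr_x$, so $\xi_R^2\lesssim p_x\log(e\ncf/\delta_0)/\ncf$. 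Summed over atoms this is at most $\log(e\ncf/\delta_0)/\ncf$.
\item For a light cell, $|\xi_R|\le\Pr(R)$. Summed over cells, $\sum\Pr(R)^2\le\max_R\Pr(R)\lesssim\log(e\npt/\delta_0)/\npt$.
\end{itemize}
The probability bounds on the light cells hold uniformly with probability $1-\delta_0$, since order intervals have VC dimension $2$. Hoeffding's inequality for a fixed $g$ then gives deviation $\sqrt{(\text{sum of squares})\log(1/\zeta)}$.

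To get uniformity over $g\in\cG$, I take an $\eta$-cover of $\cG$ in $L_1(\cD_X)$ of size $(C/\eta)^{O(d_G)}$ (Haussler). Each $\xi_R$ is $1$-Lipschitz in $g$ under this metric summed over cells, because $|\sum_R\xi_R(g)-\sum_R\xi_R(g')|\le\E|g-g'|$. A union bound over the cover with $\eta$ equal to the target accuracy produces the factor $d_G+1+\log(2/\delta_0)$, up to the logarithm of $1/\eta$, which is absorbed into the stated logs.

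The main obstacle is the heavy atoms. A single seed is not averaged out there, so the argument depends on the radius $r_x$ being calibrated to $p_x$ so that $p_x^2\Delta(x)^2$ sums to $O(\log/\ncf)$. It also depends on the uniform (over atoms) control of $\Dhat_x$, which is exactly what the confidence sample supplies. Adding the failure probabilities of the $\Dhat$ estimates, the frequency estimates, the partition, and the seeds gives $4\delta_0$.
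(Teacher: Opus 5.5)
Your route is the paper's: the gap identity \eqref{eq:gap}, the confident correction $Q'$ at repeated inputs (which cannot raise any group risk), and one-seed-per-cell rounding whose error is governed by $V=\sum_R c_R^2\,\cD_X(R)^2$, where $c_R$ is your per-cell bound on $|\Delta|$. You derive the variance budget and the uniform rounding bound directly instead of citing \cref{lem:budget,lem:cell-tests}. The gap is the uniformity over $\cG$: it does not give the stated inequality. A single-scale $\eta$-cover with a union bound gives $C\sqrt{V(d_G\log(C/\eta)+\log(2/\delta_0))}+\eta$. Since $\eta$ can be at most of the order of the target, $\log(1/\eta)$ is of order $\log(\ncf+\npt)$. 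That factor multiplies $d_G$, and it is not absorbed into the stated logs. The stated bound has a single logarithm, coming from $V$, so yours is larger by $\sqrt{\log(\ncf+\npt)}$ whenever $d_G$ dominates, and no absolute $C$ covers that. The $\wtO$ consequence is unaffected.

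To get the stated form, chain instead of using one scale. Conditionally on the samples, each term of $\sum_R(\xi_R(g)-\xi_R(g'))$ is centered, depends only on $\tau_R$, and lies in an interval of length $2c_R\cD_X(R)\,\E[|g-g'|\mid R]$. So the increment is sub-Gaussian with variance proxy at most $V\,\|g-g'\|_{L_2(\nu)}^2$, where $\nu=V^{-1}\sum_Rc_R^2\cD_X(R)^2\,\cD_X(\cdot\mid R)$. Haussler's packing bound in $L_2(\nu)$ and Dudley's integral then give $C\sqrt{V(d_G+1)}$, plus $C\sqrt{V\log(2/\delta_0)}$ for the tail. This is what \cref{lem:cell-tests} supplies, applied to the tests $2g-1$ and $1$.

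You also analyze a different construction from the one in \cref{sec:det-gap,sec:det-round}. There:
\begin{itemize}
\item $O$ is every input occurring in the confidence sample;
\item $r_x=\min\{1,\sqrt{\iota/N_x}\}$ uses the empirical count;
\item decisive means $|\Dhat_x|>2r_x$;
\item $\Prest$ consists of the sets $\{z_i\}\setminus O$ and the gaps between consecutive distinct partition points, minus $O$.
\end{itemize}
Your estimates transfer with two adjustments, which together are the content of \cref{lem:budget}.

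First, for atoms of $O$ with $\cD_X(\{x\})\lesssim\iota/\ncf$, bound $\cD_X(\{x\})^2r_x^2\le\cD_X(\{x\})^2\lesssim\cD_X(\{x\})\,\iota/\ncf$ directly. Use the lower-tail Chernoff bound $N_x\ge\ncf\,\cD_X(\{x\})/2$ only for the heavier atoms, of which there are at most $\ncf/\iota$.

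Second, a cell $\{z_i\}\setminus O$ is an atom the confidence sample missed. With high probability its mass is $O(\log(e\ncf/\delta_0)/\ncf)$, not $O(\log(e\npt/\delta_0)/\npt)$: the VC argument for intervals controls only the open gaps. The same omission occurs in your own reading, at a sampled atom outside $O$. So your light-cell term needs both summands, as in the stated bound.
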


\paragraph{Proof outline.}
\Cref{sec:det-gap} shows that rounding changes the group risks only through
the conditional loss gap, the difference between the conditional losses of
the two labels (\cref{eq:gap}).  At repeated inputs, it estimates this gap
from the confidence sample (\cref{lem:coverage}), predicts the better label
where the gap is confidently nonzero, which cannot increase any group risk,
and shows that the gap is small at the other repeated inputs.
\Cref{sec:det-round} builds the partition and recalls two results of
\citet{okoroafor2026sample}: rounding with one seed per cell has error
controlled by a VC class of tests and a weighted sum of squared cell
probabilities (\cref{lem:cell-tests}), and this sum is small for our
partition (\cref{lem:budget}).  The proof of \cref{thm:det} applies
\cref{lem:cell-tests} to the tests $2g-1$ and the constant $1$, with the
conditional loss gap as the weight; the weight is at most $2r_x$ at the
uncertain repeated inputs and at most $1/2$ elsewhere, so \cref{lem:budget}
bounds the error.  The full proof is in \cref{app:det}.

\subsection{The conditional loss gap}
\label{sec:det-gap}

Rounding changes the group risks only through the difference between the
conditional losses of the two labels, and at inputs of large probability
this difference can be estimated and acted on directly.

For $v\in\{-1,+1\}$, let $\lambda_v(x)=\E[\ell(v,Y)\mid X=x]$, and let
$\Delta(x)=\lambda_{+1}(x)-\lambda_{-1}(x)\in[-1,1]$ be the
\emph{conditional loss gap}.  For a randomized predictor $Q$, let
$m_Q(x)=Q_x(+1)-Q_x(-1)$.  The conditional loss of $Q$ at $x$ is
$\frac12(\lambda_{+1}(x)+\lambda_{-1}(x))+\frac12\Delta(x)m_Q(x)$, so for a
deterministic predictor $f$ and every group $g$,
\begin{equation}
  \label{eq:gap}
  L(f;g)-L(Q;g)=\E\Bigl[g(X)\,\frac{\Delta(X)}2\,\bigl(f(X)-m_Q(X)\bigr)\Bigr].
\end{equation}
The comparator does not appear, so it suffices to control the right side
uniformly over groups.

\paragraph{Confidence intervals at repeated inputs.}
Draw a confidence sample of $\ncf$ labeled examples, independent of all
other data, and let $O$ be the set of inputs that occur in it.  For $x\in O$,
let $N_x$ be its number of occurrences, let $\Dhat_x$ be the average of
$\ell(+1,Y_i)-\ell(-1,Y_i)$ over these occurrences, and let
$r_x=\min\{1,\sqrt{\iota/N_x}\}$ with $\iota=\log(8\ncf/\delta_0)$ for a
failure parameter $\delta_0\in(0,1)$.

\begin{lemma}[Coverage]
\label{lem:coverage}
With probability at least $1-\delta_0$, $|\Dhat_x-\Delta(x)|\le2r_x$ for
every $x\in O$.
\end{lemma}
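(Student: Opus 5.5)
Fix a distinct input $x\in O$ and condition on the multiset of inputs in the confidence sample. Given the inputs, the labels at the $N_x$ occurrences of $x$ are independent draws from the conditional distribution of $Y$ given $X=x$. The quantities $\ell(+1,Y_i)-\ell(-1,Y_i)$ therefore take values in $[-1,1]$ and have conditional mean $\Delta(x)$. Hoeffding's inequality, applied conditionally, gives
\[
  \Pr\Bigl\{|\Dhat_x-\Delta(x)|>\sqrt{2\iota/N_x}\Bigm| X_1,\ldots,X_{\ncf}\Bigr\}\le2e^{-\iota}=\frac{\delta_0}{4\ncf}.
\]
Since $\sqrt{2}\le2$, we have $\sqrt{2\iota/N_x}\le2\sqrt{\iota/N_x}$. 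When $r_x=1$, the bound $|\Dhat_x-\Delta(x)|\le2=2r_x$ holds trivially, because both $\Dhat_x$ and $\Delta(x)$ lie in $[-1,1]$. In either case the event $|\Dhat_x-\Delta(x)|\le2r_x$ fails with conditional probability at most $\delta_0/(4\ncf)$.

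Next I take a union bound over the distinct inputs in $O$. There are at most $\ncf$ of them, so the total conditional failure probability is at most $\delta_0/4\le\delta_0$. Integrating over the inputs then gives the unconditional statement.

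The only delicate step is the measure-theoretic one: making sense of conditioning on repeated inputs when $\cD_X$ has atoms. Inputs that occur only once are harmless, since there $r_x=1$ and the bound is trivial. Hoeffding is needed only at atoms, where the conditional law of $Y$ given $X=x$ is well defined, namely $\Pr\{Y=y,X=x\}/\Pr\{X=x\}$. To carry this out, I would write the sample as first drawing the inputs and then the labels independently from the conditional law at each input. This is equivalent to i.i.d.\ sampling from $\cD$, and under this description the argument above applies directly.
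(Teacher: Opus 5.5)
Your proof is correct and follows the paper's argument. You condition on the inputs, apply Hoeffding's inequality to the $[-1,1]$-valued label terms at each $x\in O$ with $r_x<1$ (the case $r_x=1$ is trivial), and take a union bound over the at most $\ncf$ points of $O$. The remaining differences are cosmetic: you apply Hoeffding at $\sqrt{2\iota/N_x}$ to get $\delta_0/(4\ncf)$ where the paper applies it directly at $2r_x$ to get $2e^{-2\iota}\le\delta_0/\ncf$, and your observation that repeated inputs are almost surely atoms makes explicit a point the paper leaves implicit.
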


\begin{proof}
Conditional on the inputs of the confidence sample, the terms averaged in
$\Dhat_x$ are independent, lie in $[-1,1]$, and have mean $\Delta(x)$.  If
$r_x=1$ there is nothing to prove; otherwise Hoeffding's inequality
\citep{hoeffding1963probability} gives
$\Pr\{|\Dhat_x-\Delta(x)|>2r_x\}\le2e^{-2\iota}\le\delta_0/\ncf$.  A union
bound over the at most $\ncf$ elements of $O$ completes the proof.
\end{proof}

Call $x\in O$ \emph{decisive} if $|\Dhat_x|>2r_x$, and let $\mathcal B\subseteq O$ be
the set of decisive inputs.  Given a randomized predictor $Q$, let $Q'$ agree
with $Q$ except that $Q'_x$ is the point mass at $-\sgn(\Dhat_x)$ for
$x\in\mathcal B$.  On the event of \cref{lem:coverage}, $\sgn(\Dhat_x)=\sgn(\Delta(x))$
for $x\in\mathcal B$, so $Q'$ predicts the label of smaller conditional loss there
and
\begin{equation}
  \label{eq:decisive}
  L(Q';g)\le L(Q;g)\qquad\text{for every }g\in\cG;
\end{equation}
and at every $x\in O\setminus\mathcal B$, $|\Delta(x)|\le|\Dhat_x|+2r_x\le4r_x$.

\subsection{Rounding with one seed per cell}
\label{sec:det-round}

The remaining randomness is fixed once per cell of a partition, at a cost
controlled by a VC class of tests and by the sum of squared cell masses
weighted by the loss gap.

\paragraph{The partition.}
Draw $\npt$ unlabeled inputs, independent of all other data, and sort their
distinct values $z_1<\cdots<z_r$ using the order of \cref{rem:order}.  The
cells of $\Prest$ are the sets $\{z_i\}\setminus O$, the open order
intervals $(z_i,z_{i+1})\setminus O$, and the two unbounded intervals before
$z_1$ and after $z_r$, minus $O$; empty sets are discarded.  Together with
the singletons $\{x\}$, $x\in O$, this gives a partition $\cP$ of $\cX$ with
at most $\ncf+2\npt+1$ cells.

\paragraph{Rounding.}
Draw independent seeds $\tau_R$, uniform on $[0,1]$, one for each cell
$R\in\cP$, and let
\begin{equation}
  \label{eq:round}
  f(x)=+1\ \text{ if }\ \tau_{R(x)}\le Q'_x(+1),\qquad f(x)=-1\ \text{ otherwise},
\end{equation}
where $R(x)$ is the cell containing $x$.  The seeds are drawn once, so $f$
is a deterministic function, and $f(x)$ has distribution $Q'_x$ over the
seeds.  The following two results are adapted from
\citet{okoroafor2026sample}; their proofs there apply verbatim with the
label set $\{-1,+1\}$ in place of a grid of prediction values.

\begin{lemma}[Rounding against a VC class of tests, adapted from
  \citealp{okoroafor2026sample}]
\label{lem:cell-tests}
Let $\cP$ be a finite measurable partition of $\cX$, let $Q'$ be a
randomized predictor, and let $f$ be given by~\eqref{eq:round}.  Let $\cT$ be
a class of functions $t:\cX\times\{-1,+1\}\to\{-1,+1\}$ of VC dimension
$d_{\cT}\ge1$, let $\varphi:\cX\times\{-1,+1\}\to\R$ be fixed, and for each
cell $R$ let $\varphi_R\ge|\varphi(x,v)|$ for all $x\in R$ and $v$.  Let
$V_\varphi=\sum_{R\in\cP}\varphi_R^2\,\cD_X(R)^2$.  With probability at
least $1-\delta_0$ over the seeds,
\[
  \sup_{t\in\cT}\Bigl|\E\bigl[t(X,f(X))\,\varphi(X,f(X))\bigr]
  -\E\Bigl[\sum_vQ'_X(v)\,t(X,v)\,\varphi(X,v)\Bigr]\Bigr|
  \le C\sqrt{V_\varphi\bigl(d_{\cT}+\log(2/\delta_0)\bigr)}.
\]
\end{lemma}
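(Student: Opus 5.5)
The plan is to treat the difference, for a fixed test $t$, as a sum of independent mean-zero contributions, one per cell, and then make the bound uniform over $\cT$ by chaining or symmetrization. For a cell $R$ and a test $t$, let
\[
  \Psi_R(t,\tau)=\E\bigl[\mathbf 1\{X\in R\}\,t(X,f_\tau(X))\,\varphi(X,f_\tau(X))\bigr],
\]
where $f_\tau$ is the rounding of \eqref{eq:round} with seed $\tau$ on $R$. The expectation over $X$ is the population expectation, and the seeds are the only randomness.

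Since $f(x)$ has law $Q'_x$ over $\tau_R$, taking the mean over $\tau_R$ gives
\[
  \E_{\tau_R}\Psi_R(t,\tau_R)=\E\Bigl[\mathbf 1\{X\in R\}\sum_vQ'_X(v)\,t(X,v)\,\varphi(X,v)\Bigr].
\]
The quantity inside the supremum is therefore $\sum_R\bigl(\Psi_R(t,\tau_R)-\E\Psi_R\bigr)$. This is a sum of independent terms, and each satisfies $|\Psi_R|\le\varphi_R\,\cD_X(R)$. For a single $t$, Hoeffding gives a deviation of order $\sqrt{V_\varphi\log(2/\delta_0)}$.

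For uniformity, I would symmetrize over the seeds. Take a ghost copy $\tau'$ and Rademacher signs $\sigma_R$. This reduces the problem to bounding
\[
  \E\sup_t\sum_R\sigma_R\bigl(\Psi_R(t,\tau_R)-\Psi_R(t,\tau'_R)\bigr).
\]
Each term is bounded by $2\varphi_R\cD_X(R)$, and a McDiarmid step over the seeds handles the $\log(2/\delta_0)$ term.

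The key step is to show that, with the seeds fixed, the vectors $(\Psi_R(t,\tau_R))_R$ indexed by $t\in\cT$ form a class of bounded complexity. The plan is to write $\Psi_R(t,\tau)$ as an average over $X$ of $t$ applied to the point $(X,f_\tau(X))$, weighted by $\varphi$. Each coordinate vector is then a mixture (an integral over $X$) of $\pm1$ patterns of $t$ on points, scaled by the weights $w_R=\varphi_R\cD_X(R)$.

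To bound the empirical Rademacher complexity, I would replace the integral over $X$ by an empirical approximation. I would sample $N$ points from $\cD_X$ restricted suitably and use Sauer--Shelah on the VC class $\cT$ over these points. Equivalently, I would use Dudley chaining with covering numbers $(1/\epsilon)^{O(d_{\cT})}$ in the weighted $L_2$ metric $\sum_R w_R^2(\cdot)^2$. Such covering bounds hold for convex hulls of VC patterns evaluated at fixed points, via Haussler's packing bound applied to the measure proportional to $\cD_X\,\varphi_R^2\cD_X(R)$. This yields $\sqrt{V_\varphi\,d_{\cT}}$.

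I expect this covering step to be the main obstacle, because the class is an averaged class rather than a class of $\pm1$ patterns. The remedy I would try first is to note that $\Psi_R(t,\cdot)-\E\Psi_R$ is linear in the function $(x,v)\mapsto t(x,v)\varphi(x,v)$. The $L_2$ metric on these functions under the measure $\mu(dx)\propto\sum_R\cD_X(R)\varphi_R^2\,\mathbf 1_R(x)\,\cD_X(dx)\otimes\mathrm{Unif}(v)$ then dominates the $V_\varphi$-weighted metric by Cauchy--Schwarz within each cell. Haussler's bound for $\cT$ under $\mu$ then supplies the covering numbers, and chaining finishes the argument with an absolute constant $C$.
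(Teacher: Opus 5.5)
Your proposal is correct. The paper does not prove this lemma itself. It imports it from \citet{okoroafor2026sample}, saying the proof there applies verbatim, so there is no in-paper argument to compare against. Your route gives exactly the stated bound: independent per-cell contributions, symmetrization over the seeds, chaining with Haussler's bound, and a bounded-differences step.

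The estimate that makes it work is the one in your last paragraph, and it is worth writing out. With the seeds fixed, the pushforward of $\cD_X(\cdot\mid R)$ under $x\mapsto(x,f(x))$ is at most twice $\cD_X(\cdot\mid R)\otimes\mathrm{Unif}\{-1,+1\}$. So by Jensen within each cell
\[
  \sum_{R\in\cP}\bigl(\Psi_R(t,\tau_R)-\Psi_R(t',\tau_R)\bigr)^2
  \le4\sum_{R\in\cP}\varphi_R^2\,\cD_X(R)^2\,
  \E_{\cD_X(\cdot\mid R)\otimes\mathrm{Unif}}\bigl|t-t'\bigr|^2
  =4V_\varphi\,\|t-t'\|_{L_2(\mu)}^2 ,
\]
and likewise for the ghost seeds. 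Hence:
\begin{itemize}
\item the symmetrized process has sub-Gaussian increments in the metric $4\sqrt{V_\varphi}\,\|t-t'\|_{L_2(\mu)}$ and diameter at most $4\sqrt{V_\varphi}$;
\item Dudley's integral with Haussler's bound for $\cT$ under $\mu$ gives $O(\sqrt{V_\varphi d_{\cT}})$;
\item the bounded differences $2\varphi_R\cD_X(R)$ of $\sup_t|\cdot|$ in each seed give the $\sqrt{V_\varphi\log(2/\delta_0)}$ term.
\end{itemize}
Chaining is genuinely needed here, not just convenient. A union bound over a single cover would cost a factor $\log|\cP|$, because the deterministic discretization error is controlled only by $\sum_R\varphi_R\cD_X(R)\,\E|t-t'|$. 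That quantity can be up to about $\sqrt{|\cP|V_\varphi}$ times the $L_2(\mu)$ distance.

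Three remarks on the write-up.
\begin{itemize}
\item The sentence about covering numbers of ``convex hulls of VC patterns'' is misleading as stated. Convex hulls of VC classes have metric entropy polynomial in $1/u$, not $O(d_{\cT}\log(1/u))$. What you actually use, correctly, is that one fixed averaging map is applied to every $t$. So the set of vectors $(\Psi_R(t,\tau_R))_R$ is a Lipschitz image of $\cT$ itself, not of its convex hull.
\item Put the metric on $t$ rather than on $t\varphi$, since the size of $\varphi$ is already carried by the weights $\varphi_R^2$ in $\mu$.
\item Drop the empirical-approximation alternative. It would introduce a $\log N$ factor in the number of sampled points, and the Rademacher signs are shared within cells, so Sauer--Shelah does not apply in the standard way.
\end{itemize}
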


\begin{lemma}[Variance budget, adapted from \citealp{noarov2026optimal,okoroafor2026sample}]
\label{lem:budget}
For the construction above, with probability at least $1-2\delta_0$,
\[
  V_{\cP}:=\sum_{x\in O}\cD_X(\{x\})^2r_x^2+\sum_{R\in\Prest}\cD_X(R)^2
  \le C\Bigl(\frac{\log(e\ncf/\delta_0)}{\ncf}
  +\frac{\log(e\npt/\delta_0)}{\npt}\Bigr).
\]
\end{lemma}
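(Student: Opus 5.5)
We bound the two sums in $V_{\cP}$ separately, each with probability at least $1-\delta_0$, and combine them by a union bound.

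\emph{Singletons in $O$.} The plan is to show that every $x\in O$ satisfies $\cD_X(\{x\})^2r_x^2\lesssim\cD_X(\{x\})\,\iota/\ncf$. Summing this over $O$ and using $\sum_{x}\cD_X(\{x\})\le1$ then gives $O(\log(e\ncf/\delta_0)/\ncf)$. Write $p_x=\cD_X(\{x\})$. If $p_x\le 8\iota/\ncf$, the bound follows directly from $r_x\le1$, since $p_x^2r_x^2\le p_x\cdot 8\iota/\ncf$. If $p_x>8\iota/\ncf$, we use a multiplicative Chernoff bound on $N_x\sim\mathrm{Bin}(\ncf,p_x)$ to get $N_x\ge\ncf p_x/2$ except with probability $e^{-\ncf p_x/8}\le e^{-\iota}\le\delta_0/(8\ncf)$. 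On that event $r_x^2\le\iota/N_x\le2\iota/(\ncf p_x)$, so $p_x^2r_x^2\le2p_x\iota/\ncf$. To make the union bound finite, we restrict it to atoms with $p_x>8\iota/\ncf$; there are at most $\ncf/(8\iota)\le\ncf$ of them. This gives failure probability at most $\delta_0$.

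\emph{Cells of $\Prest$.} Each cell of $\Prest$ is contained in one of two kinds of set. The first kind is an open order interval between consecutive sample points, or an unbounded end interval; these sets contain no partition-sample point. The second kind is a singleton $\{z_i\}$ for some partition-sample point $z_i$.

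For the interval cells we follow the $\varepsilon$-net argument. Order intervals have VC dimension at most $2$. With probability at least $1-\delta_0/2$, every order interval of mass greater than $\eta=C\log(e\npt/\delta_0)/\npt$ contains a sample point. Hence every interval cell has mass at most $\eta$. The interval cells are disjoint, so their squared masses sum to at most $\eta\sum_R\cD_X(R)\le\eta$.

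For the singleton cells $\{z_i\}\setminus O$ the issue is different. An atom $z$ of large mass $p$ survives in $\Prest$ only if it appears in the partition sample and is absent from the confidence sample. The second event has probability $(1-p)^{\ncf}\le e^{-p\ncf}$. We split the atoms at the threshold $p\le\iota'/\ncf$ with $\iota'=\log(8\ncf/\delta_0)$. Light atoms contribute at most $\sum p^2\le(\iota'/\ncf)\sum p\le\iota'/\ncf$. Each heavy atom is absent from the confidence sample with probability at most $\delta_0/(8\ncf)$, and there are at most $\ncf/\iota'$ heavy atoms, so a union bound shows that, except with probability $\delta_0/2$, all heavy atoms lie in $O$ and therefore contribute nothing to $\Prest$. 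Intersections of atoms with intervals need no separate treatment, because each interval cell already excludes the sample points $z_i$.

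Adding the three contributions gives $V_{\cP}\le C(\log(e\ncf/\delta_0)/\ncf+\log(e\npt/\delta_0)/\npt)$ with probability at least $1-2\delta_0$.

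The main obstacle is the singleton part: controlling $r_x^2$ for heavy atoms through a lower tail bound on $N_x$, and making the union bound uniform while only finitely many heavy atoms are involved. One also has to make sure that heavy atoms of $\cD_X$ not seen in the confidence sample cannot hide inside cells of $\Prest$. The net argument handles this for interval cells, and the threshold argument handles it for singleton cells.
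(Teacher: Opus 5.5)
Your argument is correct. The three pieces together give the stated bound: the Chernoff lower tail on $N_x$ for atoms heavier than $8\iota/\ncf$, the $\varepsilon$-net bound for order intervals (VC dimension $2$) applied to the gaps between consecutive partition points, and the observation that all atoms heavier than $\iota/\ncf$ land in $O$ with high probability. Your failure probabilities add up to at most $2\delta_0$.

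The paper does not prove this lemma itself; it defers to the works it cites. Your decomposition (singletons of $O$ weighted by $r_x^2$, sample-order gaps, and leftover singletons) is the argument the construction is evidently built for, so there is no different route to compare against.
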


We can now prove the upper bound of \cref{thm:main}.

\begin{proof}[Proof of \cref{thm:main}(a)]
\emph{Randomized output.}  This is \cref{thm:batch}.

\emph{Deterministic output.}  Run \cref{alg:det}.  By
\cref{thm:batch} with $\varepsilon/2$ and $\delta/4$, with probability at
least $1-\delta/4$, $\Err(\Qbar)\le\varepsilon/2$.  The confidence and
partition samples are independent of $\Qbar$, so by \cref{thm:det} with
$\delta_0=\delta/8$ and $\varepsilon/2$ in place of $\varepsilon$, with
probability at least $1-\delta/2$, $L(f;g)\le L(\Qbar;g)+\varepsilon/2$ for
every $g$.  Hence, with probability at least $1-3\delta/4$, for every $g$ and
$h$,
$L(f;g)-L(h;g)\le L(\Qbar;g)-L(h;g)+\varepsilon/2\le\varepsilon$.  The
samples number $\wtO((d+\log(1/\delta))/\varepsilon^2)$ in training and
$\ncf+\npt=\wtO((d_G+\log(1/\delta))/\varepsilon^2)$ in rounding, and only
training makes oracle calls.
\end{proof}

\begin{remark}[Representation of the output]
\label{rem:representation}
Every stored group and hypothesis is an oracle output, and hence a member of
$\cG$ or $\cH$, also with an approximate oracle.  The randomized predictor
$\Qbar$ is represented by the $n^2$ drawn pairs, $n$ for each of the $n$
rounds; a draw from $\Qbar_x$ takes $n$ evaluations of pairs, and the
distribution $\Qbar_x$ itself takes $n^2$.  The deterministic predictor $f$
is represented by the same pairs, the seeds, the sorted partition sample,
and the set $O$ with the signs of $\Dhat_x$ on $\mathcal B$; the last two
consist of $\ncf+\npt=\wtO((d_G+\log(1/\delta))/\varepsilon^2)$ sample
inputs.  To evaluate $f$ at an input $x$, one checks whether $x\in\mathcal
B$, locates the cell of $x$ by a lookup in $O$ and a binary search among the
sorted partition inputs, and compares the seed of the cell with
$\Qbar_x(+1)$; this takes $O(\log(\ncf+\npt))$ order comparisons and
equality tests and $n^2=\wtO((d+\log(1/\delta))^2/\varepsilon^4)$
evaluations of pairs.  The representation of $f$ is polynomial in $d$,
$1/\varepsilon$, and $\log(1/\delta)$, but, unlike that of $\Qbar$, it uses
the order on $\cX$ and stored inputs in addition to members of $\cG$ and
$\cH$, and its evaluation computes a distribution rather than a single draw.
\end{remark}

\section{Bounded membership}
\label{sec:membership}

This section proves \cref{thm:membership}: when every input belongs to at
most $k$ groups and these groups can be listed, an ERM oracle for $\cH$
suffices.  The algorithm finds the groups of non-negligible probability,
trains one hypothesis per such group, and aggregates the trained hypotheses
with sleeping experts, following \citet{tosh2022simple}.  Proofs are in
\cref{app:oracle,app:membership}.

\paragraph{Motivation.}
The algorithm of \cref{sec:upper} needs weighted ERM over pairs of a group
and a hypothesis.  Separate oracles for groups and for hypotheses are known
to find only local optima of this joint problem, by alternating maximization
\citep{globusharris2022algorithmic,deng2024groupwise}, and a global optimum
can be hard to find, as we now restate.

\begin{restatement}{\Cref{lem:oracle}}
There is a polynomial-time map that takes a graph on $n$ vertices to classes
$\cG$ and $\cH$ on a finite input space, each of VC dimension at most $n$
and each admitting weighted ERM in linear time, together with a query to a
weighted ERM oracle for $\Fp$ whose optimal value determines the size of the
maximum cut of the graph.  Consequently, unless $\mathrm{P}=\mathrm{NP}$, no
polynomial-time algorithm implements weighted ERM over $\Fp$ using separate
weighted ERM oracles for $\cG$ and $\cH$.
\end{restatement}

The proof shows that weighted ERM over the pair class can express any
bilinear objective $\sum_rw_rg(x_r)h(x_r)$ over pairs, and builds classes in
which the group selects a sign for each vertex of the graph and the
hypothesis is forced to copy these signs, so that the bilinear objective
counts the cut edges, while each class on its own decomposes across vertices
and admits linear-time ERM.  The full proof is in \cref{app:oracle}.

\Cref{lem:oracle} concerns exact optimization over pairs: it does not rule
out a multi-group learner that uses only separate oracles and never solves
this problem, and it is not a lower bound on the number of oracle calls of
learners with unlimited computation.  It does mean that learning with
separate oracles needs a different algorithm, and the rest of this section
gives one under a natural assumption on the groups.

\begin{definition}[Bounded membership]
\label{def:membership}
For an input $x$, let $I(x)=\{g\in\cG:g(x)=1\}$ be the set of groups
containing $x$.  The class $\cG$ has \emph{membership at most $k$} if
$|I(x)|\le k$ for every $x\in\cX$.  \emph{Memberships can be listed} if, for
each input $x$, the list $I(x)$ can be computed in time polynomial in $k$
and the length of $x$, with each listed group given by a name from which it
can be evaluated at any input.
\end{definition}

If $\cG$ shatters a set of $d_G\ge1$ points, then a fixed one of these points
belongs to groups realizing all $2^{d_G-1}$ labelings of the others, so
membership at most $k$ implies $d_G\le1+\log_2k$.

\begin{remark}[Listing memberships]
\label{rem:listing}
Listing is an access assumption about the groups, separate from the ERM
oracle for $\cH$, and it is not implied by a weighted ERM oracle for $\cG$.
It holds whenever the groups are described explicitly in terms of the input,
as for intersectional groups defined by agreement on a subset of $a$
attributes, where $I(x)$ consists of the $2^a$ subsets of the attribute
values of $x$, and for hierarchical groups, where $I(x)$ is the path of $x$
in the hierarchy.
\end{remark}

\begin{definition}[Group realizability, \citealp{ardeshir2026group}]
\label{def:group-realizable}
The distribution $\cD$ is \emph{group-realizable} with respect to $\cG$ and
$\cH$ if $Y=c^\star(X)$ almost surely for a function
$c^\star:\cX\to\{-1,+1\}$ such that, for every $g\in\cG$, some $h\in\cH$
satisfies $h(x)=c^\star(x)$ for $\cD_X$-almost every $x$ with $g(x)=1$.
\end{definition}

Group realizability does not require a single hypothesis to be correct
everywhere: different groups may have different correct hypotheses.  Under
the zero-one loss it gives $\inf_{h\in\cH}L(h;g)=0$ for every group, so the
multi-group excess risk of a predictor $f$ is
$\sup_g\Pr\{g(X)=1,\ f(X)\ne Y\}$.

\paragraph{The algorithms.}
Both algorithms find the groups that appear often in a reference sample,
train one hypothesis per such group with one ERM call over $\cH$, and
aggregate the trained hypotheses on fresh samples with a sleeping-experts
algorithm, in which the hypothesis $\widehat h_g$ is awake on the inputs in
$g$.  They differ only in the aggregation: sleeping Prod in the agnostic case
(\cref{alg:heavy-ag}) and a deterministic sleeping weighted majority under
group realizability (\cref{alg:heavy-re}).  Minimizing the loss over the
examples in a group is one weighted ERM call over $\cH$, since
$\ell(h(x),y)=\ell(-1,y)+\frac12(1+h(x))(\ell(+1,y)-\ell(-1,y))$ is affine in
$h(x)$: the total loss over the examples in $g$ is a constant minus a
weighted sum of the values of $h$.  With a $\xi$-approximate oracle, the
empirical conditional risk of $\widehat h_g$ is within $\xi/n_g$ of the
minimum, where $n_g$ is the number of examples in $g$, which is below the
statistical error in \cref{thm:heavy-agnostic}.  In the realizable case,
$\widehat h_g$ maximizes $\sum_iy_ih(x_i)$ over the examples $(x_i,y_i)$ in
$g$; a consistent hypothesis attains $n_g$ and every other hypothesis at most
$n_g-2$, so a $\xi$-approximate oracle with $\xi\le1$ still returns a
consistent hypothesis.  In both algorithms, $N_\gamma=|\cF_\gamma|$ and
$I_\gamma(x)=\{g\in\cF_\gamma:g(x)=1\}$ is the set of groups awake at $x$; the
aggregation makes no oracle calls, and the output is evaluated from the
stored hypotheses and weights.  Both outputs are represented by at most
$N_\gamma\le2k/\gamma$ hypotheses, all members of $\cH$, and the weights of
the $T$ rounds, which change at most $k$ times per round; no inputs or order
are stored.  At an input $x$, both use the list $I(x)$ and at most $k$
evaluations of hypotheses: a draw from the randomized output of
\cref{alg:heavy-ag} picks a round $t$ and draws from $\pi_t(x)$, and the
deterministic output of \cref{alg:heavy-re} takes $O(Tk)$ further
operations to compute the $T$ weighted majorities and their majority.

\begin{algorithm}[t]
\caption{$\textsc{HeavyGroupsAgnostic}(\gamma,m,T,\eta)$: agnostic
  multi-group learning with an ERM oracle for $\cH$}
\label{alg:heavy-ag}
\begin{algorithmic}[1]
\Require mass threshold $\gamma$; sample sizes $m,T$; learning rate
  $\eta\le1/2$; membership lists; weighted ERM oracle for $\cH$
\State draw $m$ inputs, list the groups containing each, and let $\cF_\gamma$
  be the groups listed at least $m\gamma/2$ times (\cref{lem:heavy})
\State draw $T$ labeled examples; for each $g\in\cF_\gamma$, let $\widehat h_g$
  minimize the loss over the examples in $g$
  \Comment{one oracle call per group}
\State set $w_{g,1}=1/N_\gamma$ for each $g\in\cF_\gamma$
\For{$t=1,\ldots,T$, on fresh labeled examples $(X_t,Y_t)$}
  \State let $\pi_t$ predict at $x$ the label $\widehat h_g(x)$ for $g$ drawn
    from $I_\gamma(x)$ with probability proportional to $w_{g,t}$ ($+1$ if
    $I_\gamma(x)=\emptyset$)
  \State observe $(X_t,Y_t)$, and let
    $\ell_t=\sum_{g\in I_\gamma(X_t)}w_{g,t}\,\ell(\widehat h_g(X_t),Y_t)\big/
    \sum_{g\in I_\gamma(X_t)}w_{g,t}$
  \State for $g\in I_\gamma(X_t)$, set
    $w_{g,t+1}=w_{g,t}\bigl(1+\eta(\ell_t-\ell(\widehat h_g(X_t),Y_t))\bigr)$;
    keep the other weights
\EndFor
\State \Return the randomized predictor $Q=\frac1T\sum_t\pi_t$
\end{algorithmic}
\end{algorithm}

\begin{algorithm}[t]
\caption{$\textsc{HeavyGroupsRealizable}(\gamma,m,T)$: group-realizable
  multi-group learning with an ERM oracle for $\cH$}
\label{alg:heavy-re}
\begin{algorithmic}[1]
\Require mass threshold $\gamma$; sample sizes $m,T$; membership lists;
  weighted ERM oracle for $\cH$
\State draw $m$ inputs, list the groups containing each, and let $\cF_\gamma$
  be the groups listed at least $m\gamma/2$ times (\cref{lem:heavy})
\State draw $T$ labeled examples; for each $g\in\cF_\gamma$, let $\widehat h_g$
  be consistent with the examples in $g$
  \Comment{one oracle call per group}
\State set $w_{g,1}=1$ for each $g\in\cF_\gamma$
\For{$t=1,\ldots,T$, on fresh labeled examples $(X_t,Y_t)$}
  \State let $\pi_t$ predict at $x$ the weighted majority label of
    $\{\widehat h_g(x):g\in I_\gamma(x)\}$ with weights $w_{g,t}$, ties broken toward
    $+1$ ($+1$ if $I_\gamma(x)=\emptyset$)
  \State observe $(X_t,Y_t)$; for $g\in I_\gamma(X_t)$: if $\widehat h_g(X_t)\ne Y_t$,
    set $w_{g,t+1}=w_{g,t}/2$; else if $\pi_t(X_t)\ne Y_t$, set
    $w_{g,t+1}=\frac32w_{g,t}$; else keep $w_{g,t}$
\EndFor
\State \Return the deterministic predictor
  $f(x)=\mathrm{Maj}\{\pi_1(x),\ldots,\pi_T(x)\}$, ties broken toward $+1$
\end{algorithmic}
\end{algorithm}

The main result of the section is \cref{thm:membership}, which we restate.

\begin{restatement}{\Cref{thm:membership}}
Let $\varepsilon,\delta\in(0,1)$, and suppose that every input belongs to
at most $k$ groups and that the groups containing a given input can be
listed (\cref{def:membership}).
\begin{enumerate}[label=(\alph*),leftmargin=2em]
\item \emph{Agnostic.} There is an algorithm (\cref{alg:heavy-ag}) that, given
  $\varepsilon$, $\delta$, and a weighted ERM oracle for $\cH$, uses
  $\wtO((d_H+\log(k/\delta))/\varepsilon^{2})$ samples and at most
  $2k/\varepsilon$ oracle calls, and with probability at least $1-\delta$
  outputs a randomized predictor with multi-group excess risk at most
  $\varepsilon$.
\item \emph{Group-realizable.} If $\ell$ is the zero-one loss and $\cD$ is
  group-realizable (\cref{def:group-realizable}), then there is an algorithm
  (\cref{alg:heavy-re}) that, given $\varepsilon$, $\delta$, and a weighted
  ERM oracle for $\cH$, uses
  $\wtO((d_H+\log(k/\delta))/\varepsilon)$ samples and at most
  $2k/\varepsilon$ oracle calls, and with probability at least $1-\delta$
  outputs a deterministic predictor with multi-group excess risk at most
  $\varepsilon$.
\end{enumerate}
\end{restatement}

\paragraph{Proof outline.}
\Cref{sec:heavy} shows that at most $k/\gamma$ groups have probability at
least $\gamma$ and that the first step of \cref{alg:heavy-ag,alg:heavy-re} finds all of them
(\cref{lem:heavy}).  \Cref{sec:aggregate} bounds, for every such group, the
excess risk of the agnostic output (\cref{thm:heavy-agnostic}) and the error
of the realizable output (\cref{thm:heavy-realizable}), for a general
threshold $\gamma$.  At the end of the section, we prove
\cref{thm:membership} by taking $\gamma=\varepsilon$: groups of smaller
probability satisfy the guarantee trivially.

\subsection{Finding the heavy groups}
\label{sec:heavy}

Bounded membership limits the number of groups of non-negligible probability,
and these groups appear often in a reference sample.  Recall that
$p_g=\Pr\{g(X)=1\}$.

\begin{lemma}[Heavy groups]
\label{lem:heavy}
If $\cG$ has membership at most $k$, then at most $k/\gamma$ groups have
$p_g\ge\gamma$.  Let $U_1,\ldots,U_m$ be independent draws from $\cD_X$, and
let $\cF_\gamma$ be the set of groups that belong to at least $m\gamma/2$ of
the lists $I(U_1),\ldots,I(U_m)$.  Then $|\cF_\gamma|\le2k/\gamma$ always,
and if $m\ge(8/\gamma)\log(k/(\gamma\zeta))$, then with probability at least
$1-\zeta$, $\cF_\gamma$ contains every group with $p_g\ge\gamma$.
\end{lemma}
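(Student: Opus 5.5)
The proof is a double-counting argument for the two deterministic claims, followed by a Chernoff bound and a union bound for the probabilistic one.

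\emph{Counting heavy groups.}  For any finite collection $S$ of distinct groups,
\[
\sum_{g\in S}p_g=\E\Bigl[\sum_{g\in S}g(X)\Bigr]\le\E\bigl[|I(X)|\bigr]\le k .
\]
If every $g\in S$ has $p_g\ge\gamma$, then $|S|\gamma\le k$, so $|S|\le k/\gamma$.  Since every finite subcollection of heavy groups obeys this bound, there are at most $k/\gamma$ groups with $p_g\ge\gamma$.

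\emph{The bound $|\cF_\gamma|\le2k/\gamma$.}  This is the empirical analogue and holds for every sample.  Count incidences between groups and lists: the total $\sum_{i\le m}|I(U_i)|$ is at most $mk$.  Each group in $\cF_\gamma$ accounts for at least $m\gamma/2$ of these incidences, so
\[
|\cF_\gamma|\cdot\frac{m\gamma}{2}\le mk ,
\]
which gives $|\cF_\gamma|\le2k/\gamma$.

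\emph{Coverage of the heavy groups.}  Fix a group $g$ with $p_g\ge\gamma$.  The number of lists containing $g$ is $\sum_ig(U_i)$, a binomial variable with mean $\mu=mp_g\ge m\gamma$.  The multiplicative Chernoff bound $\Pr\{S<\mu/2\}\le e^{-\mu/8}$ gives
\[
\Pr\bigl\{\textstyle\sum_ig(U_i)<m\gamma/2\bigr\}\le e^{-m\gamma/8} .
\]
By the first part there are at most $k/\gamma$ such groups, so a union bound makes the total failure probability at most $(k/\gamma)e^{-m\gamma/8}$.  This is at most $\zeta$ once $m\ge(8/\gamma)\log(k/(\gamma\zeta))$, which is exactly the stated threshold.

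The only step needing care is the union bound: it runs over the fixed, deterministic set of heavy groups, which is finite by the first part, and not over the possibly infinite class $\cG$.  No VC argument is needed.
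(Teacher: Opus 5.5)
Your proposal is correct and matches the paper's proof step for step: the bound $\sum_g p_g\le k$ over any finite collection of distinct groups, the incidence count against the at most $km$ list entries, and the Chernoff bound $e^{-m\gamma/8}$ followed by a union bound over the at most $k/\gamma$ heavy groups. Your remark that the union bound runs over the fixed finite set of heavy groups, not over all of $\cG$, is the same point the paper's argument uses.
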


\begin{proof}
For any finite set of distinct groups, $\sum_gp_g=\E\sum_gg(X)\le k$, which
gives the first claim.  The lists contain at most $km$ entries in total, so
at most $2k/\gamma$ groups appear in at least $m\gamma/2$ of them.  A group with
$p_g\ge\gamma$ appears in a binomial number of lists with mean
$mp_g\ge m\gamma$, which is below $m\gamma/2$ with probability at most
$e^{-m\gamma/8}$ by the Chernoff bound; a union bound over the at most
$k/\gamma$ such groups completes the proof.
\end{proof}

\subsection{Training and aggregating one hypothesis per group}
\label{sec:aggregate}

One ERM call over $\cH$ per heavy group gives a hypothesis that is nearly
optimal on that group, and a sleeping-experts aggregation of these
hypotheses competes with each of them on its own group.  For a group $g$, a
randomized predictor $Q$, and a hypothesis $h$, write
$R_g(Q)=L(Q;g)/p_g$ and $R_g(h)=L(h;g)/p_g$ for the conditional risks, and
$R_g^*=\inf_{h\in\cH}R_g(h)$.

\begin{theorem}[Agnostic guarantee]
\label{thm:heavy-agnostic}
Suppose $\cG$ has membership at most $k$ and memberships can be listed, and
run \cref{alg:heavy-ag} with
$m\ge(8/\gamma)\log(3k/(\gamma\delta))$ and $\eta\le1/2$.  Let
$\Phi=d_H\log(eT)+\log(24k/(\gamma\delta))$.  With probability at least
$1-\delta$, for every group with $p_g\ge\gamma$,
\[
  L(Q;g)-\inf_{h\in\cH}L(h;g)
  \le C\Bigl[\sqrt{\frac{p_g\Phi}T}+\frac{\Phi}{\eta T}+\eta\,p_g\Bigr].
\]
The algorithm makes at most $2k/\gamma$ weighted ERM calls over $\cH$.
\end{theorem}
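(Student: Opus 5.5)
Throughout I work on three events, each of probability at least $1-\delta/3$. The first is the event of \cref{lem:heavy} with $\zeta=\delta/3$. On it, every group with $p_g\ge\gamma$ lies in $\cF_\gamma$, and $N_\gamma\le2k/\gamma$, which also bounds the number of oracle calls. Fix such a group $g$ and split
\[
  L(Q;g)-\inf_hL(h;g)=\bigl(L(Q;g)-L(\widehat h_g;g)\bigr)+\bigl(L(\widehat h_g;g)-\inf_hL(h;g)\bigr).
\]
The first term is the aggregation cost and the second the training cost.

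\emph{Training cost.} The training sample is independent of $\cF_\gamma$, so conditional on $\cF_\gamma$ I can union-bound over the at most $2k/\gamma$ candidate groups. For each group I apply a relative (Bernstein-type) uniform deviation bound over $\cH$ for the losses $g(x)\ell(h(x),y)$, whose variance is at most $p_g$. The growth function is at most $\Pi_\cH(2T)$. The bound has the form $|\frac1T\sum_tg(X_t)\ell(h(X_t),Y_t)-L(h;g)|\le C(\sqrt{p_g\Phi/T}+\Phi/T)$. \Cref{lem:comparators} alone only gives $\sqrt{\Phi/T}$, so I need the localized version, which follows from the usual symmetrization argument combined with Bernstein's inequality. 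Since $\widehat h_g$ minimizes the empirical group loss, the training cost is $O(\sqrt{p_g\Phi/T}+\Phi/T)$.

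\emph{Aggregation cost.} This is the main obstacle. I plan to use the second-order sleeping Prod analysis of \citet{gaillard2014second}. Define the instantaneous regret $r_{g,t}=g(X_t)(\ell_t-\ell(\widehat h_g(X_t),Y_t))$, and note that $\E[\ell(\pi_t(X_t),Y_t)\mid X_t]=\ell_t$ on awake rounds. The weight $w_{g,t}$ changes only when $g$ is awake. Because $\ell_t$ is the weighted average over the awake experts, $\sum_gw_{g,t}r_{g,t}=0$, so the total weight is nonincreasing; the inequality $\log(1+x)\ge x-x^2$ then gives
\[
  \sum_tr_{g,t}\le\frac{\log N_\gamma}{\eta}+\eta\sum_tr_{g,t}^2\le\frac{\log(2k/\gamma)}{\eta}+\eta\sum_tg(X_t).
\]
Next, $\sum_tg(X_t)\le2Tp_g+O(\Phi)$ by Bernstein's inequality, with a union bound over the at most $2k/\gamma$ groups of $\cF_\gamma$; these groups and the $\widehat h_g$ are fixed before the aggregation sample. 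To pass from these realized quantities to the population ones, I use that $\pi_t$ is determined before round $t$. The martingale $\sum_t(g(X_t)\ell_t-L(\pi_t;g))$ has conditional variance at most $p_g$, so Freedman's inequality bounds it by $O(\sqrt{Tp_g\Phi}+\Phi)$. The same argument bounds $\sum_t(g(X_t)\ell(\widehat h_g(X_t),Y_t)-L(\widehat h_g;g))$; here Bernstein suffices, since $\widehat h_g$ is fixed. Dividing by $T$ and using linearity of $L(Q;g)=\frac1T\sum_tL(\pi_t;g)$, the aggregation cost is $O(\sqrt{p_g\Phi/T}+\Phi/(\eta T)+\eta p_g)$.

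Summing the two costs and absorbing $\Phi/T\le\Phi/(\eta T)$ (valid since $\eta\le1/2$) gives the stated bound. The remaining bookkeeping is to allocate $\delta/3$ to each of the three events: heavy-group detection, the training deviations, and the aggregation martingales together with $\sum_tg(X_t)$. The constant $24k/(\gamma\delta)$ inside $\Phi$ comes from the union bound over $2k/\gamma$ groups combined with these $\delta$ splits.
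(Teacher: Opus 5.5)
Your proposal is correct and follows the paper's proof closely. It uses the same split into training and aggregation costs, the same pathwise sleeping-Prod bound from conservation of the awake weight and $\log(1+v)\ge v-v^2$, and Freedman and Bernstein inequalities (with a union bound over $\cF_\gamma$, which is fixed before the labeled samples) to pass to population group losses.

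The one local difference is the training step. You use a variance-localized uniform deviation bound for $g\,\ell(h(\cdot),\cdot)$ under $\cD$. The paper instead conditions on the number $n_g$ of training examples in $g$, applies the ordinary VC bound to the conditional distribution on $g$, and uses a Chernoff bound $n_g\ge Tp_g/2$, treating small $Tp_g$ as trivial. Both give $O(\sqrt{p_g\Phi/T}+\Phi/T)$, and the paper's route needs only the standard non-localized bound.
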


\begin{theorem}[Group-realizable guarantee]
\label{thm:heavy-realizable}
Suppose, in addition, that the loss is the zero-one loss and $\cD$ is
group-realizable (\cref{def:group-realizable}), and run \cref{alg:heavy-re} with
$m\ge(8/\gamma)\log(3k/(\gamma\delta))$.  With
$\Phi=d_H\log(eT)+\log(24k/(\gamma\delta))$, with probability at least
$1-\delta$, for every group with $p_g\ge\gamma$,
\[
  \Pr\{g(X)=1,\ f(X)\ne Y\}\le\frac{C\Phi}{T}.
\]
The predictor $f$ is deterministic, and the algorithm makes at most
$2k/\gamma$ weighted ERM calls over $\cH$.
\end{theorem}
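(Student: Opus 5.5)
The plan is to condition on three good events, each of probability at least $1-\delta/3$, and then run a deterministic mistake-bound argument on each heavy group. The first event comes from \cref{lem:heavy} with $\zeta=\delta/3$: since $m\ge(8/\gamma)\log(3k/(\gamma\delta))$, with probability at least $1-\delta/3$ the set $\cF_\gamma$ contains every group with $p_g\ge\gamma$. Also $N_\gamma\le2k/\gamma$ always, which gives the bound on oracle calls. Since $\cF_\gamma$ depends only on the reference inputs, it is independent of the training and aggregation samples, and I treat it as a fixed family of at most $2k/\gamma$ groups. Every union bound below is over this family, which is where the term $\log(24k/(\gamma\delta))$ in $\Phi$ comes from.

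\emph{Training.} Fix $g\in\cF_\gamma$. By group realizability, the hypothesis $h_g$ that agrees with $c^\star$ on $g$ is consistent with the training examples in $g$, so the oracle returns a consistent $\widehat h_g$; by \cref{rem:approx} this holds even with slack $\xi\le1$. I will use the realizable $\epsilon$-net bound for the class of events $\{x:g(x)=1,\ h(x)\ne c^\star(x)\}$, $h\in\cH$, whose VC dimension is at most $d_H$. This gives that, with probability $1-\delta/(3N_\gamma)$, every $h$ with no training error on $g$ satisfies $e_g:=\Pr\{g(X)=1,\ \widehat h_g(X)\ne Y\}\le C\Phi/T$. This unconditional form avoids having to lower bound the number $n_g$ of training examples in $g$.

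\emph{Aggregation mistake bound.} This is a deterministic potential argument on the aggregation samples. Let $W_t$ be the total weight, which starts at $W_1=N_\gamma$. On a round where $\pi_t$ errs, the wrong awake experts carry at least half of the awake weight. They are halved and the correct awake experts are multiplied by $3/2$, so the awake weight does not increase; on other rounds no weight increases. Hence $W_t\le N_\gamma$ throughout. For a fixed $g$, let $m_g$ be the number of rounds where $g$ is awake and $\widehat h_g$ errs, and let $M_g$ be the number of rounds where $g$ is awake, $\widehat h_g$ is correct and $\pi_t$ errs. Then
\[
  w_{g,T+1}=2^{-m_g}(3/2)^{M_g}\le N_\gamma,
\]
so $M_g\le(\log N_\gamma+m_g\log2)/\log(3/2)$. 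Consequently the number of mistakes of $\pi_t$ on rounds with $X_t\in g$ is $O(m_g+\log N_\gamma)$.

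\emph{Concentration.} The indicators counted by $m_g$ are i.i.d.\ with mean $e_g$, so a multiplicative Chernoff bound gives $m_g\le2Te_g+O(\log(N_\gamma/\delta))=O(\Phi)$. For the online-to-batch step, $\pi_t$ is fixed before $(X_t,Y_t)$ is drawn. I will apply a multiplicative Freedman-type inequality to the predictable indicators $\mathbf 1\{g(X_t)=1,\ \pi_t(X_t)\ne Y_t\}$. It bounds
\[
  \sum_t\Pr\{g(X)=1,\ \pi_t(X)\ne Y\}\le2\sum_t\mathbf 1\{g(X_t)=1,\ \pi_t(X_t)\ne Y_t\}+O(\log(N_\gamma/\delta))=O(\Phi),
\]
and I take a union bound over the groups in $\cF_\gamma$. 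The last step uses that $f$ is a majority vote. If $f(x)\ne Y$ for $x\in g$, then at least half of the $\pi_t(x)$ differ from $Y$; ties are broken toward $+1$ consistently, so this still holds. Hence
\[
  \Pr\{g(X)=1,\ f(X)\ne Y\}\le\frac2T\sum_t\Pr\{g(X)=1,\ \pi_t(X)\ne Y\}\le\frac{C\Phi}{T}.
\]
I expect the main obstacle to be this multiplicative, fast-rate martingale concentration with predictable indicators: an additive Azuma bound would only give $\sqrt T$, so I will use Freedman's inequality, with variance bounded by the mean.
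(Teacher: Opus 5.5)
Your proposal is correct and follows the paper's proof essentially step for step. The steps are the heavy-group event from \cref{lem:heavy}, the realizable uniform-convergence bound over the whole training sample for the events $\{g=1,\ h\ne c^\star\}$, the potential argument giving $(3/2)^{M_g}2^{-m_g}=w_{g,T+1}\le N_\gamma$, a Chernoff bound on $m_g$, a multiplicative online-to-batch bound, and the pointwise factor $2$ from the majority vote. The only difference is cosmetic: the paper implements your Freedman-type step as the fixed-rate exponential supermartingale $\exp\bigl((1-e^{-1})\sum_{t\le s}e_t-\sum_{t\le s}V_t\bigr)$ with Markov's inequality. This avoids any peeling over the random variance sum and directly gives $\sum_te_t\le\bigl(\sum_tV_t+O(\log(N_\gamma/\delta))\bigr)/(1-e^{-1})$.
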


\paragraph{Proof outline.}
For \cref{thm:heavy-agnostic}, by \cref{lem:heavy}, every group with $p_g\ge\gamma$ is in $\cF_\gamma$.
Conditionally on $\cF_\gamma$, each $\widehat h_g$ is ERM on about $Tp_g$
samples from the conditional distribution on $g$, so
$R_g(\widehat h_g)-R_g^*\lesssim\sqrt{\Phi/(Tp_g)}$.  The weights of sleeping
Prod sum to one at all times, which gives a pathwise regret bound against
each $\widehat h_g$ on the rounds when $g$ is awake.  Freedman's inequality
\citep{freedman1975tail} converts it to a bound on the population group
losses of the predictors $\pi_t$.  Since the loss of a randomized
prediction is linear in its distribution, $L(Q;g)$ is the average of these
losses.

For \cref{thm:heavy-realizable}, each $\widehat h_g$ is consistent with the examples in $g$, so by the
realizable uniform convergence bound it errs on a set of probability
$O(\Phi/T)$ within $g$.  On the rounds when $g$ is awake,
$\widehat h_g$ makes $O(\Phi)$ mistakes, and the weighted majority makes at
most a constant times as many plus $O(\log N_\gamma)$.  A relative
online-to-batch bound, based on an exponential supermartingale, turns the
number of mistakes of the predictors $\pi_t$ on $g$ into their population error on $g$, and the majority vote at most doubles the average error.  The full proof is in \cref{app:membership}.

We now prove the main result of the section.

\begin{proof}[Proof of \cref{thm:membership}]
In both parts, run \cref{alg:heavy-ag} or \cref{alg:heavy-re} with $\gamma=\varepsilon$ and
$m=\lceil(8/\varepsilon)\log(3k/(\varepsilon\delta))\rceil$.  A group with
$p_g<\varepsilon$ needs no guarantee: since the loss lies in $[0,1]$, every
$h$ has $L(Q;g)-L(h;g)\le L(Q;g)\le p_g<\varepsilon$, and likewise for $f$.

\emph{Part~(a).}  Take $\eta=\varepsilon/(4C)$ and $T$ the smallest integer
with $T\ge C'\Phi/\varepsilon^2$, for a large enough constant $C'$ (recall
that $\Phi$ grows logarithmically with $T$).  For a group with
$p_g\ge\varepsilon$, each term of the bound of \cref{thm:heavy-agnostic} is
then at most $\varepsilon/3$.

\emph{Part~(b).}  Take $T$ the smallest integer with
$T\ge C'\Phi/\varepsilon$.  Under group realizability
$\inf_hL(h;g)=0$, so for a group with $p_g\ge\varepsilon$,
\cref{thm:heavy-realizable} gives
$L(f;g)-\inf_hL(h;g)=\Pr\{g(X)=1,\ f(X)\ne Y\}\le C\Phi/T\le\varepsilon$.

In both parts the algorithm uses $m+2T$ samples, which is of the stated
order, and at most $2k/\varepsilon$ oracle calls.  For a general threshold
$\gamma$, dividing the bounds of \cref{thm:heavy-agnostic,thm:heavy-realizable}
by $p_g\ge\gamma$ gives the conditional guarantees stated after
\cref{thm:membership}.
\end{proof}

\section{Discussion}
\label{sec:discussion}

Multi-group learning asks one predictor to compete with a possibly different
hypothesis on each of many overlapping groups.  We view it as a game against
a pair player and implement the player with the hybrid learner over a class
that encodes the four possible outputs of a pair, using random symbols.  A
voting rule keeps the player's payoff near zero, and a martingale argument
converts the online guarantee into a batch one.  The resulting algorithm
uses $\wtO((d_G+d_H+\log(1/\delta))/\varepsilon^2)$ samples, which is
optimal up to polylogarithmic factors among all learners, and its output can
be made deterministic on ordered input spaces.  The oracle it needs
optimizes over pairs, and such an oracle cannot in general be implemented
from separate oracles for the two classes.  When every input belongs to few
groups and these can be listed, however, only few groups matter, and an
ERM oracle for the hypothesis class suffices, with the optimal agnostic rate,
a dependence of only $1/\gamma$ on the probability of the groups, and a
deterministic fast-rate predictor under group realizability.

Several questions remain.  First, can the optimal rate be attained with only
separate ERM oracles for $\cG$ and $\cH$ under assumptions weaker than
bounded membership with listing, or is there an oracle lower bound for such
learners?  Second, for groups of probability at least $\gamma$,
\cref{cor:conditional} uses $\wtO((d+\log(1/\delta))/(\gamma^2\varepsilon^2))$
samples, whereas for finite classes the enumerating algorithm of
\citet{tosh2022simple} adapts to the size of each group and needs only a
factor $1/\gamma$; can such an adaptive guarantee be obtained for VC classes
with polynomially many oracle calls?  Third, our algorithm makes
$\wtO((d+\log(1/\delta))^2/\varepsilon^4)$ oracle calls and stores as many
pairs; can both be reduced?  Fourth, our deterministic predictor uses a computable order on $\cX$ and
stored sample inputs (\cref{rem:representation}); can a deterministic
predictor be obtained, and represented by members of $\cG$ and $\cH$ alone,
without them, and, under bounded membership, can the
agnostic predictor of \cref{thm:membership} be made deterministic?  Finally, in the group-realizable setting
of \citet{ardeshir2026group}, can the fast rate $1/(\gamma\varepsilon)$ be
attained oracle-efficiently for general VC classes of groups?

\paragraph{AI disclosure.}
The author realized, upon coming across the result of
\citet{buzaglo2026oracle}, that it implies oracle-efficient bounds for
multi-group learning.  ChatGPT and Claude were then used to help with the
writing and organization of the paper.  The author takes full
responsibility for its content.


\appendix

\section{Proofs for Section~\texorpdfstring{\ref*{sec:upper}}{3}}
\label{app:upper}

This appendix proves \cref{thm:online,thm:batch}.  Throughout, $C$ denotes an
absolute constant whose value may change between occurrences.

\begin{lemma}[Pair regret from the hybrid learner]
\label{lem:pair-regret}
Under the conditions of \cref{thm:online}, with probability at least
$1-3\zeta/4$,
\[
  \sup_{g\in\cG,\,h\in\cH}\sum_{t=1}^T\Bigl(\rho_t(g,h)
  -\E_{(g',h')\sim q_t}\rho_t(g',h')\Bigr)
  \le2\Reg\Bigl(T,6d,\frac\zeta4\Bigr)
  +C\sqrt{T\bigl(d\log(eT)+\log(1/\zeta)\bigr)} .
\]
\end{lemma}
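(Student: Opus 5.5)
Throughout, let $\Gamma_t$ be the round-$t$ payoff table, let $U_t=(X_t,A_t)$ and $Z_t=\Gamma_t(A_t)$, and let $\bF_t(u)=\E_{F\sim q_t}F(u)$.

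The plan is to start from \cref{thm:hybrid} applied to $\Fp$, with $d_{\Fp}\le6d$ by \cref{lem:pair-vc} and failure probability $\zeta/4$. With probability at least $1-\zeta/4$, every pair $(g,h)$ satisfies
\[
  \sum_tZ_t\bigl(F_{g,h}(U_t)-\bF_t(U_t)\bigr)\le\Reg(T,6d,\zeta/4).
\]
The hybrid setting applies: $U_t$ is drawn from $\cD_X\otimes\mathrm{Unif}(\cA)$ independently of the past, and $Z_t$ depends on $U_t$, the drawn pairs, $\Yhat_t$ and $Y_t$, but not on $F_t$. I then pass from $\Fp$-payoffs to pair payoffs using \cref{lem:encoding}. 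Conditionally on everything before $A_t$ is drawn (including $X_t,Y_t,\Yhat_t$), it gives
\[
  \E_{A_t}\bigl[Z_tF_{g,h}(X_t,A_t)\bigr]=\tfrac12\rho_t(g,h)-c_t,\qquad c_t=\tfrac14\sum_a\Gamma_t(a).
\]
Averaging the same identity over $(g',h')\sim q_t$ gives
\[
  \E_{A_t}\bigl[Z_t\bF_t(X_t,A_t)\bigr]=\tfrac12\E_{q_t}\rho_t-c_t.
\]
Here $q_t$ is fixed before round $t$, independent of $A_t$. The common term $c_t$ cancels in the difference.

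I decompose
\[
  \tfrac12\sum_t\bigl(\rho_t(g,h)-\E_{q_t}\rho_t\bigr)
  =\sum_tZ_t\bigl(F_{g,h}(U_t)-\bF_t(U_t)\bigr)-\sum_t\xi_t(g,h)+\sum_t\xi'_t.
\]
The two martingale difference sequences are
\[
  \xi_t(g,h)=Z_tF_{g,h}(U_t)-\E_{A_t}[\,\cdot\,],\qquad
  \xi'_t=Z_t\bF_t(U_t)-\E_{A_t}[\,\cdot\,],
\]
both with increments bounded by $2$. The term $\sum_t\xi'_t$ does not depend on the pair, so Azuma--Hoeffding bounds it by $C\sqrt{T\log(1/\zeta)}$ with probability $1-\zeta/4$.

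The main obstacle is $-\sum_t\xi_t(g,h)$, which must be bounded uniformly over all pairs, an infinite class. My approach is to condition on the whole sequence $(X_t)_t$ of inputs. The value of $\xi_t(g,h)$ depends on $(g,h)$ only through $(g(X_t),h(X_t))\in\cA$, so over the realized inputs there are at most $\Pi_\cG(T)\Pi_\cH(T)\le(eT)^{d}$ distinct sequences.

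The difficulty is that $X_t$ is revealed online while the labels and predictions are adaptive, so conditioning on all future inputs must not destroy the martingale structure. I would handle this by conditioning on the full input sequence and the learner's randomness, and then checking the properties I need. First, $A_t$ remains uniform and independent of the past given these. Second, $\Gamma_t$ remains determined before $A_t$; this holds because $Y_t$ and $\Yhat_t$ do not depend on $A_t$, and later inputs are independent draws. So for each fixed labeling pattern the sum is still a martingale in the filtration generated by the $A_s$, and Azuma applies per pattern. A union bound over the $(eT)^d$ patterns at level $\zeta/(4(eT)^d)$ gives $C\sqrt{T(d\log(eT)+\log(1/\zeta))}$. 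If this conditioning is problematic, the fallback is a symmetrization or chaining argument over the sequence class, which yields the same bound.

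Combining the three events gives total failure probability $3\zeta/4$. Multiplying by $2$ yields the claimed bound $2\Reg(T,6d,\zeta/4)+C\sqrt{T(d\log(eT)+\log(1/\zeta))}$.
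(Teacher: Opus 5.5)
Your proposal is correct and matches the paper's proof essentially step for step. It uses the same three-term decomposition from \cref{lem:encoding}, applies \cref{thm:hybrid} with $\VC(\Fp)\le6d$ to the first term and Azuma to the pair-independent martingale, and handles the uniform term by conditioning on $X_1,\ldots,X_T$, then applying Azuma with a union bound over the at most $(eT)^d$ labelings. Your extra conditioning on the learner's randomness is unnecessary, since the paper conditions only on the inputs; if you keep it, it must exclude the symbols $A_t$, as your check that $A_t$ stays uniform implicitly assumes.
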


\begin{proof}
Write $s=(g,h)$, $F_s=F_{g,h}$, and
\[
  \Omega_t(s)=Z_tF_s(U_t),\qquad
  \overline\Omega_t=\E_{s\sim q_t}\Omega_t(s)=Z_t\bF_t(U_t),\qquad
  S_t=\sum_{a\in\cA}\Gamma_t(a).
\]
The symbol $A_t$ is uniform and independent of $q_t$, $X_t$, and the table
$\Gamma_t$, so by \cref{lem:encoding}, writing $\E_{A_t}$ for the expectation
over $A_t$ alone,
$\E_{A_t}\Omega_t(s)=\frac12\rho_t(s)-\frac14S_t$ and
$\E_{A_t}\overline\Omega_t=\frac12\E_{s'\sim q_t}\rho_t(s')-\frac14S_t$.
Hence
\[
  \sum_t\bigl(\rho_t(s)-\E_{q_t}\rho_t\bigr)
  =2\sum_t\bigl(\Omega_t(s)-\overline\Omega_t\bigr)
  -2\sum_t\bigl(\Omega_t(s)-\E_{A_t}\Omega_t(s)\bigr)
  +2\sum_t\bigl(\overline\Omega_t-\E_{A_t}\overline\Omega_t\bigr).
\]
The inputs $U_t=(X_t,A_t)$ are independent draws from
$\cD_X\otimes\mathrm{Unif}(\cA)$, and the feedback $Z_t=\Gamma_t(A_t)$ does
not depend on the round draw, so by \cref{thm:hybrid} with
$\VC(\Fp)\le6d$ (\cref{lem:pair-vc}), with probability at least
$1-\zeta/4$, the first sum is at most $\Reg(T,6d,\zeta/4)$ uniformly over
$s$.  The last sum is a martingale with increments bounded by $2$ in
absolute value, so by Azuma's inequality \citep{azuma1967weighted} it is at
most $C\sqrt{T\log(1/\zeta)}$ with probability at least $1-\zeta/4$.

For the middle sum, condition on $X_1,\ldots,X_T$, which are independent of
the symbols.  The quantity $\Omega_t(s)-\E_{A_t}\Omega_t(s)$ depends on $s$
only through the labels $(g(X_t),h(X_t))_{t\le T}$, of which there are at most
$\Pi_{\cG}(T)\Pi_{\cH}(T)\le(eT)^d$ possibilities.  For each fixed labeling,
the terms are martingale differences bounded by $2$, because the table
$\Gamma_t$ is chosen without seeing $A_t$.  Azuma's inequality and a union
bound over the labelings give, with probability at least $1-\zeta/4$,
$\sup_s|\sum_t(\Omega_t(s)-\E_{A_t}\Omega_t(s))|
\le C\sqrt{T(d\log(eT)+\log(1/\zeta))}$.  Combining the three bounds proves
the lemma.
\end{proof}

\begin{lemma}[Voting keeps the mixture's payoff small]
\label{lem:voting}
Under the conditions of \cref{thm:online}, with probability at least
$1-\zeta/4$,
\[
  \sum_{t=1}^T\E_{(g,h)\sim q_t}\rho_t(g,h)
  \le CT\sqrt{\frac{\log(32T/\zeta)}{M}}+C\sqrt{T\log(16/\zeta)} .
\]
\end{lemma}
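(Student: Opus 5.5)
We need to bound $\sum_t\E_{(g,h)\sim q_t}\rho_t(g,h)$ by comparing, round by round, the mixture $q_t$ with the empirical distribution $\qhat_t$ of the $M$ drawn pairs, for which \eqref{eq:vote-zero} gives payoff exactly zero.

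\emph{Reduction to a single estimation error.} Fix a round $t$ and write $x=X_t$, $y=Y_t$. Set
\[
  \mu_t=\E_{q_t}\bigl[g(x)\bigr],\qquad
  \nu_t=\E_{q_t}\bigl[g(x)\ell(h(x),y)\bigr],
\]
and let $\hat\mu_t,\hat\nu_t$ be the same averages under $\qhat_t$. Conditioning on $\Yhat_t$, we have
\[
  \E_{q_t}\rho_t=\mu_t\,\ell(\Yhat_t,y)-\nu_t .
\]
First I replace $\ell(\Yhat_t,y)$ by its conditional expectation $\E_{\Yhat\sim\phat_t}\ell(\Yhat,y)$. The resulting differences form a martingale difference sequence with increments bounded by $1$: $\Yhat_t$ is drawn with fresh randomness after the drawn pairs and $X_t$, and $Y_t$ does not depend on the prediction's randomness. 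Azuma's inequality therefore costs $C\sqrt{T\log(16/\zeta)}$ with probability $1-\zeta/8$.

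After this replacement, \eqref{eq:vote-zero} reads $\hat\mu_t\,\E_{\phat_t}\ell(\Yhat,y)=\hat\nu_t$. Hence the round-$t$ term equals
\[
  (\mu_t-\hat\mu_t)\,\E_{\phat_t}\ell(\Yhat,y)-(\nu_t-\hat\nu_t),
\]
which in absolute value is at most $|\mu_t-\hat\mu_t|+|\nu_t-\hat\nu_t|$. When $N_t(x)=0$, we have $\hat\mu_t=\hat\nu_t=0$ and the identity still holds.

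\emph{Concentration of the drawn pairs at the round's input.} This is the main obstacle. The $M$ pairs are drawn i.i.d.\ from $q_t$ before $X_t$ and $Y_t$ are revealed, but $x$ is random and $y$ may depend on the pairs. Two observations handle this.

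First, $X_t$ is independent of the drawn pairs given the past. Conditional on $X_t=x$, the values $g^{(i)}(x)$ are i.i.d.\ with mean $\mu_t$, so Hoeffding's inequality gives $|\hat\mu_t-\mu_t|\le\sqrt{\log(32T/\zeta)/(2M)}$ with failure probability $\zeta/(16T)$.

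Second, to handle the dependence of $y$ on the pairs, I split according to the label: $\nu_t$ and $\hat\nu_t$ are obtained by evaluating the quantities
\[
  \nu_t(y')=\E_{q_t}\bigl[g(x)\ell(h(x),y')\bigr],\qquad y'\in\{-1,+1\},
\]
and their empirical counterparts at $y'=y$. Each of these is a fixed function of $x$ and $y'$ averaged over i.i.d.\ draws, so Hoeffding's inequality applies to each. A union bound over the two labels and the $T$ rounds yields, with probability $1-\zeta/8$, the bound $\sum_t(|\mu_t-\hat\mu_t|+|\nu_t-\hat\nu_t|)\le CT\sqrt{\log(32T/\zeta)/M}$.

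Combining the two events, which fail with total probability at most $\zeta/4$, proves the lemma.
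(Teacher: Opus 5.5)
Your proof is correct and is essentially the paper's argument. Azuma's inequality removes the randomness of the round's prediction. Then \eqref{eq:vote-zero} reduces the remainder to estimation errors of finitely many averages at $X_t$ that do not depend on $Y_t$. Hoeffding's inequality conditional on $X_t$, with a union bound over the two labels and the $T$ rounds, controls these. The only difference is the choice of averages: the paper concentrates $\E_{q_t}[g(X_t)\mathbf 1\{h(X_t)=z\}]$ for $z=\pm1$ and writes the comparator term as $\sum_z\E_{q_t}[g(X_t)\mathbf 1\{h(X_t)=z\}]\,\ell(z,Y_t)$, while you concentrate $\mu_t$ and $\nu_t(\pm1)$ directly.

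One small bookkeeping slip: your $3T$ Hoeffding events at failure probability $\zeta/(16T)$ each, together with the Azuma event, fail with probability up to $5\zeta/16$ rather than $\zeta/4$. Using $48T$ in place of $32T$ inside the logarithm fixes this, and the change is absorbed into $C$.
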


\begin{proof}
Fix a round and suppress $t$.  For $z\in\{-1,+1\}$, let
$\mu_z=\E_{(g,h)\sim q}[g(X)\mathbf 1\{h(X)=z\}]$ and
$\widehat\mu_z=\frac1M\sum_ig^{(i)}(X)\mathbf 1\{h^{(i)}(X)=z\}$, and let
$\mu=\mu_{+1}+\mu_{-1}$ and $\widehat\mu=\widehat\mu_{+1}+\widehat\mu_{-1}$.
The drawn pairs are independent of each other and of $X$ given the past, so
Hoeffding's inequality and a union bound over the two labels and the $T$
rounds give, with probability at least $1-\zeta/8$,
$|\mu_z-\widehat\mu_z|\le b=\sqrt{\log(32T/\zeta)/(2M)}$ for all rounds and
labels.  For a label $a$ and an outcome $y$, the payoff of predicting $a$
against the mixture is
$\E_{(g,h)\sim q}[g(X)(\ell(a,y)-\ell(h(X),y))]=\mu\,\ell(a,y)-\sum_z\mu_z\ell(z,y)$,
and replacing $\mu$ and $\mu_z$ by $\widehat\mu$ and $\widehat\mu_z$ changes
it by at most $|\mu-\widehat\mu|+\sum_z|\mu_z-\widehat\mu_z|\le4b$.
By~\eqref{eq:vote-zero}, the empirical version has expectation zero over
$\Yhat\sim\phat(\cdot\mid X)$, for every $y$.  Since $Y_t$ does not depend on
the prediction, the conditional expectation of $\E_{q_t}\rho_t$ over the
round's prediction is at most $4b$, and the deviations from these conditional
expectations form a martingale with increments bounded by $2$.  Azuma's
inequality bounds their sum by $C\sqrt{T\log(16/\zeta)}$ with probability at
least $1-\zeta/8$.  Summing over the rounds proves the lemma.
\end{proof}

\begin{proof}[Proof of \cref{thm:online}]
Since
$\sup_s\sum_t\rho_t(s)\le\sup_s\sum_t(\rho_t(s)-\E_{q_t}\rho_t)+\sum_t\E_{q_t}\rho_t$,
\cref{lem:pair-regret,lem:voting} give the bound with probability at least
$1-\zeta$.  With $M=T$, the last term is $C\sqrt{T\log(32T/\zeta)}$.  Each
round draws $M$ pairs and makes one round draw, each with one weighted ERM
call on at most $T$ points.
\end{proof}

\begin{proof}[Proof of \cref{thm:batch}]
Realize the prediction on round $t$ by a uniform $R_t$ independent of
everything else, as $\Yhat_t=\textsc{Draw}(\phat_t(\cdot\mid X_t),R_t)$ with
$\textsc{Draw}(p,r)=+1$ if $r\le p(+1)$ and $-1$ otherwise.  Let
$z_t(x,y,r)=\ell(\textsc{Draw}(\phat_t(\cdot\mid x),r),y)\in[0,1]$, which is
determined by the drawn pairs of round $t$, and hence by the first $t-1$
samples and randomness independent of the later ones.  Then
$g(X_t)z_t(W_t)=g(X_t)\ell(\Yhat_t,Y_t)$, and
$\E_W[g(X)z_t(W)]=L(\phat_t;g)$, the group loss of the randomized predictor
$\phat_t$.  By linearity, $L(\Qbar;g)=\frac1n\sum_tL(\phat_t;g)$.

Let $b_n=C\sqrt{(d\log(en)+\log(8/\delta))/n}$.  By \cref{lem:predictable}
with $\zeta=\delta/4$, with probability at least $1-\delta/4$, for every $g$,
$L(\Qbar;g)\le\frac1n\sum_tg(X_t)\ell(\Yhat_t,Y_t)+b_n$.  By
\cref{lem:comparators} with $\zeta=\delta/4$, with probability at least
$1-\delta/4$, for every $g$ and $h$,
$L(h;g)\ge\frac1n\sum_tg(X_t)\ell(h(X_t),Y_t)-b_n$.  Subtracting,
\[
  L(\Qbar;g)-L(h;g)\le\frac1n\sum_t\rho_t(g,h)+2b_n .
\]
The samples are independent, so the conditions of \cref{thm:online} hold;
with $\zeta=\delta/2$ and $M=T=n$, with probability at least $1-\delta/2$,
the online multi-group regret is at most
\[
  C\log^2(2n)\sqrt{n\bigl(d\log(en)+\log(8/\delta)\bigr)} .
\]
Dividing by $n$, adding $2b_n$, and taking the supremum over $g$ and $h$
proves the bound, with total failure probability at most
$\delta/4+\delta/4+\delta/2=\delta$.  Choosing $n$ as the smallest integer for which the bound is at
most $\varepsilon$ gives $n=\wtO((d+\log(1/\delta))/\varepsilon^2)$, and
there are $n(M+1)=n(n+1)$ oracle calls.
\end{proof}

\section{Proof for Section~\texorpdfstring{\ref*{sec:det}}{4}}
\label{app:det}

\begin{proof}[Proof of \cref{thm:det}]
Work on the events of \cref{lem:coverage}, of \cref{lem:budget}, and of
\cref{lem:cell-tests} (applied below), whose failure probabilities total at most
$4\delta_0$; we condition on $Q$ and on the confidence and partition samples
when applying \cref{lem:cell-tests}, so that $Q'$, $\cP$, and the tests are
fixed and the seeds are independent.

Let
$\varphi(x,v)=0$ for $x\in\mathcal B$ and $\varphi(x,v)=v\Delta(x)/2$ otherwise.
For $x\in\mathcal B$, both $Q'_x$ and $f(x)$ are the point mass at $-\sgn(\Dhat_x)$,
so $f(x)=m_{Q'}(x)$ and the integrand in~\eqref{eq:gap} vanishes there.
Hence, by~\eqref{eq:gap} with $Q'$ in place of $Q$,
\[
  L(f;g)-L(Q';g)
  =\E\bigl[g(X)\varphi(X,f(X))\bigr]-\E\Bigl[g(X)\sum_vQ'_X(v)\varphi(X,v)\Bigr].
\]
Let $\cT=\{(x,v)\mapsto2g(x)-1:g\in\cG\}\cup\{(x,v)\mapsto1\}$, which has VC
dimension at most $d_G+1$.  Since $g=\frac12((2g-1)+1)$, the right side is
the average of two rounding errors of \cref{lem:cell-tests} for tests in
$\cT$.  For the weights, $|\varphi(x,v)|\le|\Delta(x)|/2\le2r_x$ for
$x\in O\setminus\mathcal B$ by the discussion after~\eqref{eq:decisive},
$\varphi=0$ on $\mathcal B$, and $|\varphi|\le1/2$ elsewhere, so we may take
$\varphi_{\{x\}}=2r_x$ for $x\in O$ and $\varphi_R=1/2$ for $R\in\Prest$,
which gives $V_\varphi\le4V_{\cP}$.  \Cref{lem:cell-tests} and
\cref{lem:budget} then give, for every $g$,
\begin{align*}
  L(f;g)-L(Q';g)&\le C\sqrt{V_{\cP}\bigl(d_G+1+\log(2/\delta_0)\bigr)}\\
  &\le C\sqrt{\Bigl(\frac{\log(e\ncf/\delta_0)}{\ncf}
  +\frac{\log(e\npt/\delta_0)}{\npt}\Bigr)\bigl(d_G+1+\log(2/\delta_0)\bigr)} .
\end{align*}
Combining with~\eqref{eq:decisive} proves the bound.  For the sample sizes,
$\ncf=\npt=\lceil C'(d_G+1+\log(2/\delta_0))\varepsilon^{-2}
\log(e(d_G+1+\log(2/\delta_0))/(\varepsilon\delta_0))\rceil$ suffice for a
large enough absolute constant $C'$.
\end{proof}

\section{Proof of the lower bound}
\label{app:lower}

We use two standard facts about the Bhattacharyya coefficient
$\BC(P,P')=\sum_\omega\sqrt{P(\omega)P'(\omega)}$ of distributions on a
finite set.  First, $\TV(P,P')\le\sqrt{1-\BC(P,P')^2}$, since by the
Cauchy--Schwarz inequality
\[
  \sum_\omega|P-P'|=\sum_\omega\bigl|\sqrt P-\sqrt{P'}\bigr|
  \bigl(\sqrt P+\sqrt{P'}\bigr)\le\sqrt{(2-2\BC)(2+2\BC)} .
\]
Second, $\sum_\omega\min\{P(\omega),P'(\omega)\}\ge\frac12\BC(P,P')^2$, since
\[
  \BC^2=\Bigl(\sum_\omega\sqrt{\min\{P,P'\}\max\{P,P'\}}\Bigr)^2
  \le\sum_\omega\min\{P,P'\}\sum_\omega\max\{P,P'\}
  \le2\sum_\omega\min\{P,P'\} .
\]  The coefficient
is multiplicative over product distributions.

\begin{proof}[Proof of \cref{prop:lower}]
Let $\beta=128\varepsilon\le1/2$, and let $x_1,\ldots,x_m$ be shattered by
$\cC$.  For $\sigma\in\{-1,+1\}^m$, let $\cD_\sigma$ be the distribution
with inputs uniform on $\{x_1,\ldots,x_m\}$ and label equal to $\sigma_i$ at
$x_i$ with probability $(1+\beta)/2$.  Some $c_\sigma\in\cC$ has
$c_\sigma(x_i)=\sigma_i$ for all $i$, and it minimizes the risk under
$\cD_\sigma$.  For a randomized predictor $Q$, let $e_i=Q_{x_i}(-\sigma_i)$.
Under the zero-one loss, the conditional loss of $Q$ at $x_i$ exceeds that of
$\sigma_i$ by $\beta e_i$, so $L(Q)-L(c_\sigma)=\frac\beta m\sum_ie_i$.

\emph{Wrong signs are rare.}  On success,
$\sum_ie_i\le\varepsilon m/\beta=m/128$.  Let $\widehat\sigma_i$ be the
label to which $Q_{x_i}$ gives probability at least $1/2$, with ties broken
toward $+1$; if $\widehat\sigma_i\ne\sigma_i$ then $e_i\ge1/2$, so
$\widehat\sigma$ differs from $\sigma$ in at most $2\sum_ie_i\le m/64$
coordinates.

\emph{Assouad's lemma.}  For a single labeled sample from $\cD_\sigma$ and
from $\cD_{\sigma'}$, where $\sigma'$ differs from $\sigma$ in one coordinate,
$\BC=1-\frac1m\bigl(1-\sqrt{1-\beta^2}\bigr)\ge1-\beta^2/m$.  Suppose that
the learner uses $n\le m/(64\beta^2)$ samples.  For $\sigma$ and $\sigma'$ differing in one coordinate, the
distributions $P_\sigma^n$ and $P_{\sigma'}^n$ of the sample (together with
the learner's randomness) satisfy $\BC\ge(1-\beta^2/m)^n\ge e^{-2n\beta^2/m}$,
since $\beta^2/m\le1/4$, so
$\TV\le\sqrt{1-e^{-4n\beta^2/m}}\le2\beta\sqrt{n/m}\le1/4$.  By Assouad's
lemma \citep{assouad1983deux,yu1997assouad}, some $\sigma$ has
$\E_\sigma|\{i:\widehat\sigma_i\ne\sigma_i\}|\ge\frac m2(1-\frac14)=\frac{3m}8$.
On the other hand, the learner succeeds with probability at least
$1-\delta\ge15/16$, and then $\widehat\sigma$ has at most $m/64$ wrong
coordinates, so the expectation is at most $m/64+m/16<3m/8$, a contradiction.
Hence $n>m/(64\beta^2)=m/(2^{20}\varepsilon^2)$.

\emph{The term $\log(1/\delta)$.}  Apply the construction to the single
shattered point $x_1$ ($m=1$).  On success, $\beta e_1\le\varepsilon$, so
$e_1\le1/128$ and $\widehat\sigma_1=\sigma_1$.  The learner therefore yields
a test between $\sigma_1=+1$ and $\sigma_1=-1$ whose error under each is at
most $\delta$.  For any test, the two errors sum to at least
$\sum_\omega\min\{P_+^n(\omega),P_-^n(\omega)\}$, which is at least
$\frac12\BC(P_+^n,P_-^n)^2=\frac12(1-\beta^2)^n\ge\frac12e^{-2n\beta^2}$.
Hence $2\delta\ge\frac12e^{-2n\beta^2}$, so
$n\ge\log(1/(4\delta))/(2\beta^2)\ge\log(1/\delta)/(4\beta^2)$ for
$\delta\le1/16$, which is $\log(1/\delta)/(2^{16}\varepsilon^2)$.
\end{proof}

\section{Proof of \texorpdfstring{\cref{lem:oracle}}{Lemma 1.2}}
\label{app:oracle}

\begin{lemma}[Bilinear objectives]
\label{lem:bilinear-pair}
For weighted inputs $(x_r,w_r)$, the query to a weighted ERM oracle for
$\Fp$ with the points $(x_r,(1,+1))$ and $(x_r,(1,-1))$ of weights $w_r/2$
and $-w_r/2$ has objective $\sum_rw_r\,g(x_r)h(x_r)$ at $F_{g,h}$.
\end{lemma}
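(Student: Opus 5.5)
This is a direct computation from \cref{def:pair-class}; there is nothing to invoke beyond the definition of $F_{g,h}$. I will first evaluate $F_{g,h}$ at the two symbols $(1,+1)$ and $(1,-1)$ as a function of $(g(x),h(x))$. Then I will combine the two weighted terms for each $r$ and sum over $r$.

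\emph{Step 1: the case $g(x)=0$.} Here the pair's output $(0,h(x))$ matches neither symbol. So $F_{g,h}(x,(1,+1))=F_{g,h}(x,(1,-1))=-1$. The contribution of index $r$ is then $\frac{w_r}{2}(-1)-\frac{w_r}{2}(-1)=0$, which equals $w_rg(x_r)h(x_r)=0$.

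\emph{Step 2: the case $g(x)=1$.} Here exactly one of the two symbols matches, namely $(1,h(x))$. Hence $F_{g,h}(x,(1,b))=b\,h(x)$ for $b\in\{-1,+1\}$, which is the same identity noted after \cref{def:pair-class}. The contribution is $\frac{w_r}{2}h(x_r)-\frac{w_r}{2}(-h(x_r))=w_rh(x_r)=w_rg(x_r)h(x_r)$.

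Summing over $r$ gives the objective $\sum_rw_rg(x_r)h(x_r)$. The only point needing care is the sign bookkeeping in the $g(x)=0$ case, where the two $-1$ values must cancel. This cancellation is exactly why the weights are chosen as $\pm w_r/2$. I expect no real obstacle.
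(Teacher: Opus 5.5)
Your proof is correct and is essentially the paper's argument. The paper writes your two cases in closed form, as $F_{g,h}(x,(1,+1))=g(x)\bigl(1+h(x)\bigr)-1$ and $F_{g,h}(x,(1,-1))=g(x)\bigl(1-h(x)\bigr)-1$, and observes that half their difference is $g(x)h(x)$.
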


\begin{proof}
For $g(x)\in\{0,1\}$ and $h(x)\in\{-1,+1\}$,
\[
  F_{g,h}(x,(1,+1))=g(x)\bigl(1+h(x)\bigr)-1,\qquad
  F_{g,h}(x,(1,-1))=g(x)\bigl(1-h(x)\bigr)-1,
\]
so half their difference is $g(x)h(x)$.
\end{proof}

\begin{proof}[Proof of \cref{lem:oracle}]
Let the graph have vertices $[n]$ and edge set $E$, let $B_0=n+1$, and let
$a_{ii}=B_0$, $a_{ij}=-1$ if $\{i,j\}\in E$, and $a_{ij}=0$ otherwise.  For
$s,t\in\{-1,+1\}^n$, let $\Psi(s,t)=\sum_{i,j}a_{ij}s_it_j$.

For each ordered pair $(i,j)$ with $a_{ij}\ne0$, introduce two inputs
$x_{ij}^0$ and $x_{ij}^1$.  For $s\in\{-1,+1\}^n$, let
$g_s(x_{ij}^0)=(1+s_i)/2$ and $g_s(x_{ij}^1)=1$; for $t\in\{-1,+1\}^n$, let
$h_t(x_{ij}^0)=h_t(x_{ij}^1)=t_j$.  Let $\cG=\{g_s\}$ and $\cH=\{h_t\}$.
Each class has at most $2^n$ members, hence VC dimension at most $n$.  Give
$x_{ij}^0$ weight $2a_{ij}$ and $x_{ij}^1$ weight $-a_{ij}$; then
$2a_{ij}\frac{1+s_i}2t_j-a_{ij}t_j=a_{ij}s_it_j$, so the weighted bilinear
objective $\sum_rw_rg_s(x_r)h_t(x_r)$ equals $\Psi(s,t)$, and by
\cref{lem:bilinear-pair} it is the objective of a weighted ERM query for
$\Fp$ with $O(n+|E|)$ points.

Each class admits weighted ERM in linear time.  For $\cG$, the weighted sum
$\sum_rw_rg_s(x_r)$ is a constant plus a sum of terms, each depending on a
single coordinate $s_i$, so each $s_i$ can be chosen separately; the same
holds for $\cH$ and the coordinates $t_j$.

For fixed $s$, the coefficient of $t_j$ in $\Psi(s,t)$ is
$B_0s_j-\sum_{i:\{i,j\}\in E}s_i$, which has the sign of $s_j$ because $B_0$
exceeds every degree.  Hence $t=s$ maximizes $\Psi(s,\cdot)$, and
\[
  \max_{s,t}\Psi(s,t)=\max_s\Bigl(B_0n-2\sum_{\{i,j\}\in E}s_is_j\Bigr)
  =B_0n-2|E|+4\,\mathrm{MaxCut},
\]
since an edge has $s_is_j=-1$ exactly when it is cut.  The optimal value of
the query therefore determines the maximum cut, which is NP-hard to compute
\citep{karp1972reducibility}.  A polynomial-time algorithm that implements
weighted ERM over $\Fp$ from the separate oracles, which here run in linear
time, would compute it in polynomial time.
\end{proof}

\section{Proofs for Section~\texorpdfstring{\ref*{sec:membership}}{5}}
\label{app:membership}

Throughout, $\cF_\gamma$ is the set of retained groups, $N_\gamma=|\cF_\gamma|\le2k/\gamma$,
and we work on the event of \cref{lem:heavy} with $\zeta=\delta/3$, on which
every group with $p_g\ge\gamma$ belongs to $\cF_\gamma$.  The two labeled
samples are independent of the reference sample and of each other, so
conditionally on $\cF_\gamma$ the retained groups are fixed, and
conditionally on the first labeled sample the hypotheses $\widehat h_g$ are
fixed.  Let $\kappa_\delta=\log(12N_\gamma/\delta)$; since $N_\gamma\le2k/\gamma$,
$\kappa_\delta\le\Phi$.

\begin{proof}[Proof of \cref{thm:heavy-agnostic}]
\emph{Training.}  Fix $g\in\cF_\gamma$, and let $n_g$ be the number of examples
of the first labeled sample in $g$, a binomial variable with mean $Tp_g$.
Given $n_g$, these examples are independent draws from the conditional
distribution on $g$, and on any $n$ points the losses
$(x,y)\mapsto\ell(h(x),y)$, $h\in\cH$, take at most $\Pi_{\cH}(n)\le(en)^{d_H}$
value vectors.  By uniform convergence for this class
\citep{vapnik1971uniform} and the Chernoff bound $n_g\ge Tp_g/2$, each with
failure probability at most $\delta/(12N_\gamma)$, with probability at least
$1-\delta/(6N_\gamma)$,
$R_g(\widehat h_g)-R_g^*\le C\sqrt{\Phi/(Tp_g)}$; when $Tp_g<8\kappa_\delta$ the
Chernoff bound may fail, but then the right side exceeds $1$ for $C\ge3$,
and the bound holds trivially.  A union bound over $\cF_\gamma$ gives, with
probability at least $1-\delta/6$, for every $g\in\cF_\gamma$,
\begin{equation}
  \label{eq:heavy-train}
  p_g\bigl(R_g(\widehat h_g)-R_g^*\bigr)\le C\sqrt{p_g\Phi/T} .
\end{equation}

\emph{Aggregation: a pathwise bound.}  For $g\in I_\gamma(X_t)$, write
$\ell_{g,t}=\ell(\widehat h_g(X_t),Y_t)$.  Since $\ell_t$ is the weighted
average of $\ell_{g,t}$ over $I_\gamma(X_t)$, the update preserves
$\sum_{g\in I_\gamma(X_t)}w_{g,t}$, so the weights sum to $1$ at all times and each lies in
$(0,1]$.  Fix $g$.  Using $\log(1+v)\ge v-v^2$ for $|v|\le1/2$ and
$|\eta(\ell_t-\ell_{g,t})|\le1/2$,
\[
  0\ge\log w_{g,T+1}\ge-\log N_\gamma+\eta\sum_tg(X_t)(\ell_t-\ell_{g,t})
  -\eta^2\sum_tg(X_t),
\]
so
\begin{equation}
  \label{eq:heavy-path}
  \sum_{t=1}^Tg(X_t)\bigl(\ell_t-\ell_{g,t}\bigr)
  \le\frac{\log N_\gamma}\eta+\eta\sum_{t=1}^Tg(X_t).
\end{equation}

\emph{Aggregation: from the sample to the population.}  The predictor $\pi_t$
is fixed before round $t$, and $\ell_t$ is its expected loss at
$(X_t,Y_t)$, so the conditional expectation of $g(X_t)(\ell_t-\ell_{g,t})$
given the past is $p_g(R_g(\pi_t)-R_g(\widehat h_g))$.  The deviations are
martingale differences bounded by $2$ in absolute value, with conditional
variance at most $p_g$.  Freedman's inequality \citep{freedman1975tail} gives,
with probability at least $1-\delta/(6N_\gamma)$,
\[
  Tp_g\cdot\frac1T\sum_t\bigl(R_g(\pi_t)-R_g(\widehat h_g)\bigr)
  \le\sum_tg(X_t)(\ell_t-\ell_{g,t})+\sqrt{2Tp_g\kappa_\delta}+2\kappa_\delta,
\]
and Bernstein's inequality gives $\sum_tg(X_t)\le2Tp_g+2\kappa_\delta$ with
probability at least $1-\delta/(6N_\gamma)$.  The loss of a randomized prediction is
linear in its distribution, so $R_g(Q)=\frac1T\sum_tR_g(\pi_t)$.  Combining
with~\eqref{eq:heavy-path}, using $\log N_\gamma\le\Phi$, $\kappa_\delta\le\Phi$, and
$\eta\le1/2$, and taking a union bound over $\cF_\gamma$, with probability at
least $1-\delta/3$, for every $g\in\cF_\gamma$,
\[
  p_g\bigl(R_g(Q)-R_g(\widehat h_g)\bigr)
  \le C\Bigl[\frac{\Phi}{\eta T}+\eta p_g+\sqrt{\frac{p_g\Phi}T}\Bigr].
\]
Adding~\eqref{eq:heavy-train} and using
$L(Q;g)-\inf_hL(h;g)=p_g(R_g(Q)-R_g^*)$ proves the bound, with total failure
probability at most $\delta/3+\delta/6+\delta/3<\delta$ (each Freedman and
Bernstein event fails with probability at most $e^{-\kappa_\delta}\le\delta/(6N_\gamma)$).  The oracle calls
are the $N_\gamma\le2k/\gamma$ training calls.
\end{proof}

\begin{proof}[Proof of \cref{thm:heavy-realizable}]
\emph{Training.}  Fix $g\in\cF_\gamma$.  Some hypothesis agrees with
$c^\star$ almost surely on $g$, so with probability one some hypothesis is
consistent with the examples in $g$, and $\widehat h_g$ is consistent with them.
The functions $(x,y)\mapsto g(x)\mathbf 1\{h(x)\ne y\}$, $h\in\cH$, take at most
$(en)^{d_H}$ value vectors on $n$ points, so by the uniform convergence bound
for functions with zero empirical mean \citep{vapnik1971uniform}, with
probability at least $1-\delta/(6N_\gamma)$,
$\beta_g:=\Pr\{g(X)=1,\ \widehat h_g(X)\ne c^\star(X)\}\le C\Phi/T$.

\emph{Mistakes of the weighted majority.}  Let $W_t$ be the total weight.
When $\pi_t$ is correct, weights only decrease.  When $\pi_t$ errs, the awake
weight on the correct label is at most that on the wrong label; multiplying
the former by $3/2$ and halving the latter does not increase $W_t$.  Hence
$W_t\le N_\gamma$.  For fixed $g$, let $B_g$ be the number of rounds on which $g$ is
awake and $\widehat h_g$ errs, and $J_g$ the number on which $g$ is awake,
$\widehat h_g$ is correct, and $\pi_t$ errs.  Then
$(3/2)^{J_g}2^{-B_g}=w_{g,T+1}\le N_\gamma$, and the number $\nu_g$ of rounds on which
$g$ is awake and $\pi_t$ errs satisfies
\[
  \nu_g\le J_g+B_g\le\frac{\log N_\gamma}{\log(3/2)}
  +\Bigl(1+\frac{\log2}{\log(3/2)}\Bigr)B_g .
\]
Conditionally on the first sample, $B_g$ is a sum of $T$ independent
indicators with mean $\beta_g$ each, so by the Chernoff bound
$B_g\le2T\beta_g+3\kappa_\delta\le C\Phi$ with probability at least
$1-\delta/(6N_\gamma)$.

\emph{From mistakes to population error.}  Let
$V_t=g(X_t)\mathbf 1\{\pi_t(X_t)\ne Y_t\}\in\{0,1\}$ and
$e_t=\Pr\{g(X)=1,\ \pi_t(X)\ne c^\star(X)\}$, its conditional mean given the
past.  Since $\E[e^{-V_t}\mid\text{past}]=1-(1-e^{-1})e_t\le e^{-(1-e^{-1})e_t}$,
the process $\exp((1-e^{-1})\sum_{t\le s}e_t-\sum_{t\le s}V_t)$ is a nonnegative
supermartingale with initial value $1$, so by Markov's inequality, with
probability at least $1-\delta/(6N_\gamma)$,
$\sum_te_t\le(\nu_g+\kappa_\delta)/(1-e^{-1})\le C\Phi$.  Finally,
$\mathbf 1\{f(x)\ne c^\star(x)\}\le\frac2T\sum_t\mathbf 1\{\pi_t(x)\ne c^\star(x)\}$
pointwise, so $\Pr\{g(X)=1,\ f(X)\ne Y\}\le\frac2T\sum_te_t\le C\Phi/T$.  A
union bound over $\cF_\gamma$ and the three events gives total failure
probability at most $\delta/3+\delta/2<\delta$.
\end{proof}

\section{Proofs of the uniform convergence lemmas}
\label{app:uniform}

\begin{proof}[Proof of \cref{lem:predictable}]
Let $\cG_{1/n}\subseteq\cG$ be a $1/n$-cover of $\cG$ in $L_1(\cD_X)$, with
$\log|\cG_{1/n}|\le C(d_G+1)\log(en)$ \citep{haussler1995sphere}; it depends
only on $\cD_X$ and is used only in the analysis.  For a fixed
$g'\in\cG_{1/n}$, the terms $g'(X_t)z_t(W_t)-\E_W[g'(X)z_t(W)]$ are
martingale differences bounded by $1$, because $z_t$ is determined before
$W_t$ is drawn.  Azuma's inequality and a union bound over the cover give the
bound for every $g'\in\cG_{1/n}$ with probability at least $1-\zeta/2$.

For an arbitrary $g$, choose $g'\in\cG_{1/n}$ with
$\Pr\{g(X)\ne g'(X)\}\le1/n$.  Since $0\le z_t\le1$, the two empirical
averages differ by at most $\frac1n\sum_t\mathbf 1\{g(X_t)\ne g'(X_t)\}$, and
the two population averages by at most $1/n$.  On $m$ points, the
disagreement sets $\{g\ne g'\}$ have at most $\Pi_{\cG}(m)^2$ patterns, so by uniform
convergence for these sets \citep{vapnik1971uniform}, with probability at
least $1-\zeta/2$, every disagreement frequency exceeds its probability by at
most $C\sqrt{((d_G+1)\log(en)+\log(2/\zeta))/n}$.  Combining the two events
proves the lemma.
\end{proof}

\begin{proof}[Proof of \cref{lem:comparators}]
On any $m$ labeled points, the vector of values of
$(x,y)\mapsto g(x)\ell(h(x),y)$ is determined by the labelings of the inputs
by $g$ and $h$, so the class of these functions takes at most
$\Pi_{\cG}(m)\Pi_{\cH}(m)\le(em)^d$ value vectors on $m$ points, with values
in $[0,1]$.  The symmetrization argument behind the uniform convergence
theorem of \citet{vapnik1971uniform}, with a ghost sample of size $n$,
Hoeffding's inequality for the random swaps of matched points, and a union
bound over the at most $(2en)^d$ value vectors on $2n$ points, gives the
bound.
\end{proof}

\end{document}